\documentclass[12pt]{elsarticle}
\usepackage{amsmath}
\usepackage{amssymb}
\usepackage{mathrsfs}
\usepackage{amsthm}
\usepackage{bm}
\usepackage{booktabs}
\usepackage{color}
\usepackage{caption}
\usepackage{dashrule}
\usepackage{epstopdf}
\usepackage[right=1.5cm,left=1.5cm,top=1.0cm,bottom=1.5cm]{geometry}
\usepackage{graphicx}
\usepackage{lineno}
\usepackage{multirow}
\usepackage{newtxtext}    
\usepackage{newtxmath}    
\usepackage{microtype}
\usepackage{mathtools}
\usepackage{natbib}
\usepackage{setspace}
\usepackage{sectsty} 
\usepackage{threeparttable}
\usepackage{subfigure} 
\usepackage{tabularx} 
\usepackage{titlesec} 
\usepackage{enumitem} 
\usepackage{algorithm}
\usepackage{algorithmicx}
\usepackage{algpseudocode} 
\usepackage{amsmath, amssymb, amsthm, mathtools}
\usepackage{hyperref} 
\usepackage[capitalise]{cleveref} 

\makeatletter
\renewcommand\fs@ruled{%
	\def\@fs@cfont{\bfseries}%
	\let\@fs@capt\floatc@ruled
	\def\@fs@pre{\hrule height.8pt depth0pt \kern0.4em}
	\def\@fs@post{\kern0.4em \hrule height.8pt depth0pt \kern0.4em}
	\def\@fs@mid{\kern0.4em \hrule height.8pt depth0pt \kern0.6em}
	\let\@fs@iftopcapt\iftrue
}
\makeatother

\sectionfont{\fontsize{14}{16}\selectfont\bfseries} 
\subsectionfont{\fontsize{12}{14}\selectfont\bfseries} 
\titlespacing*{\section}{0pt}{12pt plus 4pt minus 2pt}{6pt plus 2pt minus 1pt}
\titlespacing*{\subsection}{0pt}{8pt plus 2pt minus 1pt}{4pt plus 1pt minus 1pt}

\hypersetup{colorlinks = true, linkcolor=blue, citecolor=blue, hypertexnames=false} 
\newtheorem{thm}{Theorem}
\newtheorem{lem}{Lemma}    
\newtheorem{pro}{Proposition} 
\newtheorem{rmk}{Remark} 
\newtheorem{definition}{Definition}
\newproof{pf}{Proof}
\DeclarePairedDelimiter{\abs}{\lvert}{\rvert}
\DeclarePairedDelimiter{\norm}{\lVert}{\rVert}

\modulolinenumbers[10]
\journal{Knowledge-Based Systems}
\biboptions{numbers} 
\begin{document}
	\captionsetup[figure]{labelfont={bf},labelformat={default},labelsep=period,name={Fig.}}
	\captionsetup[table]{labelfont={bf},labelformat={default},labelsep=period,name={Table}}
	\begin{spacing}{1.5}
		
\begin{frontmatter}
\title{\texorpdfstring{$\text{NS}^3\text{R}$}{NS3R}: A knowledge-driven nonconvex square-root regularization framework for joint group sparsity and structured smoothness}

\author[1]{Wanling Xie}\ead{xwl@cqu.edu.cn} 

\author{Hu Yang\textsuperscript{b*}} 
\cortext[correspondingauthor]{Corresponding author. \emph{Email: yh@cqu.edu.cn}}

\address[1]{College of Mathematics and Statistics, Hunan University of Technology and Business, Changsha, Hunan 410205, China} 
\address[2]{College of Mathematics and Statistics, Chongqing University, Chongqing, 401331, China}

\begin{abstract}
Structured sparse learning is a leading paradigm for high-dimensional data modeling, as it enables rigorous integration of domain-validated structural priors into the learning process. Among these priors, pre-defined grouped structure and piecewise smoothness are ubiquitous in real-world applications. However, most existing structured regularization methods are built on squared loss and convex penalties, rarely jointly model these two complementary structural priors, and suffer from two key limitations. Their optimal tuning parameters are sensitive to the unknown noise level $\sigma$ due to the squared loss. Meanwhile, convex penalties induce inherent estimation bias and severe over-shrinkage of large magnitudes. To address these issues, we propose a knowledge-driven Nonconvex Structured Smooth Adjustment for Square-Root Regularization (NS$^3$R), which unifies the $\sigma$-free square-root loss with a joint nonconvex $\ell_{2,q}$ penalty ($0 < q < 1$) to simultaneously encode group sparsity and structured smoothness priors. Theoretically, we establish rigorous non-asymptotic error bounds for the NS$^3$R estimator under a mild $(2,q)$-group restricted compatibility condition. Computationally, we develop an efficient double-screened IRLS with thresholding (DS-IRLS-th) algorithm, which adopts a nested double-screening strategy to reduce computational cost and improve numerical stability, with provable global convergence for the resulting nonconvex optimization problem. Extensive experiments on synthetic and real-world data demonstrate that NS$^3$R consistently outperforms state-of-the-art methods in finite-sample settings.
\end{abstract}

\begin{keyword}
Structured sparse learning\sep Square-root regularization\sep Nonconvex $\ell_{2,q}$ regularization \sep Structured smoothness \sep Global convergence
\end{keyword}
\end{frontmatter}

\section{Introduction}
\hspace{2em} High-dimensional sparse learning serves as a fundamental methodology in modern statistics and applied mathematics. The explosive growth of data dimensionality has created great challenges for statistical modeling and inference across a wide range of real-world scientific domains, including compressed sensing \citep{Donoho2006}, genome-wide association studies \citep{Tharmaratnam2016, Zeng2021}, signal recovering \citep{Hu2017, Xie2020, Xue2020}, multi-task learning \citep{Chang2024}, econometric forecasting \citep{Saputra2026} and so on. In these practical applications, the relationship between the observed data and the target signal of interest can be well approximated by the linear system
\[
y = X\beta^* + \varepsilon \tag{1}
\label{eq:opt1}
\]
In this system, $X \in \mathbb{R}^{n\times p}$ denotes the measurement matrix, $y \in \mathbb{R}^n$ corresponds to the noisy observation vector, $\beta^* \in \mathbb{R}^p$ is the unknown true signal vector we aim to recover, and $\varepsilon$ is the random noise. Throughout this work, we adopt the standard Gaussian noise assumption, where $\varepsilon$ has zero mean and unknown covariance $\sigma^2 I_n$. A core property of high-dimensional scenarios is that the dimension $p$ is far greater than the number of observations $n$ (i.e., $p \gg n$), which renders conventional model selection and estimation techniques ineffective.

Building on the sparse learning framework widely used for high-dimensional data, structured sparse learning has emerged as the standard approach to address the limitations of traditional model selection methods under the $p \gg n$ setting. It moves beyond the entry-wise sparsity induced by the Lasso-type procedures \citep{Tibshirani1996} by leveraging inherent structural priors of the sparse signal we aim to recover, the most common and practically valuable of which are pre-defined group sparsity and fused sparsity. For group sparsity, the most widely used method is the group Lasso \citep{Yuan2006}, which adopts a convex $\ell_{2,1}$ penalty to induce group-level sparsity by encouraging elements in the same group to be either all non-zero or all zero simultaneously. Despite its popularity, the convex penalty brings excessive shrinkage on large non-zero coefficients resulting in non-negligible estimation bias, and tends to produce overly dense solutions leading to sub-optimal recovery performance in practice. To address these limitations, a range of nonconvex group penalties have been proposed, including the widely used group SCAD \citep{Wang2007, Breheny2015, Lee2016}, group MCP \citep{Breheny2015, Lee2016}, and capped folded concave penalties \citep{Pan2021, Zhang2022}. Also, the nonconvex $\ell_{r,q}$ regularization has drawn extensive attention for its superior flexibility in modeling group sparsity \citep{Hu2017, Xue2020, Rakotomamonjy2011}, with its general form given by
\[
\|\beta\|_{r,q}:=
\begin{cases}
	\left(\sum_{j=1}^J \|\beta_{\mathcal{G}_j}\|_r^q\right)^{\frac{1}{q}}, & q > 0 \\
	\sum_{j=1}^J \|\beta_{\mathcal{G}_j}\|_r^0, & q = 0
\end{cases}
\]
where $\beta_{\mathcal{G}_j}$ denotes the $j$-th group of $\beta$ (see \autoref{sec:methodology} for more details about group structure). As shown in \citep{Hu2017, Xue2020, Rakotomamonjy2011}, the nonconvex $\ell_{r,q}$ penalties achieve a more favorable trade-off between sparsity induction and estimation bias reduction than convex alternatives, and empirical evidence from their simulation studies further shows that $\ell_{r,q}$ regularization with $r=2$ outperforms its general $\ell_{r,q}$ counterparts in most cases. 

Fused sparsity for ordered features is another widely used structural prior in high-dimensional statistical modeling. The foundational method for this prior is the fused Lasso \citep{Tibshirani2005}, which adds a total variation penalty to the standard Lasso for modeling piecewise constant signal patterns over ordered features. It can adaptively identify contiguous blocks of identical coefficients without pre-specified group information, and has been widely adopted in signal processing, genomic analysis, and image denoising. Despite its popularity, the fused Lasso has clear limitations that restrict its practical utility. Its core design is tailored to capture step-function patterns rather than smooth local variations, which not only incurs a tremendous computational cost as the dimension increases, but also renders it fundamentally unsuitable for signals with smoothly varying coefficients. To tackle these limitations, multiple extensions have been developed. The Smooth Lasso \citep{Hebiri2011} replaces the original $\ell_1$ fusion penalty with a squared $\ell_2$ term to better capture gradual smooth shifts in coefficients, whereas the Spline Lasso \citep{Guo2016} draws on cubic spline smoothing ideas to model more flexible coefficient trajectories. Subsequent works have further extended this framework in model development, theoretical analysis and applied extensions \citep{Ma2017, Wen2024}. Nevertheless, most existing structured regularization methods focus only on either group structures or ordered structures, and cannot simultaneously incorporate both, which are ubiquitous in real-world applications.

Nearly all the structured sparse learning methods discussed above are built on the squared loss function. A critical limitation of these penalized least-squares is the challenge of regularization parameter tuning, as the theoretically optimal penalty values are closely tied to the unknown noise level $\sigma$. Accurate estimation of $\sigma$ is notoriously difficult in high-dimensional regimes, introducing practical hurdles and a heavy tuning burden in practical use. To address this critical issue, the square-root regularization framework was originally proposed. The classic square-root Lasso \citep{Belloni2011} replaces the squared loss with a square-root loss function, enabling scale-independent parameter tuning as the optimal regularization parameter no longer depends on the unknown noise level. This $\sigma$-free tuning property has motivated a series of extensions to structured sparse settings. The group square-root Lasso \citep{Bunea2013} extends the framework to group-level feature selection; the square-root fused Lasso \citep{Jiang2021} adapts the framework to capture local constant signal patterns; the SSASR model \citep{Xie2020} combines the square-root loss with a convex $\ell_{2,1}$ penalty to jointly model group sparsity and piecewise smoothness. Recent studies have further advanced this line of work by refining statistical theory and developing more efficient numerical algorithms \citep{Zhao2021, Chu2022, Wang2023, Gai2025}. However, the vast majority of existing square-root regularized methods adopt convex penalty terms, leading to unavoidable estimation bias and sub-optimal recovery performance. To the best of our knowledge, no existing work has proposed a nonconvex square-root regularization framework that unifies group sparsity and structured smoothness priors. Meanwhile, this challenging nonconvex optimization problem lacks systematic theoretical guarantees and efficient convergent algorithms.

Motivated by the aforementioned limitations in existing literature, we develop a novel nonconvex structured smooth adjustment for square-root regularization (NS$^3$R) framework, which combines the square-root loss function with a joint nonconvex $\ell_{2,q}$ penalty for high-dimensional linear regression. It simultaneously encodes pre-defined group sparsity and structured smoothness priors in the sparse signal, while alleviating the common drawbacks of $\sigma$-dependent tuning and over-shrinkage from convex regularization. The main contributions of this work are summarized as follows.
\begin{itemize}
	\item Based on a joint $\ell_{2,q}$ penalty, we propose a unified nonconvex square-root regularization framework to fill a critical gap in structured sparse learning. This framework enables joint modeling of group sparsity and structured smoothness while retaining the inherent $\sigma$-free tuning property of square-root regularization. It further mitigates the inherent over-shrinkage and estimation bias of conventional convex regularizers, and thus achieves more reliable recovery of weak signals in high-dimensional regimes.
    \item For the NS$^3$R estimator, we derive non-asymptotic Oracle inequalities under one of the mildest regularity conditions in existing literature. These results provide reliable theoretical guarantees for its practical application, and ensure stable finite-sample performance in high-dimensional regimes.
	\item To solve the challenging nonconvex, nonsmooth, and non-Lipschitz optimization problem, we develop an efficient double-screened IRLS with thresholding (DS-IRLS-th) algorithm. The proposed algorithm not only comes with a rigorous proof of global convergence under mild initial value conditions, but also reduces computational cost and improves numerical stability via its nested double-screening design. Comprehensive numerical experiments on both synthetic and real-world data further validate its fast convergence and favorable performance in $p \gg n$ settings.
\end{itemize}

The rest of the paper is organized as follows. In \autoref{sec:methodology}, we present the nonconvex structured smooth adjustment for square-root regularization (NS$^3$R) model and analyze its key theoretical properties. In \autoref{sec:algorithm}, we propose a double-screened IRLS with thresholding (DS-IRLS-th) algorithm admitting provable global convergence. In \autoref{sec:numerical} and \autoref{sec:application}, we conduct simulation studies and an application to a microarray gene expression dataset to demonstrate the superior finite-sample performance of the proposed method compared with competing approaches. We conclude with final remarks in \autoref{sec:conclusion}. All technical proofs are deferred to the Appendix.

\section{Methodology}\label{sec:methodology}
\subsection{The \texorpdfstring{$\text{NS}^3\text{R}$}{NS3R} model}
\hspace{2em} In this paper, we focus on recovering a sparse signal $\beta \in \mathbb{R}^p$ with a pre-defined grouped structure, i.e.,
\[
\beta = \left( \underbrace{\beta_{1,1}, \dots, \beta_{1,T_1}}_{\beta_{\mathcal{G}_1}}, \underbrace{\beta_{2,1}, \dots, \beta_{2,T_2}}_{\beta_{\mathcal{G}_2}}, \dots, \underbrace{\beta_{J,1}, \dots, \beta_{J,T_J}}_{\beta_{\mathcal{G}_J}} \right)^T.
\]
Here $\beta_{\mathcal{G}_j}$ denotes the sub-vector of $\beta$ formed by entries indexed in $\mathcal{G}_j$, where $\mathcal{G}_j$ is an index subset of $\{1, \dots, p\}$. $J$ is the total number of groups, and $T_j = |\mathcal{G}_j|$ denotes the size of the $j$-th group.
For any group $\mathcal{G}_j$, $\beta_{\mathcal{G}_j} = 0$ means $\beta_{j,k} = 0$ for all $k \in \{1, \dots, T_j\}$, while $\beta_{\mathcal{G}_j} \neq 0$ implies that all entries in the $j$-th group are non-zero. We assume in this paper that there is no overlap between any two groups, i.e., $\mathcal{G}_i \cap \mathcal{G}_k = \emptyset$ for $i \neq k$.

In practice, sparse signals often exhibit a piecewise (or blockwise) smooth structure, where the true entries within each block typically exhibit inherent similarity. Thus, incorporating within-group smoothness constraints into model design is well justified. Meanwhile, conventional convex regularization methods suffer from critical drawbacks (e.g., inherent estimation bias and sub-optimal recovery performance) in terms of both theoretical guarantees and practical performance. To address these issues, we extend the group square-root regularization to a nonconvex fused framework. First, given $0 < q < 1$ and weights $\omega \in \mathbb{R}_+^J$ (i.e., $\omega_j \ge 0$ for all $1 \le j \le J$), for any vector $x \in \mathbb{R}^p$ with a pre-defined group structure, we denote the $\ell_{2,q}$ regularization term as
	\[
	\|x\|_{\omega-(2,q)} = \left( \sum_{j=1}^J \omega_j^q \|x_{\mathcal{G}_j}\|_2^q \right)^{1/q}
	\]
	
Then we propose a novel nonconvex structured smooth adjustment for square-root regularization ($\text{NS}^3\text{R}$) as follows
   \begin{equation}
	\hat{\beta} = \arg\min_{\beta \in \mathbb{R}^p} \frac{1}{\sqrt{n}} \| y - X\beta \|_2 + \frac{\lambda_1}{n} \|\beta\|_{\omega-(2,q)}^q + \frac{\lambda_2}{n} \|D\beta\|_{\omega-(2,q)}^q \tag{2}
	\label{eq:opt2}
   \end{equation}
where $\lambda_1$ and $\lambda_2$ are tuning parameters that control the penalization in a different manner. To simplify notation, we let the overall regularization strength be $\lambda = \lambda_1 + \lambda_2$ and define a regularization weight parameter $\alpha = \lambda_1/\lambda$ that balances the relative contribution of the two penalization terms. Among the two nonconvex regularizers, the first term induces group-level sparsity in the signals, while the second is a nonconvex Inner Group Total Variation (IGTV) regularizer that imposes structured smoothness within groups, which is defined as
	\begin{equation}
	\|D\beta\|_{\omega-(2,q)}^q = \sum_{j=1}^J \omega_j^q \|D_{\mathcal{G}_j} \beta_{\mathcal{G}_j}\|_2^q = \sum_{j=1}^J \omega_j^q \left[ \sum_{k=2}^{T_j} \left( \beta_{j,k} - \beta_{j,k-1} \right)^2 \right]^{q/2} \tag{3}
	\label{eq:opt3}
    \end{equation}
where $D$ is a block-diagonal matrix with the first-order difference matrices $D_{\mathcal{G}_j}$ for the $j$th group as each of its diagonal blocks, i.e., $D = diag(D_{\mathcal{G}_1}, D_{\mathcal{G}_2}, \dots, D_{\mathcal{G}_J}) \in \mathbb{R}^{(p-J) \times p}$ and
    \[
    D_{\mathcal{G}_j} = \begin{bmatrix}
    	1 & -1 & 0 & \cdots & 0 & 0 \\
    	0 & 1 & -1 & \cdots & 0 & 0 \\
    	\vdots & \vdots & \vdots & \ddots & \vdots & \vdots \\
    	0 & 0 & 0 & \cdots & 1 & -1
    \end{bmatrix}_{(T_j-1) \times T_j}
    \]
    
There are many reasonable choices for the weights $\omega \in \mathbb{R}_+^J$, and a widely adopted choice in the literature is $\omega_j = \sqrt{T_j}$. We therefore fix $\omega_j = \sqrt{T_j}$ throughout this paper. Then the problem in \eqref{eq:opt2} is equivalent to the following optimization problem
    \begin{equation}
    \hat{\beta} = \arg\min_{\beta \in \mathbb{R}^p} \frac{1}{\sqrt{n}} \|y - X\beta\|_2 + \frac{\lambda}{n} \left\{ \alpha \sum_{j=1}^J T_j^{q/2} \|\beta_{\mathcal{G}_j}\|_2^q + (1 - \alpha) \sum_{j=1}^J T_j^{q/2} \|D_{\mathcal{G}_j} \beta_{\mathcal{G}_j}\|_2^q \right\} \tag{4}
    \label{eq:opt4}
   \end{equation}	
Note that the objective function in \eqref{eq:opt4} uses the square-root residual sum of squared errors as its loss function, with a joint $\ell_{2,q}$ penalty that simultaneously encodes group sparsity and structured smoothness priors. In addition, problem \eqref{eq:opt4} serves as a general nonconvex extension of the SSASR model in \citep{Xie2020}. Compared to this convex counterpart, our nonconvex $\ell_{2,q}$ regularization effectively alleviates the inherent estimation bias from convex penalties, and yields stronger sparsity induction and better smoothness modeling, especially for recovering weak signals in high-dimensional settings.
\subsection{Theoretical Properties of \texorpdfstring{$\text{NS}^3\text{R}$}{NS3R}}
\nopagebreak[4]
\hspace{2em} To establish the theoretical properties of the $\text{NS}^3\text{R}$, we use the following notation and conventions throughout this paper. We assume $X$ is deterministic and normalized such that the diagonal entries of the Gram matrix $\Sigma := X^\top X / n$ are $1$. We use $\beta_{\mathcal{G}_j} = (\beta_{j,k} : k \in \{1, \dots, T_j\})$ to denote the sub-vector of $\beta$ formed by entries indexed in $\mathcal{G}_j$, $X_{\mathcal{G}_j}$ to denote the submatrix of $X$ formed by columns in $\mathcal{G}_j$, and $D_{\mathcal{G}_j}$ to denote the first-order difference matrix of the $j$-th group. In what follows, whenever it does not cause confusion, we abbreviate $\beta_{\mathcal{G}_j}$, $X_{\mathcal{G}_j}$, and $D_{\mathcal{G}_j}$ to $\beta_j$, $X_j$, and $D_j$, respectively. We define the group support set and support set of the true sparse signal $\beta^*$ as
\[
\mathbb{G} := \text{gsupp}(\beta^*) = \{1 \leq j \leq J : \beta_j^* \neq \mathbf{0}\},
\]
\[
\mathcal{S} := \bigcup_{j \in \mathbb{G}} \mathcal{G}_j,
\]
with the group sparsity index $s = |\mathbb{G}|$ and the sparsity index $s^* = |\mathcal{S}|$. We use $\beta_{\mathcal{S}} \in \mathbb{R}^{s^*}$ (and similarly $\beta_{\mathcal{S}^c} \in$$ \mathbb{R}^{p - s^*}$) to denote the sub-vector of $\beta \in \mathbb{R}^p$ consisting of entries indexed in $\mathcal{S}$ (or $\mathcal{S}^c$). Corresponding submatrix notation is used for $X$ and $D$. We also denote the maximum group size as $T_{\text{max}} = \max\{T_j : 1 \leq j \leq J\}$ and the minimum group size as $T_{\text{min}} = \min\{T_j : 1 \leq j \leq J\}$. We write $a \lesssim b$ if $a \leq c b$ for some constant $c > 0$, and $a = O(b)$ to indicate that $a$ and $b$ are of the same order up to a positive constant.

We now derive the global recovery bounds for $\text{NS}^3\text{R}$, and begin by stating the necessary conditions for our results.
\begin{definition}[$(2, q)$-Group Restricted Compatibility Condition ($(2, q)$-GRCC)]\label{def1}
	Let $0 < q \leq 1$ and $0 < \alpha \leq 1$, $X \in \mathbb{R}^{n \times p}$ is said to satisfy the $(2, q)$-Group Restricted Compatibility Condition for $\kappa > 0$ and $\gamma > 1$ if
	\begin{equation}
		\sum_{j \in \mathbb{G}} \sqrt{T_j} \|\Delta_j\|_2 \leq \frac{\sqrt{s^*} \|X\Delta\|_2}{\sqrt{n} \kappa}
		\tag{5}
		\label{eq:opt5}
	\end{equation}
	for all $\Delta \in \Delta_\gamma$, where
\begin{equation}
	\label{eq:opt6}
	\begin{split}
		\Delta_\gamma &:= \bigg\{ \Delta \in \mathbb{R}^p : \alpha \sum_{j \in \mathbb{G}^c} T_j^{q/2} \|\Delta_j\|_2^2 + (1 - \alpha) \sum_{j \in \mathbb{G}^c} T_j^{q/2} \|D_j \Delta_j\|_2^2 \\
		&\quad \leq \gamma \big( \alpha \sum_{j \in \mathbb{G}} T_j^{q/2} \|\Delta_j\|_2^2 + (1 - \alpha) \sum_{j \in \mathbb{G}} T_j^{q/2} \|D_j \Delta_j\|_2^2 \big) \bigg\}
	\end{split}
	\tag{6}
\end{equation}
\end{definition}
The $(2, q)$-GRCC extends the Compatibility Condition (CC) adopted by the group square-root Lasso \citep{Bunea2013} to vectors whose structure is characterized by a joint $\ell_{2,q}$ norm, and reduces to this original CC when $\alpha = 1$ and $q = 1$. Existing literature has established that the $(2,1)$-GRCC is much stronger than the $(2,q)$-GRCC for $q<1$, meaning the lower the $q$, the weaker the $(2, q)$-GRCC. The $(2, q)$-GRCC therefore covers a broader range of cases than the standard CC. See \citep{Hu2017, Bickel2009, Meinshausen2009, Li2023} and references therein for further discussion.

For the remainder of this paper, we assume that $\varepsilon_i$ are \text{i.i.d.} Gaussian random variables following $\mathcal{N}(0, \sigma^2)$, and that there exists a positive constant $\rho$ such that
\begin{equation}
	\|\beta^*\|_{2,\infty} + \left( \frac{1-\alpha}{\alpha} \right)^{1/q} \|D\beta^*\|_{2,\infty} \leq 2^{-2/q} \rho.
	\tag{7}
	\label{eq:opt7}
\end{equation}

We define two random events associated with the linear regression model \eqref{eq:opt1} as follows:
\begin{equation}
	\mathscr{A} = \left\{ \varepsilon: \frac{\|\varepsilon\|_2}{\sqrt{n}} \leq \sigma \sqrt{1+t} \right\},
	\tag{8}
	\label{eq:opt8}
\end{equation}
and
\begin{equation}
	\mathscr{B} = \left\{ (2\bar{\rho})^{1-q} V \leq \frac{\lambda_1}{\bar{\gamma}} \right\},
	\tag{9}
	\label{eq:opt9}
\end{equation}
where the noise component $V$ is given by
\begin{equation}
	V = \max_{1\leq j\leq J} \left\{ \frac{\sqrt{n} \|X_j^\top \varepsilon\|_2}{\sqrt{T_j} \|\varepsilon\|_2} \right\}.
	\tag{10}
	\label{eq:opt10}
\end{equation}
Here, $\bar{\rho} = \left( \frac{\sigma n \sqrt{1+t}}{\lambda_1} + \frac{2^{-q} s T_{\text{max}}^{q/2} \rho^q}{2} \right)^{1/q}$ and $\bar{\gamma} = \frac{\gamma+1}{\gamma-1}$ for $\gamma > 1$.

With $\kappa > 0$ and $\gamma > 1$ given by the $(2, q)$-GRCC above, we assume in what follows that the group sparsity index $s$ and the sparsity index $s^*$ satisfy
\begin{equation}
	s^{* q/2} s^{1-q} \lesssim \frac{n}{\lambda},
	\tag{11}
	\label{eq:opt11}
\end{equation}
where $\lambda = \lambda_1 + \lambda_2$. We will show in Lemma \ref{eventB} that, in terms of orders of magnitude, the value of $\lambda$ ensuring event $\mathscr{B}$ holds with high probability is bounded above by $\lambda = O\left( s^{\frac{1-q}{q}} \sqrt{n \log J} \right)$. Therefore, the condition \eqref{eq:opt11} is equivalent to
\begin{equation*}
	s^{* q/2} s^{1/q - q} \lesssim \sqrt{\frac{n}{\log J}}.
\end{equation*}
Intuitively, this means that when the number of groups $J$ greatly exceeds $n$, reliable performance can only be achieved if the true underlying model has substantially fewer parameters than $n$.

The following result demonstrates that the global solutions of the $\text{NS}^3\text{R}$ enjoy the dominant property with high probability.
\begin{pro}\label{pro1}
	Let $\hat{\beta}$ be a global solution of criterion \eqref{eq:opt4}, and assume that condition \eqref{eq:opt7} is satisfied. Then, on the event $\mathscr{A} \cap \mathscr{B}$, the following hold:
	\begin{equation}
		\sum_{j=1}^J \sqrt{T_j} \|\hat{\beta}_j - \beta_j^*\|_2 \leq (2\bar{\rho})^{1-q} \sum_{j=1}^J T_j^{q/2} \|\hat{\beta}_j - \beta_j^*\|_2^q
		\tag{12}
	   \label{eq:opt12}
	\end{equation}
	and
	\begin{equation}
		\begin{aligned}
			& \alpha \sum_{j \in \mathbb{G}^c} T_j^{q/2} \|\hat{\beta}_j - \beta_j^*\|_2^q + (1 - \alpha) \sum_{j \in \mathbb{G}^c} T_j^{q/2} \|D_j (\hat{\beta}_j - \beta_j^*)\|_2^q \\
			& \leq \gamma \bigg[ \alpha \sum_{j \in \mathbb{G}} T_j^{q/2} \|\hat{\beta}_j - \beta_j^*\|_2^q + (1 - \alpha) \sum_{j \in \mathbb{G}} T_j^{q/2} \|D_j (\hat{\beta}_j - \beta_j^*)\|_2^q \bigg]
		\end{aligned}
		\tag{13}
		\label{eq:opt13}
	\end{equation}
	where $\bar{\rho} = \left( \frac{\sigma n \sqrt{1+t}}{\lambda_1} + \frac{2^{-q} s T_{\text{max}}^{q/2} \rho^q}{2} \right)^{1/q}$.
\end{pro}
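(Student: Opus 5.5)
The argument splits into two parts that mirror the two claims: first a bound on the size of the error, which yields \eqref{eq:opt12}, then a basic-inequality argument on the event $\mathscr{A}\cap\mathscr{B}$, which yields the cone condition \eqref{eq:opt13}. Write $\Delta=\hat\beta-\beta^*$ and
\[
P(\beta)=\alpha\sum_j T_j^{q/2}\|\beta_j\|_2^q+(1-\alpha)\sum_j T_j^{q/2}\|D_j\beta_j\|_2^q .
\]

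\textbf{Step 1 (a priori bound on $\Delta$).} Since $\hat\beta$ is a global minimizer, comparing its objective value with that at $\beta^*$ gives
\[
\frac{\lambda}{n}P(\hat\beta)\le \frac{\|\varepsilon\|_2}{\sqrt n}+\frac{\lambda}{n}P(\beta^*).
\]
On $\mathscr{A}$ the first term is at most $\sigma\sqrt{1+t}$. Only $s$ groups of $\beta^*$ are nonzero, so condition \eqref{eq:opt7} bounds $P(\beta^*)$ by a multiple of $s\,T_{\max}^{q/2}2^{-2}\rho^q$; the $2^{-2/q}$ in \eqref{eq:opt7}, after raising to the power $q$, supplies the factor $2^{-2}$. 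Using $\lambda\ge\lambda_1$, the bound $\alpha P$ dominated by $P$, and the quasi-triangle inequality $\|a-b\|^q\le\|a\|^q+\|b\|^q$ for $0<q<1$, I will derive
\[
T_j^{q/2}\|\Delta_j\|_2^q\le (2\bar\rho)^q \quad\text{for each } j .
\]
This is the reason $\bar\rho$ contains the terms $\sigma n\sqrt{1+t}/\lambda_1$ and $2^{-q}sT_{\max}^{q/2}\rho^q/2$. The pointwise bound then gives
\[
\sqrt{T_j}\,\|\Delta_j\|_2=\bigl(\sqrt{T_j}\|\Delta_j\|_2\bigr)^{1-q}\bigl(\sqrt{T_j}\|\Delta_j\|_2\bigr)^q\le(2\bar\rho)^{1-q}\,T_j^{q/2}\|\Delta_j\|_2^q .
\]
Summing over $j$ yields \eqref{eq:opt12}.

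The main obstacle is matching the constants, in particular making the weights $T_j^{q/2}$ and the $\alpha$-weighting align exactly with the definition of $\bar\rho$. If the per-group bound instead comes out as $\sqrt{T_j}\|\Delta_j\|_2\le 2\bar\rho$, the same interpolation works. I would first try to bound $\sqrt{T_j}\|\hat\beta_j\|_2$ and $\sqrt{T_j}\|\beta^*_j\|_2$ separately by $\bar\rho$ and then apply the triangle inequality.

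\textbf{Step 2 (basic inequality).} Again from optimality,
\[
\frac{1}{\sqrt n}\|\varepsilon-X\Delta\|_2-\frac{1}{\sqrt n}\|\varepsilon\|_2\le\frac{\lambda}{n}\bigl(P(\beta^*)-P(\hat\beta)\bigr).
\]
By convexity of the norm, the left side is at least $-\varepsilon^\top X\Delta/(\sqrt n\|\varepsilon\|_2)$. By Cauchy--Schwarz groupwise and the definition \eqref{eq:opt10} of $V$,
\[
\frac{|\varepsilon^\top X\Delta|}{\sqrt n\,\|\varepsilon\|_2}\le\frac{V}{n}\sum_j\sqrt{T_j}\|\Delta_j\|_2 .
\]
Applying \eqref{eq:opt12} and then event $\mathscr{B}$ bounds this by $\frac{\lambda_1}{n\bar\gamma}\sum_jT_j^{q/2}\|\Delta_j\|_2^q$. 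Since $\lambda_1=\alpha\lambda$, this is at most $\frac{\lambda}{n\bar\gamma}\,\alpha\sum_jT_j^{q/2}\|\Delta_j\|_2^q$, which in turn is at most $\frac{\lambda}{n\bar\gamma}\,\Phi(\Delta)$, where $\Phi$ denotes the full joint penalty of $\Delta$ (both the $\alpha$- and $(1-\alpha)$-terms).

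\textbf{Step 3 (splitting over $\mathbb{G}$ and $\mathbb{G}^c$).} For $j\in\mathbb{G}^c$ we have $\beta^*_j=0$, so $\hat\beta_j=\Delta_j$ and $D_j\hat\beta_j=D_j\Delta_j$. For $j\in\mathbb{G}$ the $q$-triangle inequality gives $\|\beta_j^*\|^q-\|\hat\beta_j\|^q\le\|\Delta_j\|^q$, and likewise for the $D_j$ terms. Hence
\[
P(\beta^*)-P(\hat\beta)\le\Phi_{\mathbb{G}}(\Delta)-\Phi_{\mathbb{G}^c}(\Delta).
\]
Combining with Step 2 and cancelling $\lambda/n$ gives
\[
-\bar\gamma^{-1}\bigl(\Phi_{\mathbb{G}}+\Phi_{\mathbb{G}^c}\bigr)\le\Phi_{\mathbb{G}}-\Phi_{\mathbb{G}^c},
\]
so $(1-\bar\gamma^{-1})\Phi_{\mathbb{G}^c}\le(1+\bar\gamma^{-1})\Phi_{\mathbb{G}}$. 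Since $(1+\bar\gamma^{-1})/(1-\bar\gamma^{-1})=\gamma$, this is exactly \eqref{eq:opt13}.
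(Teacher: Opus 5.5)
Your proposal is correct and is essentially the paper's proof: the same comparison of objective values at $\hat\beta$ and $\beta^*$ giving $\sum_j T_j^{q/2}\|\hat\beta_j\|_2^q\le\bar\rho^q$, the same interpolation for \eqref{eq:opt12}, and the same argument for \eqref{eq:opt13} (basic inequality, convexity, groupwise Cauchy--Schwarz with $V$, then $\mathscr{B}$). In the first step, the exact constant $2^{-(q+1)}$ in $\bar\rho^q$ comes from applying concavity of $t\mapsto t^q$ (\Cref{lem:key_inequalities}(iii)) to \eqref{eq:opt7}, which gives $P(\beta^*)/\alpha\le 2^{-(q+1)}sT_{\max}^{q/2}\rho^q$; also, the $q$-quasi-triangle inequality only yields $T_j^{q/2}\|\Delta_j\|_2^q\le 2\bar\rho^q$, so it is your fallback (the ordinary triangle inequality on $\sqrt{T_j}\|\cdot\|_2$, or the paper's bound $\sum_j\sqrt{T_j}\|\Delta_j\|_2\le 2\bar\rho$ via $\ell_1\le\ell_q$) that gives exactly $(2\bar\rho)^{1-q}$, while your enlargement of the noise term to the full penalty $\Phi(\Delta)$ is slightly cleaner than the paper, which keeps only the $\alpha$-part and must rearrange an asymmetric inequality to reach the factor $\gamma$.
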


\begin{rmk}
	The result above implies that the estimation error $\Delta = \hat{\beta} - \beta^*$ lies in the set $\Delta_\gamma$ (the feasible set of the $(2, q)$-GRCC, cf. \Cref{def1}) whenever the event $\mathscr{A} \cap \mathscr{B}$ holds. By Lemma \ref{eventA} and Lemma \ref{eventB}, we have $\mathbb{P}(\mathscr{A}) \geq 1 - \delta_1$ and $\mathbb{P}(\mathscr{B}) \geq 1 - \delta_2$ under our assumptions, so that $\mathbb{P}(\mathscr{A} \cap \mathscr{B}) \geq 1 - \delta_1 - \delta_2$. This ensures $\Delta \in \Delta_\gamma$ with probability at least $1 - \delta_1 - \delta_2$, rendering the subsequent theoretical results valid in a high-probability sense rather than just conditional on $\mathscr{A} \cap \mathscr{B}$.
\end{rmk}
With the critical regularity condition established, we now state the main theorem of this section, which establishes the prediction and estimation properties of the NS$^3$R under the weak $(2,q)$-GRCC. This result generalizes [\citep{Bunea2013}, Theorem II.1], where the group square-root Lasso is treated, corresponding in our set-up to the special case $q=1$.
\begin{thm}[Oracle Inequality]\label{thm:oracle_inequality}
	Assume that $X$ satisfies the $(2, q)$-GRCC with $\kappa > 0$ and $\gamma > 1$, and that \eqref{eq:opt11} holds. Under the assumptions of Lemmas \ref{eventA}, \ref{eventB}, and \Cref{pro1}, we choose
	\begin{equation}
		\lambda_1 \geq \frac{n}{\sqrt{n-T_{\text{max}}}} T_{\text{max}}^{\frac{1-q}{2}} \sqrt{2\zeta} s^{\frac{1-q}{q}} \rho^{1-q} \bar{\gamma} \left( 1 + \sqrt{\frac{2\log(2J/\delta_2)}{T_{\text{min}}}} \right),
		\tag{14}
		\label{eq:opt14}
	\end{equation}
	where $\zeta = \max_j \|X_j\|_2^2 / n$. Then, with probability at least $1 - \delta_1 - \delta_2$, we have
	\begin{equation}
		\frac{1}{\sqrt{n}} \|X(\hat{\beta} - \beta^*)\|_2 \lesssim \left( \sigma \frac{\lambda s^{1-q} s^{* q/2}}{n \kappa^q} \right)^{\frac{1}{2-q}}
		\tag{15}
		\label{eq:opt15}
	\end{equation}
	and
	\begin{equation}
		\sum_{j=1}^J T_j^{q/2} \|\hat{\beta}_j - \beta_j^*\|_2^q \lesssim \left( \sigma \frac{\lambda s^{2(1/q - 1)} s^*}{n \kappa^2} \right)^{\frac{q}{2-q}}.
		\tag{16}
		\label{eq:opt16}
	\end{equation}
\end{thm}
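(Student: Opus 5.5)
We follow the standard square-root Lasso oracle-inequality argument of Bunea et al., adapted to the $\ell_{2,q}$ penalty. Throughout we work on the event $\mathscr{A}\cap\mathscr{B}$. By Lemmas \ref{eventA} and \ref{eventB}, with $\lambda_1$ chosen as in \eqref{eq:opt14}, this event has probability at least $1-\delta_1-\delta_2$. Write $\Delta=\hat\beta-\beta^*$ and
\[
P(\Delta):=\alpha\sum_j T_j^{q/2}\|\Delta_j\|_2^q+(1-\alpha)\sum_j T_j^{q/2}\|D_j\Delta_j\|_2^q .
\]

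\textbf{Step 1 (basic inequality).} Compare the objective \eqref{eq:opt4} at $\hat\beta$ and at $\beta^*$. Write $L(\beta)=\|y-X\beta\|_2/\sqrt n$. If $\hat\beta$ has a nonzero residual, convexity of the square-root loss gives
\[
L(\hat\beta)-L(\beta^*)\ge -\frac{\varepsilon^\top X\Delta}{\sqrt n\|\varepsilon\|_2}.
\]
Bound the right-hand side groupwise by $V\sum_j\sqrt{T_j}\|\Delta_j\|_2/n$. Inequality \eqref{eq:opt12} of \Cref{pro1} then converts this into $(2\bar\rho)^{1-q}V\sum_j T_j^{q/2}\|\Delta_j\|_2^q/n$. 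On $\mathscr{B}$ this is at most $\frac{\lambda_1}{\bar\gamma n}\sum_jT_j^{q/2}\|\Delta_j\|_2^q$.

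The quasi-triangle inequality $\|a+b\|^q\le\|a\|^q+\|b\|^q$, applied separately on $\mathbb{G}$ and $\mathbb{G}^c$ (where $\beta^*_j=0$), controls the penalty difference. Together these give the cone condition \eqref{eq:opt13}, already available from \Cref{pro1}, and they also give a usable upper bound
\[
L(\hat\beta)-L(\beta^*)\le \frac{\lambda}{n}\,(1+\bar\gamma^{-1})\,P_{\mathbb{G}}(\Delta),
\]
where $P_{\mathbb{G}}$ is $P$ restricted to $j\in\mathbb{G}$.

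\textbf{Step 2 (linking the loss to prediction error).} Use the identity
\[
L(\hat\beta)^2-L(\beta^*)^2=\frac{\|X\Delta\|_2^2}{n}-\frac{2\varepsilon^\top X\Delta}{n}.
\]
Since $L(\hat\beta)+L(\beta^*)\le 2L(\beta^*)+(\text{small})\lesssim\sigma\sqrt{1+t}$ on $\mathscr{A}$, this yields
\[
\frac{\|X\Delta\|_2^2}{n}\lesssim \sigma\frac{\lambda}{n}P_{\mathbb{G}}(\Delta)+\frac{|\varepsilon^\top X\Delta|}{n},
\]
and the last term is absorbed as in Step 1.

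Next bound $P_{\mathbb{G}}(\Delta)$. Since $\|D_j\Delta_j\|_2\le2\|\Delta_j\|_2$, it suffices to control $\sum_{j\in\mathbb{G}}T_j^{q/2}\|\Delta_j\|_2^q$. By Hölder with exponents $1/q$ and $1/(1-q)$ over the $s$ groups,
\[
\sum_{j\in\mathbb{G}}T_j^{q/2}\|\Delta_j\|_2^q\le s^{1-q}\Bigl(\sum_{j\in\mathbb{G}}\sqrt{T_j}\|\Delta_j\|_2\Bigr)^q .
\]
Then apply the $(2,q)$-GRCC \eqref{eq:opt5}, which is legitimate by the Remark after \Cref{pro1}, since $\Delta\in\Delta_\gamma$. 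This gives
\[
P_{\mathbb{G}}(\Delta)\lesssim s^{1-q}\Bigl(\frac{\sqrt{s^*}\|X\Delta\|_2}{\sqrt n\,\kappa}\Bigr)^q .
\]

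\textbf{Step 3 (solving for the rates).} Setting $u=\|X\Delta\|_2/\sqrt n$ gives
\[
u^2\lesssim \sigma\frac{\lambda s^{1-q}s^{*q/2}}{n\kappa^q}u^q,
\]
hence $u^{2-q}\lesssim\sigma\lambda s^{1-q}s^{*q/2}/(n\kappa^q)$, which is \eqref{eq:opt15}. For \eqref{eq:opt16}, the cone condition gives
\[
\sum_jT_j^{q/2}\|\Delta_j\|_2^q\lesssim(1+\gamma)\,P_{\mathbb{G}}(\Delta)\lesssim s^{1-q}(s^*)^{q/2}u^q\kappa^{-q}.
\]
Substituting the bound on $u$ and simplifying exponents, using \eqref{eq:opt11} where needed to absorb lower-order factors, produces
\[
\Bigl(\sigma\lambda s^{2(1/q-1)}s^*/(n\kappa^2)\Bigr)^{q/(2-q)} .
\]
This is exponent bookkeeping; $s^{(1-q)\cdot 2/(2-q)}$ matches $s^{2(1/q-1)\cdot q/(2-q)}$.

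\textbf{Main obstacle.} The delicate step is Step 2, specifically justifying $L(\hat\beta)+L(\beta^*)\lesssim\sigma$ and handling the degenerate case $y=X\hat\beta$. As in Bunea et al., I would first show that $L(\hat\beta)\ge c\,L(\beta^*)$. For this, bound $L(\beta^*)-L(\hat\beta)$ by $\frac{\lambda}{n}P_{\mathbb{G}}(\Delta)$ and use \eqref{eq:opt11}, together with the crude bound $P_{\mathbb{G}}(\Delta)\lesssim \bar\rho^{\,q}$ coming from \eqref{eq:opt7} and the definition of $\bar\rho$, so that this difference is a small fraction of $\sigma\sqrt{1+t}$. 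This excludes interpolation and lets the squared-loss identity be used with constants depending only on $\gamma,t$. The $\sigma n\sqrt{1+t}/\lambda_1$ term built into $\bar\rho$ appears designed precisely for this step.
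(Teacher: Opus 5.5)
\textbf{Verdict: there is a gap in Step 2.} Your Steps 1 and 3 coincide with the paper's proof. The convexity lower bound, the group-wise bound through $V$, (12) and the event $\mathscr{B}$ together are (B.5). H\"older plus the $(2,q)$-GRCC is (B.7). Absorbing $\varepsilon^\top X\Delta/n$ through the cone condition is (B.10). Your exponent bookkeeping for (16) is correct and exact; it is (B.13), and no appeal to (11) is needed at that point.

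\textbf{The gap.} It sits in Step 2, exactly where you flag the obstacle. The claim $L(\hat\beta)+L(\beta^*)\lesssim\sigma\sqrt{1+t}$ is never established, and the crude-bound route you propose cannot establish it. Indeed
\[
\frac{\lambda}{n}\bar\rho^{\,q}=\frac{\sigma\sqrt{1+t}}{\alpha}+\frac{\lambda}{n}\cdot\frac{2^{-q}sT_{\max}^{q/2}\rho^q}{2}.
\]
The first piece is of order $\sigma$, though not a small fraction of it. The second piece scales with the signal bound $\rho$. Combining (14) with the hypothesis $s^{2/q}\log(2J/\delta_2)\ge\sigma^2(1+t)(n-T_{\max})T_{\min}/(T_{\max}\zeta\rho^2)$ of Lemma~\ref{eventB} (the same computation as in its proof) gives $\frac{\lambda_1}{n}sT_{\max}^{q/2}\rho^q\ge 2\bar\gamma\sigma\sqrt{1+t}$. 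So in the regime of the theorem this signal term is at least of order $\sigma$, and at the threshold value of $\lambda_1$ it grows linearly in $\rho$. Condition (11) only replaces $\lambda/n$ by $1/(s^{*q/2}s^{1-q})$; it cannot turn $\rho^q$ into $\sigma$. Carried through, your argument gives (15) with $\sigma$ replaced by $\sigma+\frac{\lambda}{n}sT_{\max}^{q/2}\rho^q$.

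\textbf{How the paper handles it.} The paper never bounds $L(\hat\beta)$ a priori; it keeps the cross term self-referential. Set $A=\lambda s^{1-q}s^{*q/2}/(n\kappa^q)$ and $u=\|X\Delta\|_2/\sqrt n$. Then (B.3), $\|D_j\|_2\le 2$ and (B.7) give $L(\hat\beta)-L(\beta^*)\le 2Au^q$, which is (B.8). Hence $L(\hat\beta)+L(\beta^*)\le 2\|\varepsilon\|_2/\sqrt n+2Au^q$. If the difference is negative, the product in (B.9) is nonpositive and nothing is lost. Combined with (B.10) this gives
\[
u^2\le 4(\gamma+1)\frac{\|\varepsilon\|_2}{\sqrt n}Au^q+4A^2u^{2q},
\]
so $u\lesssim\max\{(\sigma A)^{1/(2-q)},A^{1/(1-q)}\}$. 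The paper then invokes (11) to discard the second branch, and this is where the restriction on $\lambda$ genuinely enters. Dropping $A^{1/(1-q)}$ amounts to $A\lesssim\sigma^{1-q}$, so (11) has to be read with $\sigma$ and $\kappa$ treated as constants.

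\textbf{Unneeded steps.} The lower bound $L(\hat\beta)\ge cL(\beta^*)$ and the exclusion of $y=X\hat\beta$ are unnecessary here. Nothing is divided by $\|y-X\hat\beta\|_2$. The convexity step uses the gradient at $\beta^*$, which requires only $\varepsilon\neq 0$ (true almost surely), and (B.9) is an algebraic identity. What Step 2 actually needs is an upper bound on $L(\hat\beta)$, and the self-referential one above is the one that works.
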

\begin{rmk}	
(i) \Cref{thm:oracle_inequality} establishes oracle inequalities for the NS$^3$R estimator under the weak $(2,q)$-GRCC, which stands as one of the mildest regularity conditions in the literature. As established in prior work \citep{Hu2017, Li2023, Negahban2012}, the compatibility condition and its related variants in the literature (e.g., q-restricted eigenvalue condition, group restricted eigenvalue condition) are strictly weaker than the Restricted Isometry Property (RIP) and are compatible with a wide range of design matrices, including Gaussian/sub-Gaussian random designs and grouped/block-wise correlated designs. Though the convergence rates for $q < 1$ are not superior to those of the classical $\ell_{2,1}$ group Lasso, the required $(2,q)$-GRCC is substantially weaker: for problems where the $(2,q)$-GRCC holds but the standard group restricted eigenvalue condition fails, the group Lasso may yield inconsistent estimates, while our NS$^3$R estimator still maintains valid recovery guarantees.

(ii) Notably, the tuning parameters $\lambda_1$ and $\lambda_2$ are $\sigma$-free, i.e., independent of the unknown noise level $\sigma$, a critical advantage in high-dimensional settings where $\sigma$ is hard to estimate. In particular, when the maximum group size $T_{\max} = O(1)$, the estimator attains convergence rates of $O\left((\sigma \lambda / n)^{\frac{1}{2-q}} \sqrt{s}\right)$ for prediction and $O\left((\sigma \lambda / n)^{\frac{q}{2-q}} s\right)$ for estimation, which coincide with the optimal rates for the $\ell_{p,q}$ regularization problem discussed in \citep{Hu2017} under the noise-free case. Unlike the penalized least-squares regularization that requires prior knowledge of $\sigma$ for tuning, our $\sigma$-independent framework avoids pre-estimating the noise level, a well-known challenge in ultra-high-dimensional settings. In practice, we can fix $\lambda_1$ at its theoretical value and set $\lambda_2 = \theta \lambda_1$ for a pre-specified constant $\theta > 0$, or adaptively select the ratio $\theta$ via cross-validation (CV) or information criteria such as AIC or BIC to optimize predictive performance.
\end{rmk}

\section{Algorithm and global convergence analysis}\label{sec:algorithm}
\subsection{Subdifferential and optimality conditions}
\hspace{2em} In this section, we establish the subdifferential (see Appendix \ref{app:subdiff} for more details) and optimality conditions for the NS$^3$R, both of which are critical for constructing an efficient iterative algorithm and analyzing its convergence properties. We begin with an equivalent reformulation of the original model \eqref{eq:opt4} as follows
\[
\min_{\beta \in \mathbb{R}^p} F(\beta) = \|y - X\beta\|_2 + \lambda_1 \sum_{j=1}^J T_j^{q/2} \|\beta_j\|_2^q + \lambda_2 \sum_{j=1}^J T_j^{q/2} \|D_j \beta_j\|_2^q
\tag{17}
\label{eq:opt17}
\]

The following proposition gives the explicit form of the subdifferential of $F(\beta)$, which is critical for deriving optimality conditions and analyzing the convergence of our algorithm.

\begin{pro}\label{pro2}
	Assume that $y \neq X\beta$ over some set $\mathcal{C}$. Then, on $\mathcal{C}$, the subdifferential of $F(\beta)$ at $\beta$ is given groupwise for each group $j \in \{1, \dots, J\}$ by:
\begin{equation}
	(\partial F(\beta))_j =
	\begin{cases}
		\begin{aligned}
			&-\frac{X_j^\top (y - X\beta)}{\|y - X\beta\|_2} + \lambda_1 T_j^{q/2} q\|\beta_j\|_2^{q-2} \beta_j \\
			&\quad + \lambda_2 T_j^{q/2} q\|D_j \beta_j\|_2^{q-2} D_j^\top D_j \beta_j,
		\end{aligned}
		& \beta_j \neq 0 \text{ and } D_j \beta_j \neq 0, \\[1em]
		(-\infty, +\infty)^{T_j}, & \text{otherwise},
	\end{cases}
	\tag{18}
	\label{eq:opt18}
\end{equation}
	where $X_j \in \mathbb{R}^{n \times T_j}$ denotes the submatrix of $X$ corresponding to group $j$ with cardinality $T_j$, and $0 < q < 1$.
\end{pro}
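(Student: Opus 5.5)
The plan is to work in the limiting (Mordukhovich) subdifferential framework recalled in Appendix~\ref{app:subdiff}. The first step is to decompose
\[
F(\beta)=\ell(\beta)+\lambda_1 P_1(\beta)+\lambda_2 P_2(\beta),
\]
where $\ell(\beta)=\|y-X\beta\|_2$, $P_1(\beta)=\sum_j T_j^{q/2}\|\beta_j\|_2^q$ and $P_2(\beta)=\sum_j T_j^{q/2}\|D_j\beta_j\|_2^q$. On $\mathcal{C}$ the residual $y-X\beta$ is nonzero. The Euclidean norm is $C^\infty$ away from the origin, so $\ell$ is continuously differentiable in a neighbourhood of $\beta$ with
\[
\nabla\ell(\beta)=-\frac{X^\top(y-X\beta)}{\|y-X\beta\|_2}.
\]
The smooth sum rule $\partial(f+g)(\beta)=\nabla f(\beta)+\partial g(\beta)$, valid for $f$ being $C^1$ and $g$ proper lsc, then reduces everything to computing $\partial(\lambda_1P_1+\lambda_2P_2)$. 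Both penalties are sums over disjoint groups of functions of $\beta_j$ alone, so this subdifferential splits as a Cartesian product of groupwise subdifferentials. Adding the $j$-th block of $\nabla\ell$, namely $-X_j^\top(y-X\beta)/\|y-X\beta\|_2$, yields the groupwise form in \eqref{eq:opt18}.

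The second step is the case $\beta_j\neq0$ and $D_j\beta_j\neq0$. Here $u\mapsto\|u\|_2^q$ is smooth near $\beta_j$ with gradient $q\|\beta_j\|_2^{q-2}\beta_j$. By the chain rule, $u\mapsto\|D_ju\|_2^q$ is smooth near $\beta_j$ with gradient $q\|D_j\beta_j\|_2^{q-2}D_j^\top D_j\beta_j$. The group function is therefore $C^1$ near $\beta_j$, its subdifferential is the singleton gradient, and we obtain the first line of \eqref{eq:opt18}.

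The third step, which I expect to be the main obstacle, is the ``otherwise'' case, where one of $\beta_j=0$ or $D_j\beta_j=0$ holds. The intent is to show that the limiting subdifferential is all of $\mathbb{R}^{T_j}$, reflecting the infinite slope of $t\mapsto|t|^q$ at $0$ when $q<1$.

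For $\beta_j=0$ the argument is direct. The quantity $\lambda_2T_j^{q/2}\|D_ju\|_2^q$ is bounded by $C\|u\|_2^q$, and $\lambda_1T_j^{q/2}\|u\|_2^q$ dominates any linear term $\langle v,u\rangle$ near $0$. Hence $\liminf_{u\to0}\bigl(g(u)-g(0)-\langle v,u\rangle\bigr)/\|u\|\ge0$ for every $v$, so every $v$ is a Fréchet subgradient, and thus a limiting subgradient.

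For $\beta_j\neq0$ but $D_j\beta_j=0$, so that $\beta_j=c\mathbf 1$ with $c\neq0$, I would write $u=\beta_j+h$. The $\|u\|^q$ part is smooth, contributing a fixed vector $a$. The remaining part $\|D_jh\|_2^q$ has Fréchet subdifferential at $h=0$ equal to $\{w: w\perp\ker D_j\}=\{w:\mathbf 1^\top w=0\}$, because the growth $\|D_jh\|^q$ dominates any linear term only in directions transversal to $\ker D_j=\mathrm{span}(\mathbf 1)$. This gives the affine set $a+\mathbf 1^\perp$ rather than all of $\mathbb{R}^{T_j}$. Two routes then remain. The first is to approximate $\beta_j$ by nearby points $\beta_j^k$ with $D_j\beta_j^k\to0$ and show that the gradients $q\|D_j\beta_j^k\|^{q-2}D_j^\top D_j\beta_j^k$ blow up in magnitude. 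The second, which is what I expect the paper intends, is to adopt the convention (as in the IRLS literature with the constant $\mathbf 1$ direction absorbed through the loss block) that the set is written as $(-\infty,+\infty)^{T_j}$, meaning ``unbounded and imposing no finite first-order constraint''. I would state this case carefully and flag it as the delicate point.

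Finally, the groupwise blocks are assembled into $\partial F(\beta)$ on $\mathcal{C}$.
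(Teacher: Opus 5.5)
\textbf{There is a genuine gap, and it cannot be closed, because the statement is false in the case you flagged.}

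Your first two steps and the $\beta_j=0$ subcase are sound. They are also somewhat more careful than the paper's proof. You compute the Fr\'echet subdifferential of the whole group function at $\beta_j=0$ directly, and you use the smooth sum rule only where $\|\beta_j\|_2^q$ is $C^1$. The paper instead adds $\partial g$ and $\partial h$, two non-Lipschitz functions of the same block, by citing regularity and the rule for separable sums.

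The gap is the subcase $\beta_j\neq0$, $D_j\beta_j=0$. Your own computation there is correct, and it shows the proposition fails as stated.
\begin{itemize}
\item \emph{Your first route fails.} The function $\psi_j(u)=\|D_ju\|_2^q$ satisfies $\psi_j(u+t\mathbf{1})=\psi_j(u)$ for all $t$. Testing the Fr\'echet inequality with $h=\pm t\mathbf{1}$ therefore forces $\mathbf{1}^\top v=0$ for every Fr\'echet subgradient $v$ at \emph{every} point. In particular, your exploding gradients $q\|D_ju^k\|_2^{q-2}D_j^\top D_ju^k$ all lie in $\operatorname{range}(D_j^\top)=\mathbf{1}^\perp$. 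A limiting subgradient is the limit of a \emph{convergent} sequence of such vectors, and $\mathbf{1}^\perp$ is closed. Hence $\partial\psi_j(\beta_j)=\mathbf{1}^\perp$ exactly; unbounded magnitudes contribute only to the horizon subdifferential.
\item \emph{Your second route is not a proof.} Under the definition in Appendix A, $(-\infty,+\infty)^{T_j}$ means $\mathbb{R}^{T_j}$, and reinterpreting the symbol changes the claim.
\end{itemize}

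This is exactly where the paper's own proof breaks. It asserts that the argument behind (B.16), applied to $\|\cdot\|_2^q$ composed with $D_j$, gives $\partial\psi_j(\beta)=(-\infty,+\infty)^{T_j}$ whenever $D_j\beta_j=0$. That argument relies on $\|h\|_2^q$ dominating $\langle v,h\rangle$ in every direction. But $\|D_jh\|_2^q$ vanishes identically on $\ker D_j=\operatorname{span}(\mathbf{1})\neq\{0\}$. So you have found an error in the statement, not a weakness in your plan.

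What your steps do prove, for $\lambda_2>0$ and with $a_j=-X_j^\top(y-X\beta)/\|y-X\beta\|_2+\lambda_1T_j^{q/2}q\|\beta_j\|_2^{q-2}\beta_j$, is
\[
(\partial F(\beta))_j=
\begin{cases}
\bigl\{a_j+\lambda_2T_j^{q/2}q\|D_j\beta_j\|_2^{q-2}D_j^\top D_j\beta_j\bigr\}, & \beta_j\neq0,\ D_j\beta_j\neq0,\\
\bigl\{v\in\mathbb{R}^{T_j}:\ \mathbf{1}^\top v=\mathbf{1}^\top a_j\bigr\}, & \beta_j\neq0,\ D_j\beta_j=0,\\
\mathbb{R}^{T_j}, & \beta_j=0.
\end{cases}
\]
You should state and prove this corrected version. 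The difference is not cosmetic: in a group with $\beta_j\neq0$ and $D_j\beta_j=0$, stationarity is not vacuous but imposes the scalar condition $\mathbf{1}^\top a_j=0$.
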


In the framework of generalized subdifferentials, the well-known Fermat’s rule remains valid, that is, if $\beta \in \mathbb{R}^p$ is a local minimizer of $F(\beta)$ then
\[
0 \in \partial F(\beta).
\tag{19}
\label{eq:opt19}
\]
A point $\beta \in \mathbb{R}^p$ that satisfies \eqref{eq:opt19} is referred to as a stationary point of $F(\beta)$. For the optimization problem \eqref{eq:opt17}, if $\beta$ is a stationary point, then the componentwise optimality condition
\[
-\frac{X_j^\top (y - X\beta)}{\|y - X\beta\|_2} + \lambda_1 T_j^{q/2} q \|\beta_j\|_2^{q-2} \beta_j + \lambda_2 T_j^{q/2} q \|D_j \beta_j\|_2^{q-2} D_j^\top D_j \beta_j = 0
\tag{20}
\]
holds for all groups $j$ satisfying $\beta_j \neq 0$ and $D_j \beta_j \neq 0$. Further details can be found in Section 3.1 of \citep{Xue2020}.

\subsection{Double-screened IRLS with thresholding algorithm}
\hspace{2em} The objective function $F(\beta)$ is nonconvex, nonsmooth, and non-Lipschitz, so most existing nonconvex algorithms either fail to guarantee convergence, or only ensure subsequential convergence under stringent conditions. Motivated by the IRLS with thresholding algorithm in \citep{Feng2020}, we propose a double-screened IRLS with thresholding (DS-IRLS-th) algorithm to solve the square-root optimization problem with a joint nonconvex $\ell_{2,q}$ regularizer. Our algorithm adopts a majorization-minimization framework to iteratively solve a sequence of reweighted least squares problems, and incorporates a double-screening mechanism on both group-level coefficient magnitudes and within-group difference magnitudes. In practice, the DS-IRLS-th algorithm maintains two key sets at each iteration: the group support set $\mathbb{G}^k$, including groups with coefficients exceeding the threshold $\tau_1$; and the difference active set $\mathbb{A}^k$, a subset of $\mathbb{G}^k$ containing groups where within-group differences exceed the threshold $\tau_2$. This nested structure guarantees a support-shrinking property across iterations, a critical requirement for proving convergence.

More precisely, let $\beta^k$ be the minimizer at the $k$-th iteration. By the nested double-screening mechanism, we only optimize the objective over the group support set $\mathbb{G}^k$, with coefficients for groups outside $\mathbb{G}^k$ fixed to zero. For groups in $\mathbb{G}^k\setminus\mathbb{A}^k$, all coefficients within the same group are constrained to a constant value. Equivalently, we solve the following constrained optimization problem at the $(k+1)$th iteration:
\[
\tilde{\beta}^{k+1} = \arg\min_{\beta \in \mathbb{R}^p} \left\{ \frac{1}{r_k} \|y - X\beta\|_2^2 + \lambda_1 \sum_{j \in \mathbb{G}^k} T_j^{q/2} w_{1,j}^k \|\beta_j\|_2^2 + \lambda_2 \sum_{j \in \mathbb{A}^k} T_j^{q/2} w_{2,j}^k \|D_j \beta_j\|_2^2 \right\},
\tag{21}
\label{eq:opt21}
\]
\[
\text{s.t.} \quad \beta_j = 0, \ \forall j \notin \mathbb{G}^k, \qquad D_j \beta_j = 0, \ \forall j \in \mathbb{G}^k \setminus \mathbb{A}^k
\]
where $r_k = \|y - X\beta^k\|_2$ is the current residual norm, and the weights are defined by
\[
w_{1,j}^k = q\|\beta_j^k\|_2^{q-2}, \ \forall j \in \mathbb{G}^k; \quad w_{2,j}^k = q\|D_j\beta_j^k\|_2^{q-2}, \ \forall j \in \mathbb{A}^k.
\]

Our algorithm is summarized as follows.

\vspace{1em}
\hrule
\vspace{0.3em}

\noindent\textbf{Algorithm 1} DS-IRLS-th: Double-screened IRLS with Thresholding Algorithm
\addcontentsline{loa}{algorithm}{\protect\numberline{\thealgorithm}DS-IRLS-th}
\label{alg:ds-irls-th}

\vspace{0.3em}
\hrule
\vspace{0.5em}

\noindent\hspace{1em}\textbf{Input}: Threshold $\tau_1, \tau_2 > 0$, initial point $\beta^0 \in \mathbb{R}^p$ satisfying $\|\beta_j^0\|_2 \geq \tau_1$ and $\|D_j\beta_j^0\|_2 \geq \tau_2$ for all $1 \leq j \leq J$.

\vspace{0.6em}
\noindent\hspace{1em}\textbf{Initialize}: Set the group support set $\mathbb{G}^0 = \{j: \|\beta_j^0\|_2 \geq \tau_1\}$.

\vspace{0.6em}
\noindent\hspace{1em}\textbf{Iteration}: For $k = 0,1,2,\dots$ do

\vspace{0.5em}
\noindent\hspace{2em}1. Compute the residual norm: $r_k = \|y - X\beta^k\|_2$.

\vspace{0.5em}
\noindent\hspace{2em}2. Set the difference active set: $\mathbb{A}^k = \{j \in \mathbb{G}^k: \|D_j\beta_j^k\|_2 \geq \tau_2\}$.

\vspace{0.5em}
\noindent\hspace{2em}3. Compute the weights for each group in the group support set:
\vspace{-0.2em}
\[
w_{1,j}^k = q\|\beta_j^k\|_2^{q-2}, \ \forall j \in \mathbb{G}^k; \quad w_{2,j}^k = q\|D_j\beta_j^k\|_2^{q-2}, \ \forall j \in \mathbb{A}^k.
\]

\vspace{0.5em}
\noindent\hspace{2em}4. Solve the following constrained optimization problem:
\vspace{-0.2em}
\[
\tilde{\beta}^{k+1} = \arg\min_{\beta \in \mathbb{R}^p} \left\{ \frac{1}{r_k} \|y - X\beta\|_2^2 + \lambda_1 \sum_{j \in \mathbb{G}^k} T_j^{q/2} w_{1,j}^k \|\beta_j\|_2^2 + \lambda_2 \sum_{j \in \mathbb{A}^k} T_j^{q/2} w_{2,j}^k \|D_j\beta_j\|_2^2 \right\},
\]
\vspace{-0.2em}
\[
\text{s.t. } \beta_j = 0, \forall j \notin \mathbb{G}^k, \quad D_j\beta_j = 0, \forall j \in \mathbb{G}^k \setminus \mathbb{A}^k.
\]

\vspace{0.5em}
\noindent\hspace{3em}or equivalently, denote $\mathcal{S}^k = \cup_{j \in \mathbb{G}^k} \mathcal{G}_j$, and define the group-wise transformation matrix:
\vspace{-0.2em}
\[
P_j^k =
\begin{cases}
	I_{T_j}, & j \in \mathbb{A}^k \\
	\mathbf{1}_{T_j}, & j \in \mathbb{G}^k \setminus \mathbb{A}^k
\end{cases}
\]
\vspace{-0.2em}
\noindent\hspace{3em}
\begin{minipage}{\dimexpr\linewidth - 3em\relax}
	where $T_j$ is the cardinality of the $j$-th group, $I_{T_j}$ is the $T_j$-dimensional identity matrix, and $\mathbf{1}_{T_j}$ is the $T_j$-dimensional vector with all components equal 1. Set $P = \mathrm{diag}(P_j^k, j \in \mathbb{G}^k)$, $\tilde{X}_{\mathcal{S}^k} = X_{\mathcal{S}^k} P$ and block-diagonal weight matrix $W^k = \mathrm{diag}(W_j^k, j \in \mathbb{G}^k)$ with
\end{minipage}
\vspace{-0.2em}
\[
W_j^k =
\begin{cases}
	\lambda_1 T_j^{q/2} w_{1,j}^k I_{T_j} + \lambda_2 T_j^{q/2} w_{2,j}^k D_j^T D_j, & j \in \mathbb{A}^k \\
	\lambda_1 T_j^{q/2} w_{1,j}^k \cdot T_j, & j \in \mathbb{G}^k \setminus \mathbb{A}^k
\end{cases}
\]

\vspace{0.5em}
\noindent\hspace{3em}Solve the following linear system:
\vspace{-0.2em}
\[
\left( \frac{1}{r_k} \tilde{X}_{\mathcal{S}^k}^T \tilde{X}_{\mathcal{S}^k} + W^k \right) \beta^{red} = \frac{1}{r_k} \tilde{X}_{\mathcal{S}^k}^T y
\]
\vspace{-0.2em}
\noindent\hspace{3em}where $\beta^{red}$ is the reduced variable, and the original coefficient is recovered by $\tilde{\beta}_{\mathcal{S}^k}^{k+1} = P \beta^{red}$.

\vspace{0.5em}
\noindent\hspace{2em}5. Thresholding: $\beta_j^{k+1} =
\begin{cases}
	0, & \|\tilde{\beta}_j^{k+1}\|_2 < \tau_1 \\
	\tilde{\beta}_j^{k+1}, & \text{otherwise}
\end{cases}$.

\vspace{0.5em}
\noindent\hspace{2em}6. Update the group support set: $\mathbb{G}^{k+1} = \{j: \|\beta_j^{k+1}\|_2 \geq \tau_1\}$.

\vspace{0.5em}
\noindent\hspace{1em}\textbf{end for}

\vspace{0.6em}
\noindent\hspace{1em}\textbf{Output}: The sequence $\{\beta^k\}$.
\vspace{0.3em}
\hrule

\begin{rmk}	
(i) The nested double-screening mechanism of the DS-IRLS-th algorithm restricts the difference active set $\mathbb{A}^k$ to groups with both non-negligible coefficients and non-flat within-group differences. This avoids the numerical instability caused by excessively large weights from near-zero coefficients or within-group differences in the reweighted least squares step. Importantly, all computations are restricted to the shrinking group support set, which significantly reduces the dimension of the linear system solved at each iteration. Specifically, at the $(k+1)$th iteration, we solve the reduced system
$$
\left( \frac{1}{r_k} \tilde{X}_{\mathcal{S}^k}^T \tilde{X}_{\mathcal{S}^k} + W^k \right) \beta^{red} = \frac{1}{r_k} \tilde{X}_{\mathcal{S}^k}^T y
$$
where $\mathcal{S}^k = \cup_{j \in \mathbb{G}^k} \mathcal{G}_j$ denotes the support set, $\tilde{X}_{\mathcal{S}^k} = X_{\mathcal{S}^k} P$ is the reduced design matrix with transformation $P$, and $W^k$ is the block-diagonal weight matrix constructed from the estimate at the $k$-th iteration.

In early iterations, the relatively large support set makes direct matrix inversion computationally costly. For this case, we recommend the conjugate gradient method in \citep{Saad2003}, which is well-suited for large symmetric positive-definite systems. As the two nested sets $\mathbb{G}^k$ and $\mathbb{A}^k$ shrink rapidly and monotonically across iterations, the computational cost for solving the reduced linear system decreases substantially, giving our method a clear efficiency advantage over existing alternatives, especially for large-scale sparse recovery problems.

(ii) The DS-IRLS-th algorithm extends the IRLS-th framework in \citep{Feng2020} by incorporating a double-screening mechanism designed for grouped and structured smoothness priors. This extension improves both sparse recovery accuracy and computational efficiency, driven by the rapid, monotonic shrinkage of the support set across iterations. A key advantage of this double-screening design is that it identifies pseudo-active groups, which have nearly flat within-group coefficients, and imposes equality constraints on all coefficients within these groups. This avoids numerical instability caused by near-zero within-group differences in the reweighting step, while also greatly reducing the effective dimension of the optimization problem. For signals with inherent grouped and locally smooth structures, this leads to more reliable recovery of the true sparse pattern, and mitigates overfitting in smooth regions more effectively.
\end{rmk}

\subsection{Global convergence analysis}
\hspace{2em} In this subsection, we establish the global convergence result of the sequence $\{\beta^k\}$ generated by the DS-IRLS-th algorithm. Our convergence analysis relies on the Kurdyka--\L{}ojasiewicz (KL) inequality framework, which requires the objective function $F(\beta)$ to satisfy the KL property. For completeness, we include the formal definitions of the KL property and semi-algebraic functions in Appendix \ref{app:kl_property}, both of which are classical results in nonconvex and nonsmooth optimization. We also verify in Appendix \ref{app:kl_property} that our objective $F(\beta)$ belongs to the class of semi-algebraic functions via standard closure properties of semi-algebraic sets and functions, which ensures that $F(\beta)$ satisfies the KL property at all points relevant to our analysis. We prove convergence by verifying three key conditions for the KL framework, i.e., that $\{\beta^k\}$ is bounded, that the function value sequence $\{F(\beta^k)\}$ admits the sufficient descent property, and that the subdifferential sequence $\{\partial F(\beta^k)\}$ is relatively bounded. Notably, our result holds without the proximal term required in most existing KL-based convergence analyses.

To apply the KL inequality framework, we verify the required regularity conditions in the two lemmas below. \Cref{lem:boundedness_sufficient_descent} characterizes the boundedness of the iterate sequence and the sufficient descent property of the objective, while \Cref{lem:subdifferential_boundedness} establishes the relative boundedness of the subdifferential sequence.

Let $l_f$ denote the local Lipschitz constant of the objective function $F(\beta)$ around the KKT point, and let $\sigma_{\min}(D_j^\top)$ denote the minimum singular value of the matrix $D_j^\top$. We now state our first lemma.
\begin{lem}[Boundedness and sufficient descent of the objective function value]
	\label{lem:boundedness_sufficient_descent}
	Let $\{\beta^k\}$ be the sequence generated by the DS-IRLS-th algorithm. The following statements hold:
	\begin{enumerate}[label=(\roman*), font=\upshape,
		topsep=0pt,
		itemsep=0pt,
		parsep=0pt,
		partopsep=0pt,
		leftmargin=*
		]
		\item Suppose the threshold satisfies $\tau_2 \leq \tilde{\tau}_2 = \left( \lambda_2 \sigma_{\min}(D_j^\top) T_{\text{min}}^{q/2} / l_f \right)^{\frac{1}{1-q}}$, then there exists a constant $C > \tau_1$ such that
		\begin{equation}
			C \geq \|\beta_j^k\|_2, \quad \forall 1 \leq j \leq J,\ k \geq 0;
			\tag{22}
			\label{eq:opt22}
		\end{equation}
		\item Suppose the thresholds satisfy $\tau_1 \leq \tilde{\tau}_1 = \min\big\{ \big( (T_{\text{min}}^{q/2} \lambda_1) / (2\|X\|_2) \big)^{\frac{1}{1-q}}, \left( \lambda_1 / (2\tilde{a}) \right)^{\frac{1}{2-q}} \big\}$, and $\tau_2 \leq \tilde{\tau}_2$, then there exists a constant $a > 0$ such that
		\begin{equation}
			F(\beta^k) - F(\beta^{k+1}) \geq a \|\beta^k - \beta^{k+1}\|_2^2,
			\tag{23}
			\label{eq:opt23}
		\end{equation}
		holds for all $k \geq 0$.
	\end{enumerate}
\end{lem}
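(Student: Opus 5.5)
Both parts rest on the majorization--minimization structure of step 4 of DS-IRLS-th. Part (ii) builds on (i), so we prove (i) first.

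\textbf{Part (i): boundedness.} We argue by monotonicity of the objective. First we show, as in the descent argument below, that $F(\beta^{k+1})\le F(\beta^k)$ for every $k$; this part does not need the quadratic gain. Then $\lambda_1 T_j^{q/2}\|\beta_j^k\|_2^q\le F(\beta^k)\le F(\beta^0)$ for every $j$ and $k$. Hence $\|\beta_j^k\|_2\le (F(\beta^0)/(\lambda_1T_{\min}^{q/2}))^{1/q}$, and we take $C$ as the maximum of this quantity and $2\tau_1$.

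The condition $\tau_2\le\tilde\tau_2$ is where the local Lipschitz constant $l_f$ enters. It guarantees that moving a group from $\mathbb{A}^k$ to $\mathbb{G}^k\setminus\mathbb{A}^k$, i.e.\ imposing the constraint $D_j\beta_j=0$, does not increase $F$. The reason is that the smoothness penalty term $\lambda_2T_j^{q/2}\|D_j\beta_j^k\|_2^q$ is removed. Its size relative to the change of the loss is at least $\lambda_2T_{\min}^{q/2}\|D_j\beta_j\|_2^{q}$ versus $l_f\|\beta_j-\bar\beta_j\mathbf 1\|_2\le l_f\|D_j\beta_j\|_2/\sigma_{\min}(D_j^\top)$. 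When $\|D_j\beta_j\|_2<\tau_2\le\tilde\tau_2$, the ratio $\|D_j\beta_j\|_2^{1-q}\le \lambda_2\sigma_{\min}(D_j^\top)T_{\min}^{q/2}/l_f$ yields the comparison. An analogous argument for $\tau_1$, with $\|X\|_2$ playing the role of the Lipschitz constant of the loss, handles the thresholding step 5.

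\textbf{Part (ii): sufficient descent.} The argument has three ingredients.

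\emph{Majorization.} We use $\sqrt{u}\le \frac{u}{2\sqrt{v}}+\frac{\sqrt v}{2}$ with $u=\|y-X\beta\|_2^2$ and $v=r_k^2$. We also use the concavity bounds $t^{q/2}\le s^{q/2}+\frac q2 s^{q/2-1}(t-s)$, applied with $t=\|\beta_j\|_2^2$ and $s=\|\beta^k_j\|_2^2$, and likewise for $\|D_j\beta_j\|_2^2$ when $j\in\mathbb{A}^k$. For $j\in\mathbb{G}^k\setminus\mathbb{A}^k$ the feasible point satisfies $D_j\beta_j=0$, so its smoothness term vanishes. Combining these, one half of the objective $Q_k$ in step 4 majorizes $F$ on the feasible set, up to constants, and is tight at $\beta^k$ whenever $\beta^k$ is feasible. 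The weights in the algorithm are $q\|\cdot\|^{q-2}$, so the factor $\tfrac12$ is absorbed consistently.

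\emph{Strong convexity.} On the reduced subspace, $Q_k$ is strongly convex. Its Hessian contains $\lambda_1T_j^{q/2}q\|\beta_j^k\|_2^{q-2}\ge \lambda_1 T_{\min}^{q/2} q\, C^{q-2}$, using part (i). Therefore
\[
Q_k(\beta^k)-Q_k(\tilde\beta^{k+1})\ge \tfrac{\mu}{2}\|\beta^k-\tilde\beta^{k+1}\|_2^2, \qquad \mu=\lambda_1T_{\min}^{q/2}qC^{q-2}.
\]
This yields $F(\beta^k)-F(\tilde\beta^{k+1})\ge a_1\|\beta^k-\tilde\beta^{k+1}\|_2^2$.

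\emph{Thresholding.} Zeroing a group with $\|\tilde\beta_j^{k+1}\|_2<\tau_1$ changes the loss by at most $\|X\|_2\|\tilde\beta_j^{k+1}\|_2$. It removes a penalty of at least $\lambda_1T_{\min}^{q/2}\|\tilde\beta^{k+1}_j\|_2^q$. Since $\tau_1\le \tilde\tau_1$, the ratio $\|\tilde\beta_j\|_2^{1-q}\le T_{\min}^{q/2}\lambda_1/(2\|X\|_2)$, so the net decrease is at least $\frac{\lambda_1}{2}T_{\min}^{q/2}\|\tilde\beta_j\|_2^q$. Because $\|\tilde\beta_j\|_2<\tau_1$, this in turn dominates $\tilde a\|\tilde\beta_j\|_2^2$ by the second branch of $\tilde\tau_1$, using $t^q\ge t^2\tau_1^{q-2}$.

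Finally, $\|\beta^k-\beta^{k+1}\|^2\le 2\|\beta^k-\tilde\beta^{k+1}\|^2+2\|\tilde\beta^{k+1}-\beta^{k+1}\|^2$. Adding the two decreases gives (23) with $a=\min\{a_1,\tilde a\}/2$.

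\textbf{Main obstacle.} The main difficulty is that $\beta^k$ may be infeasible for the step-$k$ problem whenever $\mathbb{A}^k\subsetneq\mathbb{G}^{k}$. In that case $\beta^k$ has $0<\|D_j\beta_j^k\|_2<\tau_2$ but is not flat, so tightness of the majorizer fails. I would handle this by comparing against the projected point $\bar\beta^k$, which replaces each such group by its mean. The $\tilde\tau_2$ argument from part (i) shows $F(\bar\beta^k)\le F(\beta^k)$. The distance $\|\beta^k-\bar\beta^k\|$ has to be absorbed as well. If the bound obtained this way is only of order $\|\beta^k-\bar\beta^k\|^q$, I would use that this quantity is at most $\tau_2/\sigma_{\min}(D_j^\top)$ and therefore small, converting it into a quadratic bound exactly as in the thresholding step.
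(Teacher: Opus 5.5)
Your thresholding estimate, and your majorization/strong-convexity step for a \emph{feasible} $\beta^k$, agree with the paper; the paper phrases the latter through the KKT system and Lemma~A.2, which is the same computation. The first gap is in part (i). You derive boundedness from monotonicity of $F$, and for step 5 you invoke the $\tau_1$-analogue of the $\tilde\tau_2$ comparison. That comparison needs $\tau_1^{1-q}\le\lambda_1T_{\min}^{q/2}/\|X\|_2$, but (i) assumes only $\tau_2\le\tilde\tau_2$, and for larger $\tau_1$ thresholding can increase $F$. The missing idea is that monotonicity is unnecessary. Zeroing groups raises $F$ by at most $\|X\|_2\|\tilde\beta^{k+1}-\beta^{k+1}\|_2\le\|X\|_2\tau_1|\mathbb{G}^k\setminus\mathbb{G}^{k+1}|$, because the removed penalties are nonnegative. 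Since $\mathbb{G}^{k+1}\subseteq\mathbb{G}^k$, at most $J$ groups are ever discarded. Hence $F(\beta^k)\le F(\beta^0)+J\|X\|_2\tau_1$ for all $k$, which is all you need for $\lambda_1T_{\min}^{q/2}\|\beta_j^k\|_2^q\le F(\beta^k)$.

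The second gap is your treatment of an infeasible $\beta^k$; part (i) inherits it, since your monotonicity rests on the same step. Projecting to $\bar\beta^k$ does not close the argument under the stated $\tau_2\le\tilde\tau_2$. The surrogate $\tfrac12Q_k+c_k$ is built from $r_k$ and the weights at $\beta^k$, so the chain needs its value at $\bar\beta^k$, not $F(\bar\beta^k)$. Strong convexity then gives gain only in $\|\bar\beta^k-\tilde\beta^{k+1}\|_2^2$. You would therefore need a uniform gain $\gtrsim\|\beta^k-\bar\beta^k\|_2^2$ from the dropped penalty, net of the first-order cost and of a term $\|X\|_2^2\|\beta^k-\bar\beta^k\|_2^2/(2r_k)$. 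With $t=\|D_j\beta_j^k\|_2$, the available lower bound is $t^q\bigl(\lambda_2T_{\min}^{q/2}-l_ft^{1-q}/\sigma_{\min}(D_j^\top)\bigr)$, which degenerates to $0$ as $t\uparrow\tilde\tau_2$. Imitating the thresholding step would thus need a margin and a second branch in $\tilde\tau_2$ that the lemma does not have.

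The paper instead keeps $\beta^k$ as the comparison point. For the quadratic $Q_k$, $Q_k(\beta^k)-Q_k(\tilde\beta^{k+1})$ equals the Hessian form in $\beta^k-\tilde\beta^{k+1}$ plus $\langle\nabla Q_k(\tilde\beta^{k+1}),\beta^k-\tilde\beta^{k+1}\rangle$, and the Hessian form already penalizes the infeasible component. By the KKT conditions with multipliers $\mu_j$ for $D_j\beta_j=0$, the linear term is, up to normalization, $-\sum_{j\in\mathbb{A}^{k-1}\setminus\mathbb{A}^k}\mu_j^\top D_j\beta_j^k$. This gives (B.26): the gain is directly in $\|\beta^{k+1}-\beta^k\|_2^2$, and infeasibility contributes only $\lambda_2T_j^{q/2}\|D_j\beta_j^k\|_2^q-\mu_j^\top D_j\beta_j^k$. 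That term need only be nonnegative, which is exactly what $\|\mu_j\|_2\le l_f/\sigma_{\min}(D_j^\top)$ and $\tau_2\le\tilde\tau_2$ provide.
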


With the boundedness of the iterate sequence and the sufficient descent property of the objective established, we next characterize the relative boundedness of the subdifferential sequence.
\begin{lem}[Relatively bounded subdifferential]
	\label{lem:subdifferential_boundedness}
	For each $k \geq 0$, suppose that $\|y - X\beta^k\|_2 > 0$ is uniformly bounded from below. Then there exists $u^{k+1} \in \partial F(\beta^{k+1})$ such that
	\begin{equation}
		\|u^{k+1}\|_2 \leq b \|\beta^k - \beta^{k+1}\|_2,
		\tag{24}
		\label{eq:opt24}
	\end{equation}
	where $b$ is a positive constant.
\end{lem}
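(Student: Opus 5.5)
The plan is to build $u^{k+1}$ explicitly from the first-order optimality conditions of the reweighted subproblem in Step 4. We then bound its distance from the exact subgradient formula of \Cref{pro2}. The proof has three steps.

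\textbf{Step 1: support stability.} We use the support-shrinking property of the nested screening, $\mathbb{G}^{k+1}\subseteq\mathbb{G}^k$, and the resulting nesting of the difference active sets $\mathbb{A}^k$.
\begin{itemize}
\item For $j\notin\mathbb{G}^{k+1}$ we have $\beta^{k+1}_j=0$. By \Cref{pro2} the $j$-th block of $\partial F(\beta^{k+1})$ is all of $\mathbb{R}^{T_j}$, so we set $u^{k+1}_j=0$.
\item The same holds for groups with $D_j\beta^{k+1}_j=0$, in particular $j\in\mathbb{G}^k\setminus\mathbb{A}^k$, whose coefficients are constrained to be flat.
\item It therefore suffices to treat $j\in\mathbb{A}^k\cap\mathbb{G}^{k+1}$, where $\beta^{k+1}_j=\tilde\beta^{k+1}_j$ (no thresholding occurred), $\beta_j^{k+1}\neq0$ and $D_j\beta_j^{k+1}\neq 0$.
\end{itemize}

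\textbf{Step 2: the subproblem's optimality condition.} For such $j$, after dividing by $2$, the Step 4 problem gives
\[
-\frac{X_j^\top(y-X\tilde\beta^{k+1})}{r_k}+\lambda_1T_j^{q/2}q\|\beta_j^k\|_2^{q-2}\tilde\beta^{k+1}_j+\lambda_2T_j^{q/2}q\|D_j\beta^k_j\|_2^{q-2}D_j^\top D_j\tilde\beta^{k+1}_j=0 .
\]
We define $u^{k+1}_j$ to be the expression (18) evaluated at $\beta^{k+1}$, and subtract the identity above from it. The thresholding step only zeroes groups, and those groups are not in $\mathbb{G}^{k+1}$. One subtlety is that $X\tilde\beta^{k+1}$ may differ from $X\beta^{k+1}$ on thresholded groups. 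We handle this by noting that $\tilde\beta^{k+1}-\beta^{k+1}$ is supported on groups where $\|\beta^k_j\|_2\ge\tau_1$ but $\|\tilde\beta^{k+1}_j\|_2<\tau_1$; on those groups $\|\tilde\beta^{k+1}_j-\beta^{k+1}_j\|_2\le\tau_1\le\|\beta_j^k-\beta_j^{k+1}\|_2$, since $\beta_j^{k+1}=0$. So this discrepancy is also bounded by $\|\beta^k-\beta^{k+1}\|_2$.

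\textbf{Step 3: bounding the three differences.} The subtraction leaves three terms.
\begin{enumerate}
\item \emph{Loss term.} The difference is
\[
X_j^\top\bigl[(y-X\beta)/r_k-(y-X\beta^{k+1})/r_{k+1}\bigr],
\]
up to the $\tilde\beta$ correction from Step 2. The map $\beta\mapsto (y-X\beta)/\|y-X\beta\|_2$ is Lipschitz on $\{\|y-X\beta\|_2\ge r_{\min}\}$ with constant $2\|X\|_2/r_{\min}$. This is exactly where the hypothesis that residual norms are uniformly bounded below enters. The term is therefore $\le c_1\|\beta^k-\beta^{k+1}\|_2$, with $c_1$ depending on $\|X\|_2$ and $r_{\min}$.
\item \emph{Group-sparsity weight term.} The difference is
\[
\lambda_1T_j^{q/2}q\bigl(\|\beta^{k+1}_j\|_2^{q-2}-\|\beta^k_j\|_2^{q-2}\bigr)\beta^{k+1}_j .
\]
Both norms lie in $[\tau_1,C]$: the lower bound comes from the screening, and the upper bound $C$ from \Cref{lem:boundedness_sufficient_descent}(i). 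On $[\tau_1,\infty)$ the function $t\mapsto t^{q-2}$ has derivative bounded by $(2-q)\tau_1^{q-3}$. The term is thus $\le \lambda_1T_{\max}^{q/2}q(2-q)\tau_1^{q-3}C\,\|\beta^{k+1}_j-\beta^k_j\|_2$.
\item \emph{Smoothness weight term.} This is handled the same way with $D_j\beta_j$, using $\|D_j\|_2\le 2$ and the lower bound $\tau_2$.
\end{enumerate}

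\textbf{Main obstacle.} The lower bound $\|D_j\beta^{k+1}_j\|_2\ge\tau_2$ is not automatic, because $\mathbb{A}^{k+1}$ is recomputed from $\beta^{k+1}$. For groups where $\|D_j\beta_j^{k+1}\|_2<\tau_2$ but $D_j\beta^{k+1}_j\neq 0$, (18) still prescribes a single-valued block. I would try to bypass this by choosing $u^{k+1}$ in the limiting subdifferential as the paper's convention in \Cref{pro2} permits. Failing that, I would invoke the bound $\tau_2\le\tilde\tau_2$ from \Cref{lem:boundedness_sufficient_descent}. The idea is to compare the gradient of $t^{q}$ near zero with $l_f$, showing that such groups cannot arise (they would violate local optimality), or that $\|D_j\beta_j^{k+1}\|_2$ stays bounded away from zero along the iterations.

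Summing the three bounds over groups then yields $\|u^{k+1}\|_2\le b\|\beta^k-\beta^{k+1}\|_2$, with $b$ depending on $\|X\|_2$, $r_{\min}$, $\tau_1$, $\tau_2$, $C$, $\lambda_1$, $\lambda_2$ and $T_{\max}$.
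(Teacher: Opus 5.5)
Your construction and estimates are the paper's. You set $u^{k+1}_j=0$ off the active groups and use formula \eqref{eq:opt18} on them. You then subtract the KKT condition of \eqref{eq:opt21} and bound the thresholding correction $e^{k+1}=\tilde\beta^{k+1}-\beta^{k+1}$ via $\|e^{k+1}_j\|_2<\tau_1\le\|\beta^k_j\|_2$. Finally you use Lipschitz continuity of $t\mapsto t^{q-2}$ on $[\tau_1,C]$ and $[\tau_2,2C]$.

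The step you leave open, however, is a genuine gap, and neither of your remedies closes it. Take $j\in\mathbb{G}^{k+1}\cap(\mathbb{A}^k\setminus\mathbb{A}^{k+1})$ with $0<\|D_j\beta^{k+1}_j\|_2<\tau_2$. Then \Cref{pro2} makes the $j$-th block a singleton, so there is no freedom in choosing $u^{k+1}_j$.

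Nor can such groups be ruled out. $\beta^{k+1}$ minimizes a strongly convex reweighted surrogate, not $F$, so no local-optimality argument is available. The threshold $\tilde\tau_2$ serves in \Cref{lem:boundedness_sufficient_descent} only to dominate the multiplier terms $\mu_j^\top D_j\beta^k_j$, and that proof explicitly allows $0<\|D_j\beta^k_j\|_2<\tau_2$ for $j\in\mathbb{A}^{k-1}\setminus\mathbb{A}^k$.

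At such a step, the unique element of the block contains the summand $\lambda_2T_j^{q/2}q\|D_j\beta^{k+1}_j\|_2^{q-2}D_j^\top D_j\beta^{k+1}_j$. Its norm is at least $\lambda_2T_j^{q/2}q\,\sigma_{\min}(D_j^\top)\|D_j\beta^{k+1}_j\|_2^{q-1}$, while the other summands and $\|\beta^{k+1}-\beta^k\|_2$ stay bounded. So no $b$ depending only on $\|X\|_2$, $r_{\min}$, $\tau_1$, $\tau_2$, $C$, $\lambda_1$, $\lambda_2$, $T_{\max}$ can work. The paper's proof has the same hole: it sets $u^{k+1}_j=0$ for every $j\notin\mathbb{A}^{k+1}$, which \Cref{pro2} does not license for these groups.

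What closes the gap is that the transitions are finite (\Cref{lem:group_support_converge}). For $k\ge K$ one has $\mathbb{G}^{k+1}=\mathbb{G}^k$ and $\mathbb{A}^{k+1}=\mathbb{A}^k$. Hence $e^{k+1}=0$, and every $j\in\mathbb{G}^{k+1}\setminus\mathbb{A}^{k+1}=\mathbb{G}^k\setminus\mathbb{A}^k$ has $D_j\beta^{k+1}_j=0$ by the constraint in \eqref{eq:opt21}. Your estimates then give \eqref{eq:opt24} with an explicit $b$. This tail version is all the KL argument of \Cref{thm:global_convergence} needs: apply it to $\{\beta^k\}_{k\ge K}$. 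Alternatively, enlarge $b$ by the maximum of $\|u^{k+1}\|_2/\|\beta^{k+1}-\beta^k\|_2$ over the finitely many transition steps, where $\beta^{k+1}\neq\beta^k$; this gives a trajectory-dependent constant.

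Two smaller repairs are also needed.
\begin{itemize}
\item After the $e^{k+1}$ correction, the loss term is $(1/r_k-1/r_{k+1})X_j^\top(y-X\beta^{k+1})$. This is not a difference of the normalized-residual map at two points, so your Lipschitz argument does not apply as stated. Bound it directly by $\|X\|_2|r_{k+1}-r_k|/r_k\le\|X\|_2^2\|\beta^{k+1}-\beta^k\|_2/r_{\min}$.
\item On flat groups ($\beta_j\neq0$, $D_j\beta_j=0$), taking $u^{k+1}_j=0$, as the paper also does, relies on \Cref{pro2}'s claim that the block is $\mathbb{R}^{T_j}$. But $D_j$ is surjective with kernel spanned by $\mathbf{1}_{T_j}$, so the subdifferential of $\|D_j\beta_j\|_2^q$ there is only $D_j^\top\mathbb{R}^{T_j-1}=\mathbf{1}_{T_j}^\perp$. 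Instead, take $u^{k+1}_j$ to be the $\mathbf{1}_{T_j}$-component of $-X_j^\top(y-X\beta^{k+1})/r_{k+1}+\lambda_1T_j^{q/2}q\|\beta^{k+1}_j\|_2^{q-2}\beta^{k+1}_j$. Multiplying the KKT condition of \eqref{eq:opt21} by $\mathbf{1}_{T_j}^\top$ annihilates $D_j^\top\mu_j$, so your loss and $\lambda_1$-weight estimates bound this component.
\end{itemize}
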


With the above two core properties of the DS-IRLS-th algorithm, we now establish its global convergence and finite-length property in the following theorem.
\begin{thm}[Global convergence and finite-length property]
	\label{thm:global_convergence}
	Let $\{\beta^k\}$ be the sequence generated by the DS-IRLS-th algorithm. For each $k \geq 0$, suppose that $\|y - X\beta^k\|_2 > 0$ is uniformly bounded from below, and the thresholds satisfy $\tau_1 \leq \tilde{\tau}_1$, $\tau_2 \leq \tilde{\tau}_2$ ($\tilde{\tau}_1, \tilde{\tau}_2$ are defined in \Cref{lem:boundedness_sufficient_descent}). The following statements hold:
	\begin{enumerate}[label=(\roman*), font=\upshape,
		topsep=0pt,
		itemsep=0pt,
		parsep=0pt,
		partopsep=0pt,
		leftmargin=*
		]
		\item The sequence $\{\beta^k\}$ converges to a stationary point $\beta^*$ of the objective function $F(\beta)$, i.e.,
		\begin{equation}
		\lim_{k\to\infty} \beta^k = \beta^*;
		\tag{25}
		\label{eq:opt25}
		\end{equation}
		\item The sequence $\{\beta^k\}$ has finite length, i.e.,
		\begin{equation}
			\sum_{k=0}^\infty \|\beta^{k+1} - \beta^k\|_2 < \infty.
			\tag{26}
			\label{eq:opt26}
		\end{equation}
	\end{enumerate}
\end{thm}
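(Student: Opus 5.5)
The argument follows the standard Kurdyka--\L{}ojasiewicz route of Attouch--Bolte--Svaiter. Two ingredients come from the earlier lemmas: \Cref{lem:boundedness_sufficient_descent} gives boundedness and sufficient descent, and \Cref{lem:subdifferential_boundedness} gives the relative error bound. The extra ingredient is the semi-algebraicity of $F$ established in Appendix \ref{app:kl_property}.

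\textbf{Step 1: finite-time stabilization of the supports.} The thresholding step only deletes groups, so $\mathbb{G}^{k+1}\subseteq\mathbb{G}^k$. I expect the same monotonicity for $\mathbb{A}^k$: groups in $\mathbb{G}^k\setminus\mathbb{A}^k$ are constrained to $D_j\beta_j=0$ and therefore stay flat afterwards. Both sets are integer-valued and monotone, so there is $K$ with $\mathbb{G}^k\equiv\mathbb{G}$ and $\mathbb{A}^k\equiv\mathbb{A}$ for all $k\ge K$.

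On the support, every iterate then satisfies $\|\beta_j^k\|_2\ge\tau_1$ for $j\in\mathbb{G}$ and $\|D_j\beta_j^k\|_2\ge\tau_2$ for $j\in\mathbb{A}$. Hence for $k\ge K$ all iterates lie in the closed set
\[
\Omega=\{\beta:\beta_j=0\ (j\notin\mathbb{G}),\ D_j\beta_j=0\ (j\in\mathbb{G}\setminus\mathbb{A})\}.
\]
On $\Omega$, the restricted function $F|_\Omega$ is smooth near the iterates, since its nonsmooth parts are bounded away from their singularities. By Lemma \ref{lem:boundedness_sufficient_descent}(i) the sequence is bounded, so its limit-point set $\omega(\beta^0)$ is nonempty and compact.

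\textbf{Step 2: basic KL consequences.} By (23), $F(\beta^k)$ is nonincreasing and bounded below by $0$, so it converges to some $F^\ast$. Summing (23) gives $\sum_k\|\beta^{k+1}-\beta^k\|_2^2<\infty$, hence $\|\beta^{k+1}-\beta^k\|_2\to0$.

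$F$ is continuous, so $F\equiv F^\ast$ on $\omega(\beta^0)$. Lemma \ref{lem:subdifferential_boundedness} gives $u^{k}\in\partial F(\beta^{k})$ with $u^k\to0$. Closedness of the limiting subdifferential along sequences with $F(\beta^k)\to F(\bar\beta)$ then yields $0\in\partial F(\bar\beta)$ for every limit point $\bar\beta$. In particular, for groups with $\bar\beta_j\ne0$ and $D_j\bar\beta_j\ne0$, the stationarity condition (20) holds. The remaining groups are trivial by (18), because there the subdifferential is all of $\mathbb{R}^{T_j}$.

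\textbf{Step 3: finite length.} $F$ is semi-algebraic, so by the uniformized KL lemma there exist $\epsilon,\eta>0$ and a concave desingularizer $\varphi$ such that, for $k$ large,
\[
\varphi'(F(\beta^k)-F^\ast)\,\mathrm{dist}(0,\partial F(\beta^k))\ge1.
\]
If $F(\beta^k)=F^\ast$ for some finite $k$, then (23) makes the sequence eventually constant and we are done; otherwise we proceed as follows.

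Combine concavity of $\varphi$, the bound (24) with index shift ($\mathrm{dist}(0,\partial F(\beta^k))\le b\|\beta^{k-1}-\beta^k\|_2$), and (23). This gives
\[
a\|\beta^{k+1}-\beta^k\|_2^2\le b\|\beta^k-\beta^{k-1}\|_2\,\Delta_k,\qquad \Delta_k=\varphi(F(\beta^k)-F^\ast)-\varphi(F(\beta^{k+1})-F^\ast).
\]
Applying $2\sqrt{xy}\le x+y$ yields
\[
2\|\beta^{k+1}-\beta^k\|_2\le\|\beta^k-\beta^{k-1}\|_2+\tfrac{b}{a}\Delta_k.
\]
Summing over $k\ge K'$ telescopes, which gives (26). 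A finite-length sequence is Cauchy, so (25) follows, and the limit is stationary by Step 2.

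\textbf{Main obstacle.} The delicate point is that Lemma \ref{lem:subdifferential_boundedness} supplies $u^{k+1}\in\partial F(\beta^{k+1})$ in the sense of (18). For groups where $\beta_j^{k+1}=0$ or $D_j\beta_j^{k+1}=0$, that subdifferential is the whole space, so stationarity in the limit is meaningful only through the active groups.

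I would handle this by working with $F$ restricted to $\Omega$ after time $K$. Concretely, apply KL to $F+\iota_\Omega$, which is still semi-algebraic because $\Omega$ is a linear subspace. Its subdifferential on the active coordinates is exactly the smooth gradient from (18), and the iterates stay a uniform distance ($\ge\tau_1,\tau_2$) away from the singular set. So the uniformized KL argument runs on a region where the restricted function is $C^1$, and the stationary point of the restricted problem is a stationary point of $F$ in the sense of (19).
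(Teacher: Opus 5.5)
Your proposal is correct and follows the paper's route. The paper's proof simply notes that Lemma~\ref{lem:boundedness_sufficient_descent} (boundedness and sufficient descent), Lemma~\ref{lem:subdifferential_boundedness} (relative error) and the semi-algebraicity of $F$ supply the hypotheses of Theorem~2.9 of Attouch--Bolte--Svaiter, and then cites that theorem, whereas you write out the uniformized-KL telescoping argument yourself. Your passage to $F+\iota_\Omega$ after the supports freeze is an extra step the paper does not take, and it is well motivated: at a group with $\beta_j\neq 0$, $D_j\beta_j=0$, the true subdifferential block is the gradient of the loss and group terms plus $\operatorname{range}(D_j^\top)$, not all of $\mathbb{R}^{T_j}$ as \eqref{eq:opt18} claims. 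To finish it you still need the relative-error bound for the $\mathbf{1}$-component of the blocks $j\in\mathbb{G}\setminus\mathbb{A}$. This follows from the KKT conditions of \eqref{eq:opt21} paired with $\mathbf{1}$ (the multiplier term drops out since $D_j\mathbf{1}=0$), together with the same Lipschitz estimates as in Lemma~\ref{lem:subdifferential_boundedness}.
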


From Theorem \ref{thm:global_convergence}, the proposed DS-IRLS-th algorithm globally converges to a stationary point of $F(\beta)$ under mild regularity conditions. By contrast, most existing nonconvex algorithms for group-sparse regression only guarantee local (subsequence) convergence. This result holds for all iterations under the threshold conditions specified in \Cref{lem:boundedness_sufficient_descent}.

\section{Numerical experiments}\label{sec:numerical}
\hspace{2em} Before investigating the finite-sample performance of the NS$^3$R, we detail the parameter selection procedure for our algorithm. All numerical experiments are implemented in R and run on a personal desktop (Intel Core i7-14700K, 3.40 GHz, 64 GB of RAM).

We describe the general experimental setup below. We set the sample size $n = 250$, the dimension $p = 1000$, the number of groups $J = 100$, and the size of each group $T_j = 10$ for all $1 \leq j \leq J$. The sparsity level is defined as $s/J$, where $s$ denotes the number of active groups in the true $\beta^*$, and we vary $s$ across experiments. The coefficients within each active group are set to a constant value of 3, while all coefficients in inactive groups are fixed to 0. The covariates $(X_1, X_2, \dots, X_p)$ are independently generated from the standard normal distribution $N(0,1)$, and the response vector $y$ is generated by
\[
y = X\beta^* + \sigma \cdot \text{randn}(n,1),
\]
where $\sigma$ is the standard deviation of the Gaussian noise. We fix $\sigma = 0.01$ in all experiments.

In our implementation, we first set the thresholds $\tau_1 = 10^{-2}$, $\tau_2 = 10^{-4}$, and the exponent $q = 0.5$ for the proposed DS-IRLS-th algorithm. The algorithm is initialized with the ridge regression estimate, where the ridge penalty parameter is set to 0.01. For the regularization parameters, we adopt a modified version of the theoretical $\lambda$-selection strategy proposed in Section IV of \citep{Bunea2013}. By the argument in \citep{Bunea2013}, if $\lambda_1 \propto \sqrt{\frac{n\zeta\tau_0}{T_{\text{min}}\tau_0 + n - T_{\text{max}}}}$, where
\[
\tau_0 = F_{T_{\text{min}}, n-T_{\text{max}}}^{-1} \left(1 - \alpha_{\text{sig}}/J\right), \quad \zeta = \max_j \frac{1}{n}\|X_j\|_2^2,
\]
with $\alpha_{\text{sig}}$ a pre-specified significance level, then the noise component $V$ is guaranteed to be dominated by $\lambda_1$. Our preliminary experiments find that a global scaling factor $c = 0.2$ yields stable performance across different sparsity levels. We thus set
\begin{equation}\label{eq:opt27}
	\lambda_1 = c \sqrt{\frac{n\zeta\tau_0}{T_{\text{min}}\tau_0 + n - T_{\text{max}}}}.
	\tag{27}
\end{equation}
For the second regularization parameter, we set $\lambda_2 = \theta\lambda_1$, where the multiplier $\theta > 0$ is selected adaptively via the EBIC criterion \citep{Chen2008} over the predefined grid $\{2^{-6}, 2^{-5}, \dots, 2^0\}$. A fixed value of $\theta$ (e.g., $\theta=1$) may also be used as a simpler alternative.

For all experiments, we use the stopping criterion
$$
\frac{\|\beta^{k+1} - \beta^k\|_2}{\|\beta^k\|_2} < 10^{-4}.
$$

For each sparsity level, we independently generate 100 datasets $(X, y, \beta^*)$ under the above setup, then run the algorithm on each dataset and average the results across 100 trials. We evaluate model selection performance using the metrics below.

The first metric we consider is the group-level false positive rate (FPR), defined as
$$
\frac{|\{ j: \beta_j^* = 0, \hat{\beta}_j \neq 0 \}|}{|\{ j: \beta_j^* = 0 \}|},
$$
where $\beta_j^*$ and $\hat{\beta}_j$ denote the true and estimated coefficient vectors for the $j$-th group, respectively.

The second metric is the success rate, defined as the proportion of the 100 trials where
$$
\frac{\|\hat{\beta} - \beta^*\|_2}{\|\beta^*\|_2} < 0.01.
$$

\subsection{Choice of \texorpdfstring{$\tau_1$, $\tau_2$}{tau1, tau2}}
\hspace{2em} In this subsection, we discuss the choice of $\tau_1, \tau_2$. Following the discussion in Section 7.2 of \cite{Feng2020}, we fix $\tau_1 = 10^{-2}$. In our analysis, we test our algorithm with different $\tau_2$ values from the set $\{10^{-6}, 10^{-4}, 10^{-2}, 10^{-1}, 1\}$ across different sparsity levels. The success rates and average running times (over 100 independent trials) are given in \autoref{fig:choicetau}.

The results show that our algorithm with $\tau_2 = 10^{-6}$ achieves the optimal balance between model selection accuracy and computational efficiency. Therefore, we fix $\tau_1 = 10^{-2}$ and $\tau_2 = 10^{-6}$ for all subsequent experiments.
\begin{figure}[!t]
	\centering
	\begin{minipage}[t]{0.49\linewidth}
		\centering
		\includegraphics[width=\linewidth]{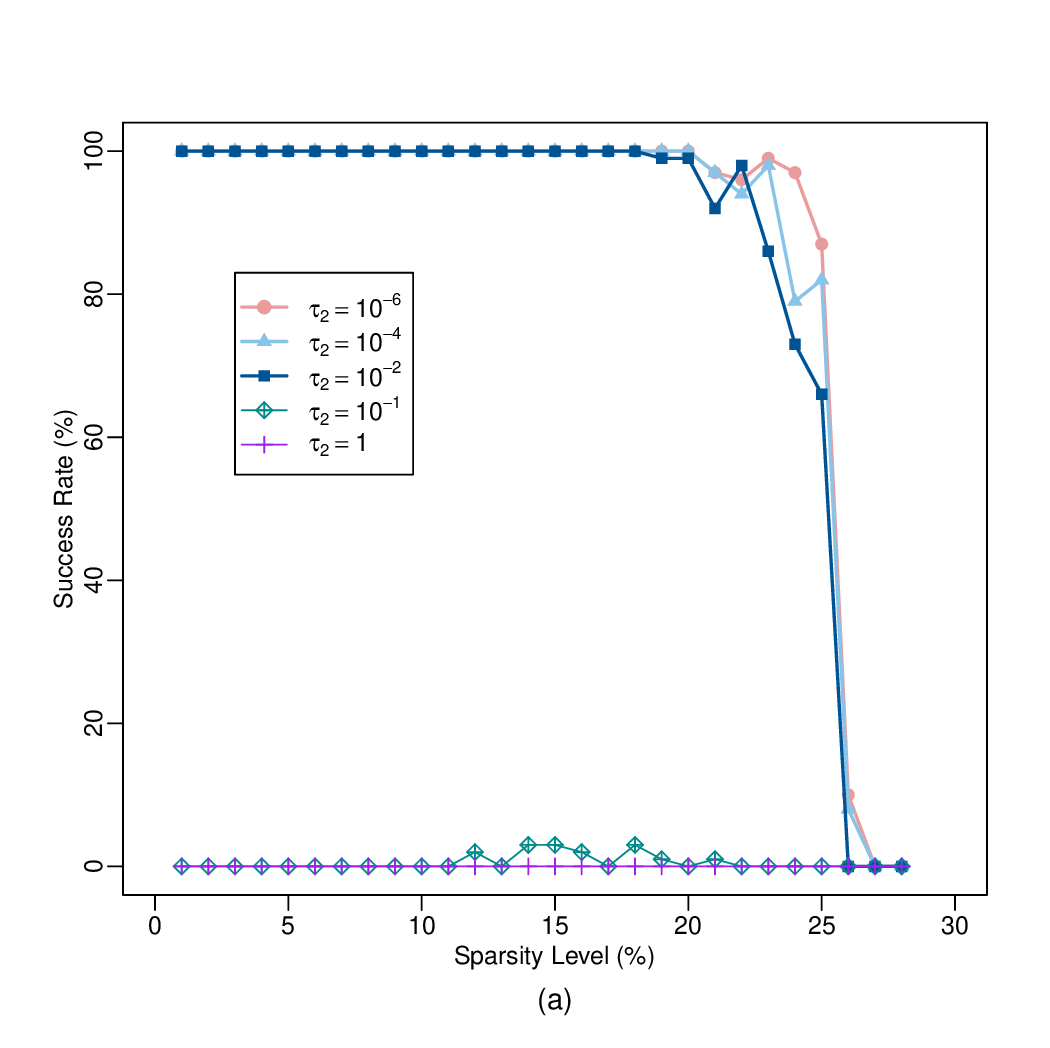} 
	\end{minipage}
	\hspace{0pt} 
	\begin{minipage}[t]{0.49\linewidth}
		\centering
		\includegraphics[width=\linewidth]{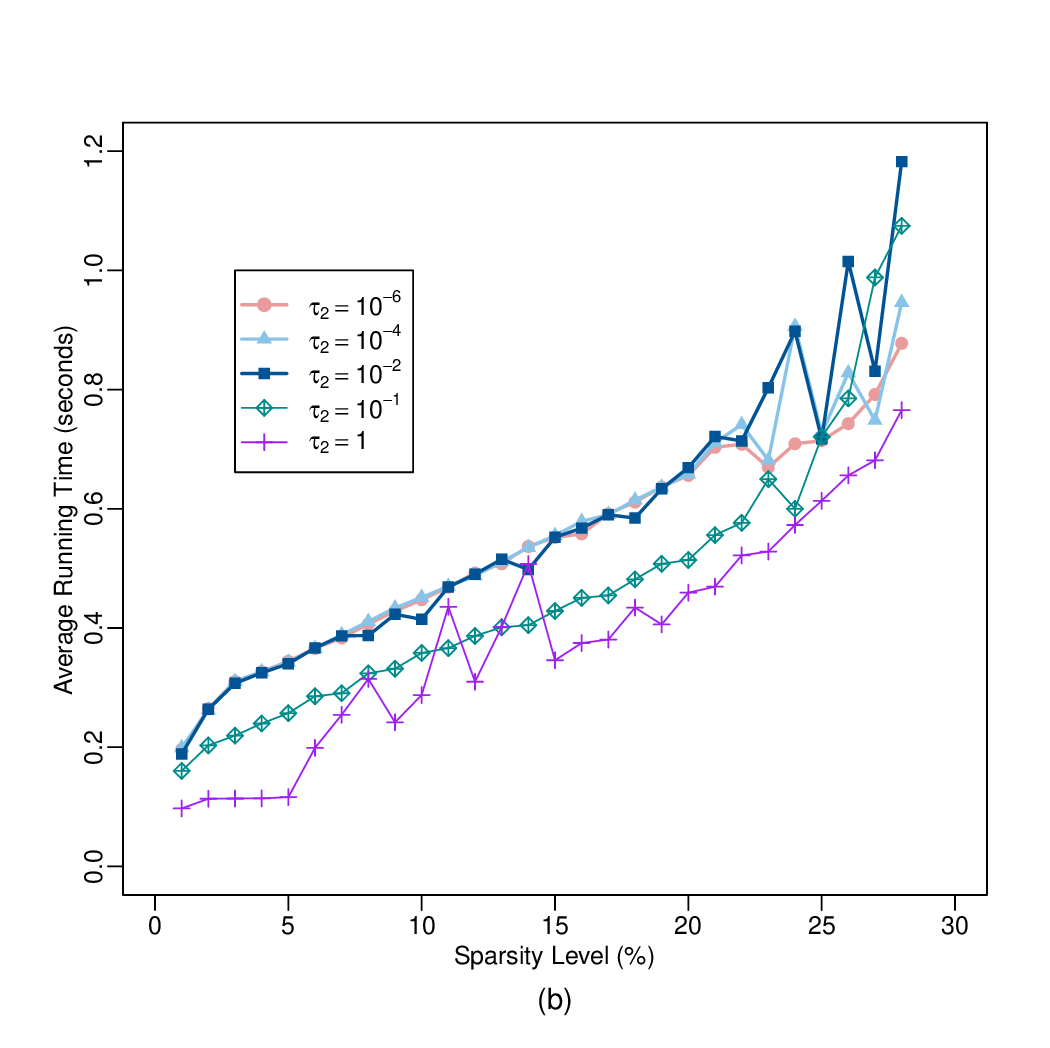}
	\end{minipage}

	\caption{The success rates and average running times of DS-IRLS-th algorithm with different threshold $\tau_2$.}
	\label{fig:choicetau}
\end{figure}

\subsection{Choice of \texorpdfstring{$q$}{q}}
\hspace{2em} In this subsection, we compare the performance of the DS-IRLS-th algorithm with different $q$ values from the set $\{0.1, 0.3, 0.5, 0.7, 0.9\}$ across different sparsity levels. The resulting success rates and false positive rates are summarized in \autoref{fig:choiceq}, based on 100 independent replications.

The algorithm performs the worst when $q = 0.9$. This poor performance stems from insufficient non-convexity, which fails to effectively promote sparsity. In contrast, the algorithm achieves the best overall performance when $q = 0.5$, which aligns with the simulation results in \cite{Feng2020}. We thus fix $q = 0.5$ for all subsequent experiments.

\begin{figure}[!t]
	\centering
	\begin{minipage}[t]{0.49\linewidth}
		\centering
		\includegraphics[width=\linewidth]{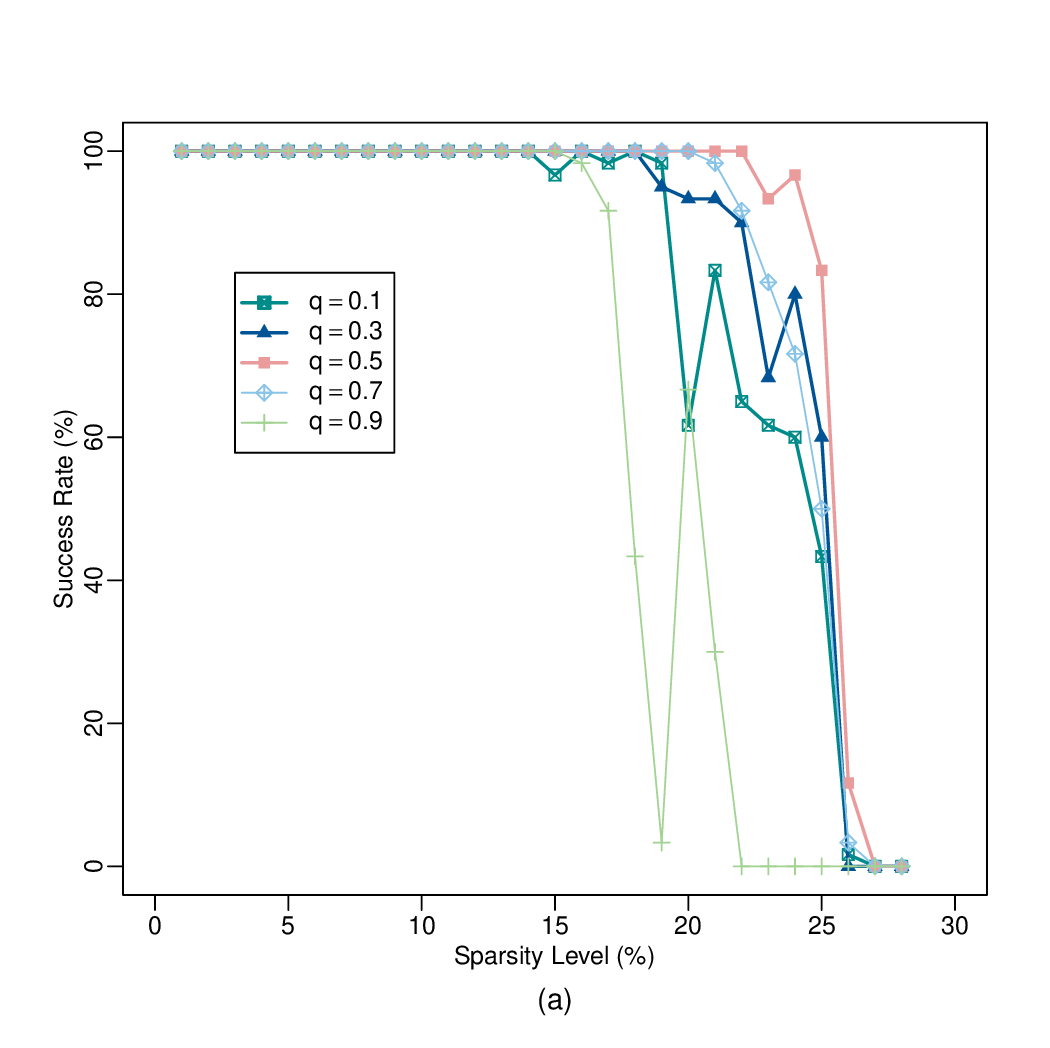} 
	\end{minipage}
	\hspace{0pt} 
	\begin{minipage}[t]{0.49\linewidth}
		\centering
		\includegraphics[width=\linewidth]{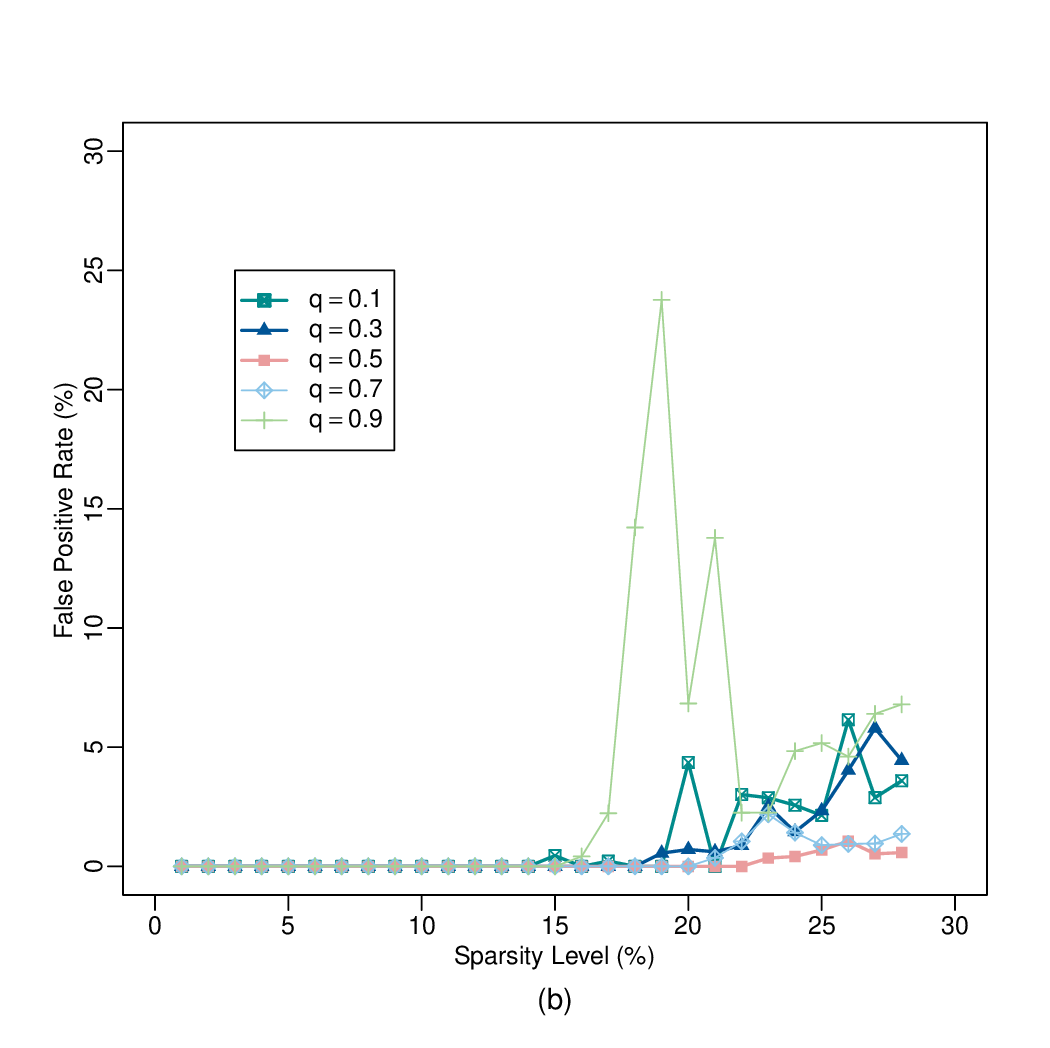}
	\end{minipage}
	
	\caption{The success rates and false positive rates of DS-IRLS-th algorithm with different exponent $q$.}
	\label{fig:choiceq}
\end{figure}

\subsection{Convergence of the DS-IRLS-th algorithm}
\hspace{2em} In this subsection, we analyze the convergence of the proposed DS-IRLS-th algorithm. Two sparsity levels, 1\% and 20\%, are considered in our experiments. \autoref{fig:convergence} presents the objective function values and relative errors across 50 iterations. We observe that, for both sparsity levels, the objective function values and relative errors decrease monotonically and converge rapidly within just a few iterations. This demonstrates the effectiveness and fast convergence of the proposed DS-IRLS-th algorithm.

\begin{figure}[!t]
	\centering
	\begin{minipage}[t]{0.38\linewidth}
		\centering
		\includegraphics[width=\linewidth]{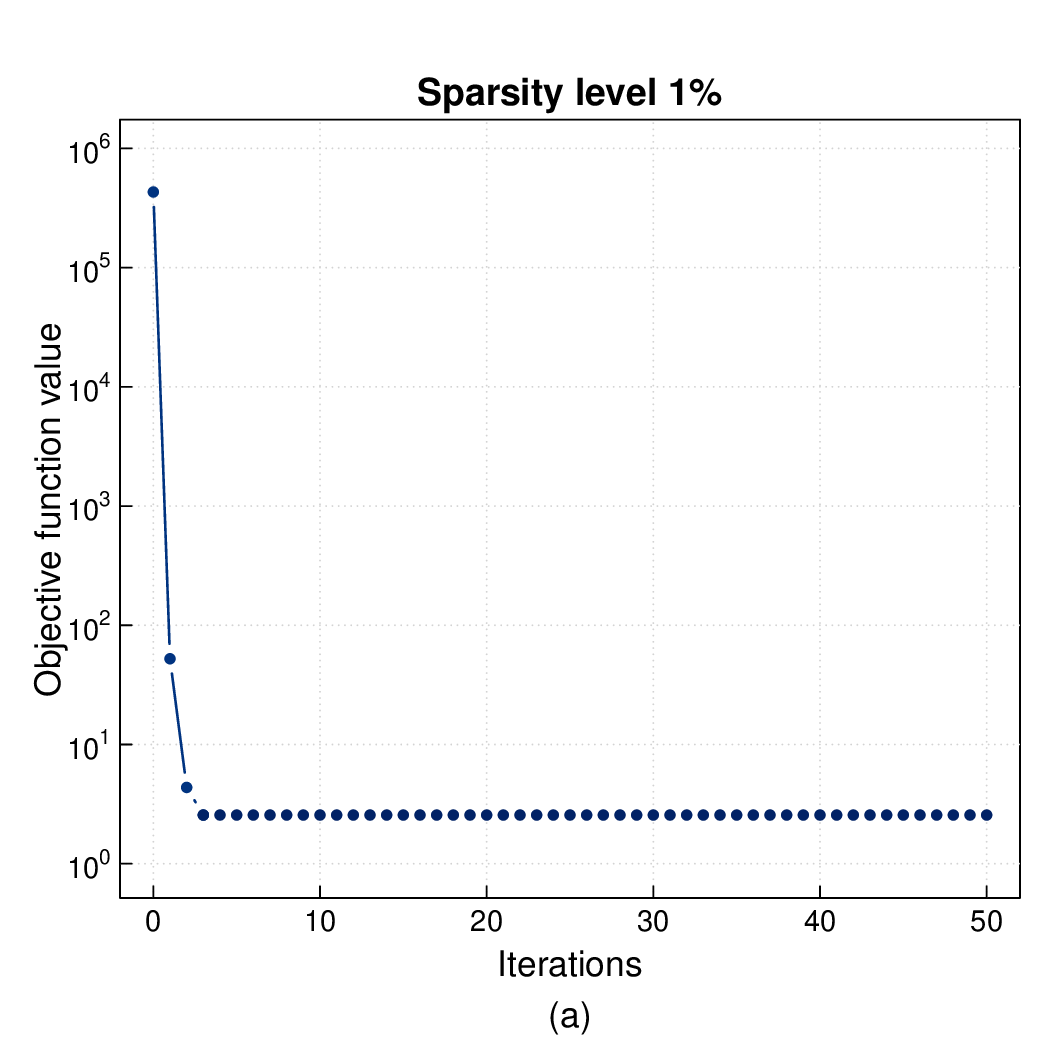} 
	\end{minipage}
	\hspace{1pt} 
	\begin{minipage}[t]{0.38\linewidth}
		\centering
		\includegraphics[width=\linewidth]{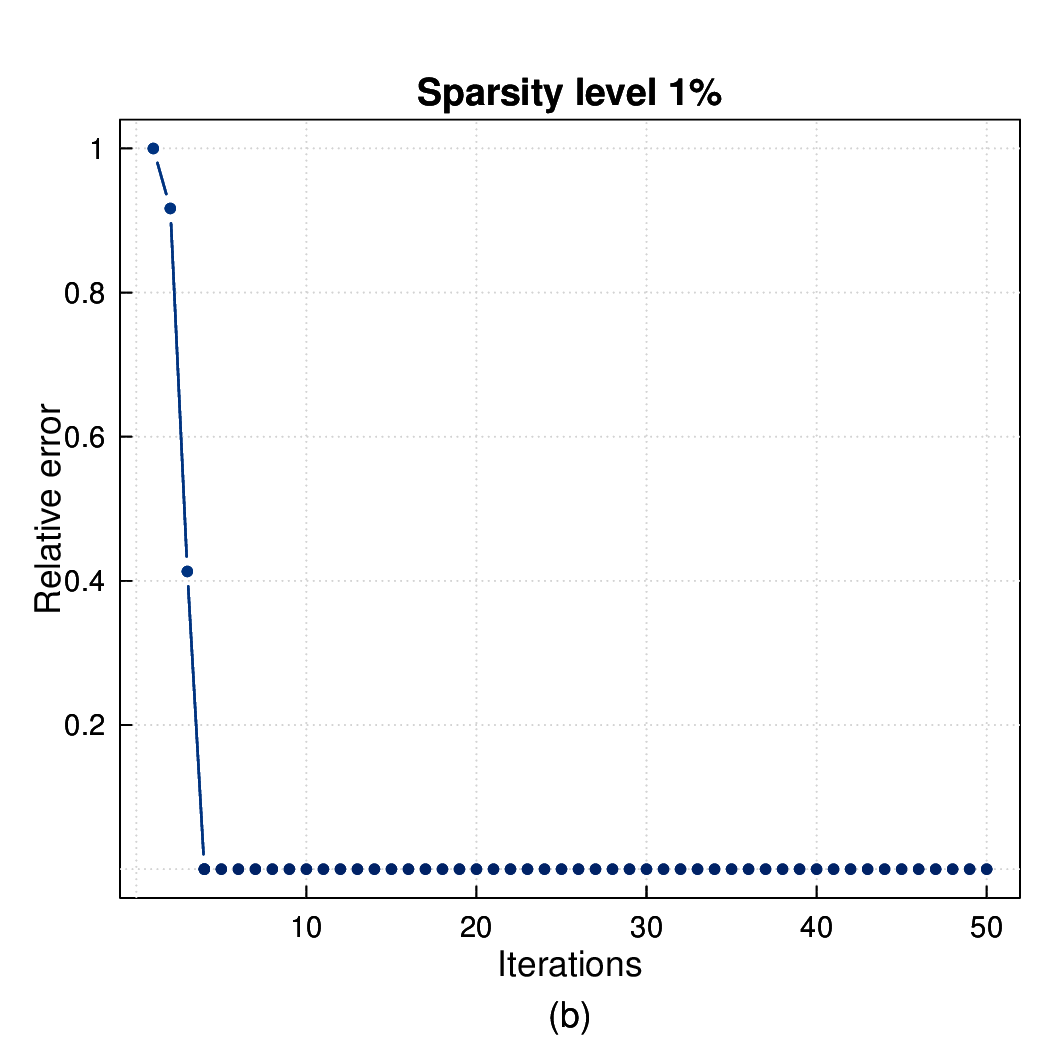} 
	\end{minipage}
	
	\vspace{6pt} 
	\begin{minipage}[t]{0.38\linewidth}
		\centering
		\includegraphics[width=\linewidth]{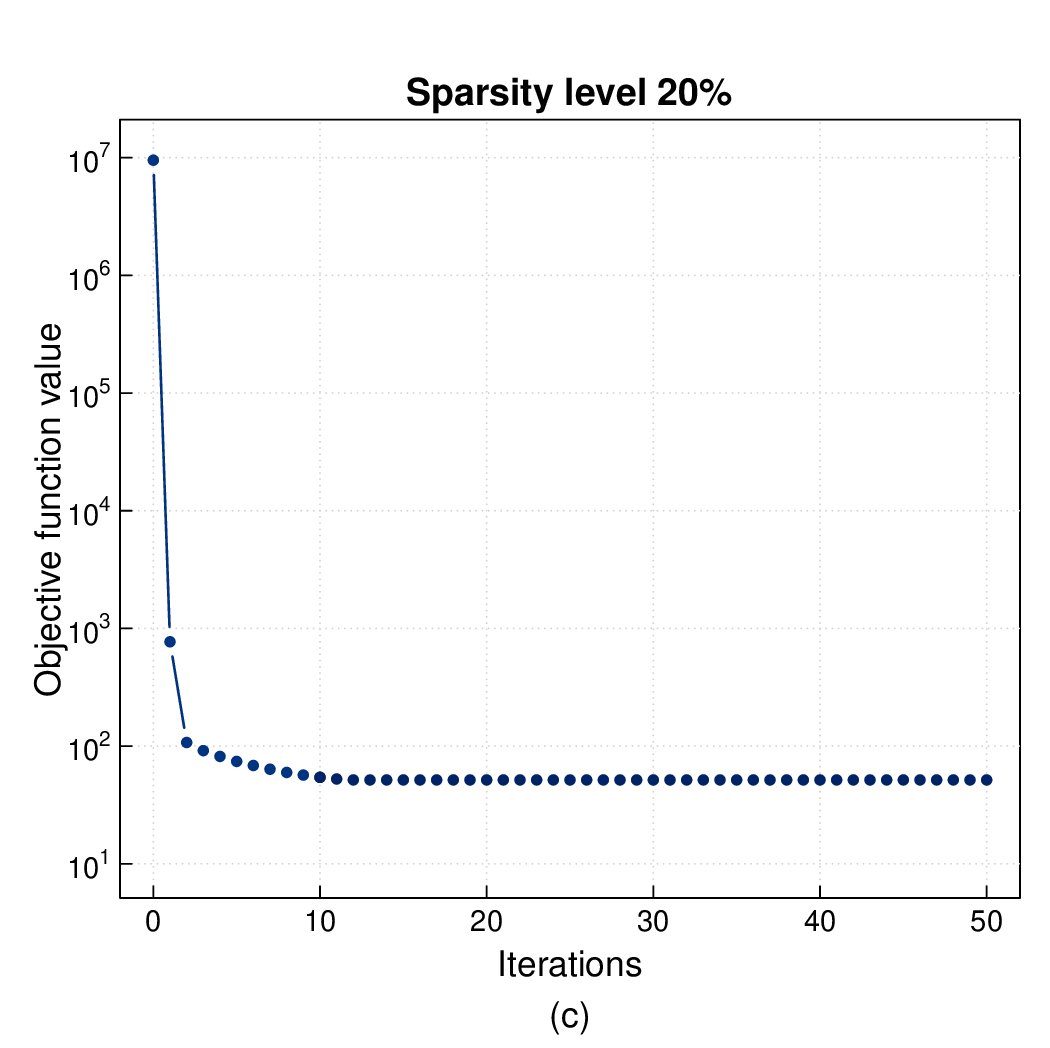} 
	\end{minipage}
	\hspace{1pt}
	\begin{minipage}[t]{0.38\linewidth}
		\centering
		\includegraphics[width=\linewidth]{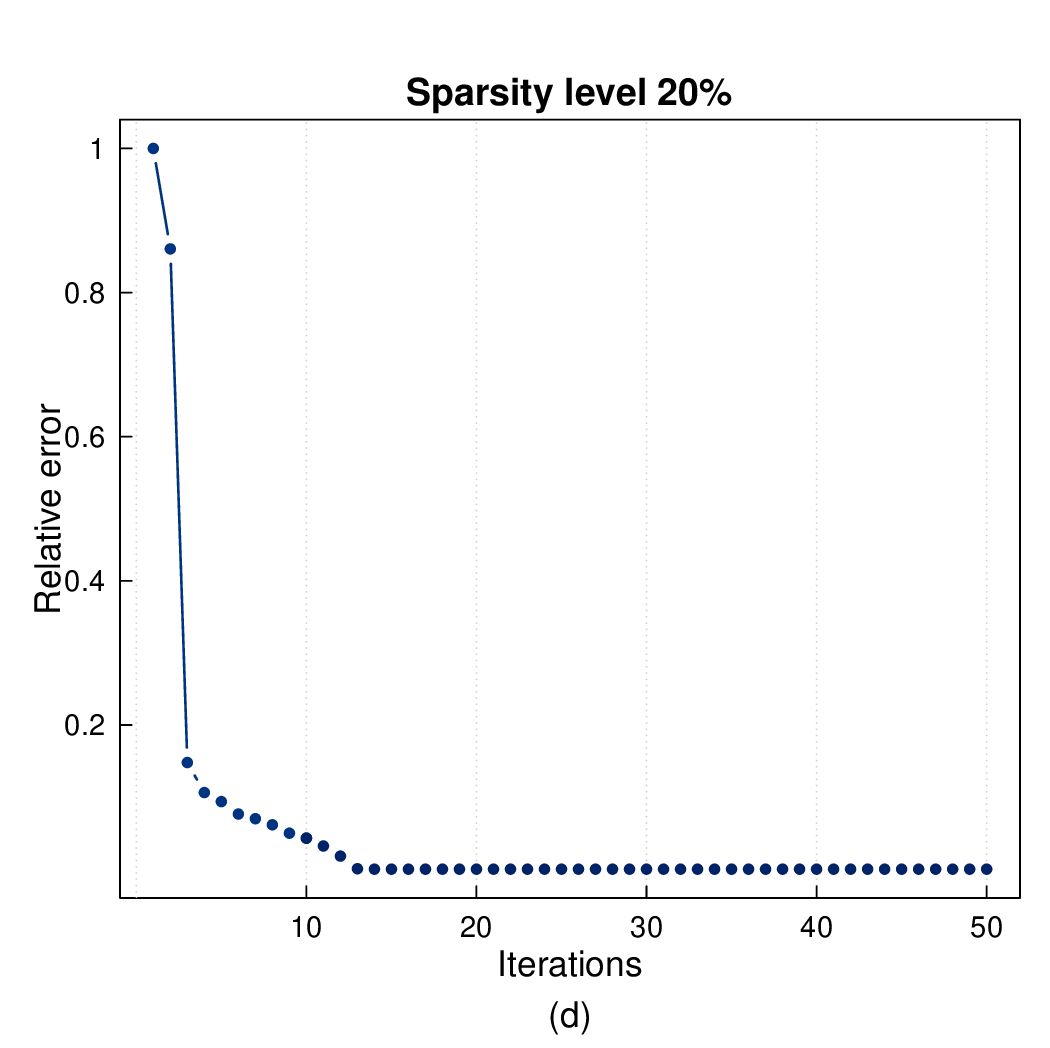} 
	\end{minipage}
	
	\caption{Convergence performance of the DS-IRLS-th algorithm under sparsity levels $1\%$ and $20\%$.}
	\label{fig:convergence}
\end{figure}

\subsection{Comparisons to other methods}
\hspace{2em} We conduct simulation studies to compare the performance of NS$^3$R with several competing approaches in high-dimensional settings, including the smooth Lasso \citep{Hebiri2011}, the spline Lasso \citep{Guo2016}, the square-root fused Lasso \citep{Jiang2021} (SRFL), the group square-root Lasso \citep{Bunea2013} (GSRL), and the structured smooth adjustment for square-root regularization \citep{Xie2020} (SSASR). We use the EBIC criterion from \cite{Chen2008} for parameter tuning to ensure a fair comparison across all models. For methods requiring two tuning parameters, we set $\lambda_2 = \theta\lambda_1$, with the multiplier $\theta$ tuned over the predefined grid $\{2^{-6}, 2^{-5}, \dots, 2^{-1}, 2^{0}\}$. To evaluate prediction accuracy, an additional test set of size $n_{\text{test}} = 10000$ is generated in each simulation, and model performance is assessed using the four metrics defined below:

\begin{itemize}
	\item \textbf{False discovery rate (FDR)}: The proportion of truly null features that are incorrectly selected among the selected features, defined as
	$$\text{FDR} = \frac{|\{i: \beta_i^* = 0, \hat{\beta}_i \neq 0\}|}{|\{i: \hat{\beta}_i \neq 0\}|},$$
	with the denominator set to 1 when no features are selected.
	\item \textbf{False negative rate (FNR)}: The proportion of truly nonzero features that are missed by the selection procedure, defined as
	$$\text{FNR} = \frac{|\{i: \beta_i^* \neq 0, \hat{\beta}_i = 0\}|}{|\{i: \beta_i^* \neq 0\}|}.$$
	\item \textbf{Estimation error (EE)}: The total squared error of the fitted regression entries, defined as
	$$\text{EE} = (\hat{\beta} - \beta^*)^\top (\hat{\beta} - \beta^*).$$
	\item \textbf{Prediction mean squared error (PMSE)}: The average squared prediction error on the independent test set, defined as
	$$\text{PMSE} = \frac{1}{n_{\text{test}}} \sum_{i=1}^{n_{\text{test}}} (y_i - \hat{y}_i)^2,$$
	where $n_{\text{test}}$ is the size of test set, $y_i$ is the true response, and $\hat{y}_i$ is the predicted response. 
\end{itemize}

In all simulations, we fix the sample size $n = 250$, dimension $p = 1000$, and group size 8 (resulting in 125 total groups). Among these groups, exactly 5 are designated as active (i.e., their associated entries are nonzero), with all entries in the remaining groups fixed at zero. The covariates $(X_1, X_2, \dots, X_p)$ are generated from a multivariate normal distribution $\mathcal{N}(0, \Sigma)$, and the response $y$ is a linear function of the covariates with additive zero-mean Gaussian noise. To enable a comprehensive comparison, we consider two signal structures, two correlation structures, and two noise levels across our simulation scenarios, with each setting detailed in the following subsections.

\subsubsection{Piecewise Constant Signals}
\hspace{2em} \textbf{Example 1 (Baseline)} 
We generate piecewise constant signals to emulate the canonical setting where regression entries are homogeneous within each group and heterogeneous across groups. Among the 5 pre-specified active groups, we assign a distinct constant magnitude to all entries within each group. To assess model performance in the weak-signal regime, we set entries in one active group to a low magnitude of 1, and the remaining four active groups to moderate magnitudes of 4, 5, 6, and 7. We then perturb these baseline entries with small zero-mean Gaussian noise of variance 0.01 to mimic real-world measurement noise. For this baseline setting, we fix the regression noise level at $\sigma = 1$ and use the identity correlation matrix $\Sigma = I$.

\textbf{Example 2}
The setup is identical to Example 1, except that the noise level is set to $\sigma = 5$.

\textbf{Example 3}
The setup is identical to Example 1, except that the correlation matrix is set to $\Sigma_{j,k} = 0.5^{|j-k|}$.

\textbf{Example 4}
The setup is identical to Example 3, except that the noise level is set to $\sigma = 5$.

\subsubsection{Complex Smooth Signals}
\hspace{2em} \textbf{Example 5 (Baseline)}
We adopt the same group structure as Example 1, but generate complex smooth signals for active groups (rather than piecewise constant values) to emulate the intricate signal structures common in real-world applications. Each active group is assigned one of five distinct smooth patterns: linear increase, linear decrease, sinusoidal trend, quadratic growth, or exponential decay. We randomly parameterize each active group, adding small zero-mean Gaussian noise of variance 0.01 to mimic real-world measurement noise. For this baseline setting, we fix the regression noise level at $\sigma = 1$ and use the identity correlation matrix $\Sigma = I$.

\textbf{Example 6}
The setup is identical to Example 5, except that the noise level is set to $\sigma = 5$.

\textbf{Example 7}
The setup is identical to Example 5, except that the correlation matrix is set to $\Sigma_{j,k} = 0.5^{|j-k|}$.

\textbf{Example 8}
The setup is identical to Example 7, except that the noise level is set to $\sigma = 5$.

\subsubsection{Numerical Results}
\hspace{2em} The proposed NS$^3$R consistently outperforms all competing methods in variable selection, parameter estimation, and out-of-sample prediction across all simulation scenarios, and shows the highest stability by achieving the smallest standard deviations across replications. All the results for Examples 1–8 are summarized in \autoref{tab:sim_results}, with values averaged over 100 independent replications and standard deviations reported in parentheses. As shown in \autoref{tab:sim_results}, SSASR ranks second in all cases, while smooth Lasso and spline Lasso yield the worst performance in most cases. For Examples 1–4 with piecewise constant signals, SRFL underperforms the two convex group-regularized methods, GSRL and SSASR, across all metrics under low noise levels. When the noise level increases to $\sigma=5$, SRFL achieves comparable estimation and prediction performance to SSASR, owing to its inherent adaptability to piecewise constant structures. This comparable performance comes at the cost of lower variable selection accuracy, as it produces far denser models with substantially higher false discovery rates, and fails to effectively recover the true group-sparse structure.

Comparing Examples 1–2, 5–6 (independent covariates) with Examples 3–4, 7–8 (correlated covariates), the advantage of NS$^3$R is more pronounced for independent covariates. In contrast, all competing methods face much more severe performance degradation relative to the correlated design, while NS$^3$R maintains stable performance throughout. When the noise level increases from $\sigma=1$ to $\sigma=5$, all methods degrade significantly as noise increases, whereas NS$^3$R declines only mildly and maintains a clear advantage over all other approaches. In addition, NS$^3$R consistently achieves exact recovery for low-magnitude signals, whereas all other methods cannot fully recover such signals under high noise levels. NS$^3$R thus provides the best overall performance across all scenarios.

\begin{table}[!htp]
	\centering
	\caption{Simulation results across all examples, evaluating FDR, FNR, EE, and PMSE across 100 replications.}
	\label{tab:sim_results}
	\resizebox{0.95\textwidth}{!}{%
		\begin{tabular}{lcccccc}
			\hline
			& Smooth lasso & Spline lasso & SRFL & GSRL & SSASR & NS$^3$R \\
			\hline
			\multicolumn{7}{c}{Example 1} \\
			FDR & 47.00(0.02) & 47.00(0.03) & 5.54(0.05) & 0.00(0.00) & 0.00(0.00) & 0.00(0.00) \\
			FNR & 1.28(0.02) & 1.45(0.02) & 0.00(0.00) & 0.00(0.00) & 10.20(0.10) & 0.00(0.00) \\
			EE & 30.56(3.14) & 30.71(3.32) & 3.43(1.72) & 2.64(0.92) & 1.51(0.52) & 0.20(0.05) \\
			PMSE & 31.77(3.16) & 31.93(3.34) & 4.44(1.73) & 3.68(0.93) & 2.53(0.52) & 1.20(0.05) \\
			\multicolumn{7}{c}{Example 2} \\
			FDR & 71.26(0.02) & 71.49(0.02) & 2.91(0.04) & 0.17(0.02) & 0.00(0.00) & 0.00(0.00) \\
			FNR & 8.50(0.03) & 8.45(0.03) & 8.15(0.09) & 11.00(0.10) & 10.20(0.10) & 0.00(0.00) \\
			EE & 68.90(8.85) & 67.06(8.45) & 48.61(29.81) & 63.43(22.12) & 47.58(27.73) & 4.70(3.34) \\
			PMSE & 94.00(8.92) & 92.20(8.54) & 73.96(30.05) & 88.76(22.30) & 72.88(27.92) & 29.77(3.29) \\
			\multicolumn{7}{c}{Example 3} \\
			FDR & 14.87(0.05) & 15.02(0.05) & 11.36(0.09) & 0.00(0.00) & 0.00(0.00) & 0.00(0.00) \\
			FNR & 0.00(0.00) & 0.00(0.00) & 0.00(0.00) & 0.00(0.00) & 0.00(0.00) & 0.00(0.00) \\
			EE & 1.34(0.20) & 1.36(0.21) & 1.10(0.19) & 0.60(0.11) & 0.43(0.08) & 0.30(0.08) \\
			PMSE & 2.62(0.19) & 2.64(0.19) & 2.07(0.31) & 1.73(0.13) & 1.57(0.11) & 1.19(0.05) \\
			\multicolumn{7}{c}{Example 4} \\
			FDR & 15.35(0.05) & 15.34(0.05) & 13.93(0.08) & 0.00(0.00) & 0.00(0.00) & 0.00(0.00) \\
			FNR & 0.95(0.01) & 0.93(0.01) & 0.00(0.00) & 0.00(0.00) & 0.00(0.00) & 0.00(0.00) \\
			EE & 25.76(5.94) & 25.12(5.44) & 9.01(2.01) & 10.10(2.01) & 8.38(1.70) & 6.40(1.85) \\
			PMSE & 58.45(7.78) & 57.97(7.26) & 41.61(3.95) & 40.29(3.25) & 39.21(2.86) & 29.35(1.20) \\
			\multicolumn{7}{c}{Example 5} \\
			FDR & 24.95(0.03) & 25.30(0.03) & 4.54(0.05) & 0.00(0.00) & 0.00(0.00) & 0.00(0.00) \\
			FNR & 0.68(0.01) & 0.58(0.01) & 0.63(0.01) & 0.00(0.00) & 0.00(0.00) & 0.00(0.00) \\
			EE & 14.98(0.61) & 14.89(0.61) & 2.71(1.57) & 1.15(0.27) & 0.92(0.26) & 0.19(0.05) \\
			PMSE & 15.78(0.60) & 15.68(0.60) & 3.76(1.59) & 2.15(0.26) & 1.92(0.25) & 1.20(0.05) \\
			\multicolumn{7}{c}{Example 6} \\
			FDR & 53.39(0.08) & 53.25(0.08) & 2.69(0.03) & 0.33(0.02) & 0.00(0.00) & 0.00(0.00) \\
			FNR & 6.30(0.02) & 6.05(0.02) & 4.33(0.02) & 0.00(0.00) & 0.00(0.00) & 0.00(0.00) \\
			EE & 57.75(14.59) & 55.40(12.93) & 24.83(7.48) & 22.58(5.61) & 15.43(4.34) & 4.58(1.74) \\
			PMSE & 82.01(14.53) & 79.63(12.83) & 50.20(7.64) & 47.49(5.59) & 40.47(4.32) & 29.60(1.77) \\
			\multicolumn{7}{c}{Example 7} \\
			FDR & 19.60(0.07) & 19.79(0.06) & 9.48(0.04) & 0.00(0.00) & 0.00(0.00) & 0.00(0.00) \\
			FNR & 0.83(0.01) & 0.73(0.01) & 2.50(0.00) & 0.00(0.00) & 0.00(0.00) & 0.00(0.00) \\
			EE & 1.94(0.39) & 1.87(0.40) & 1.72(0.50) & 0.76(0.16) & 0.49(0.13) & 0.28(0.08) \\
			PMSE & 3.11(0.48) & 3.04(0.46) & 3.05(0.93) & 1.93(0.19) & 1.65(0.18) & 1.18(0.05) \\
			\multicolumn{7}{c}{Example 8} \\
			FDR & 18.57(0.07) & 18.16(0.06) & 10.25(0.05) & 0.00(0.00) & 0.00(0.00) & 0.00(0.00) \\
			FNR & 4.80(0.02) & 4.78(0.02) & 3.50(0.02) & 0.00(0.00) & 0.00(0.00) & 0.00(0.00) \\
			EE & 27.63(6.07) & 26.67(6.07) & 18.29(7.63) & 10.75(1.86) & 7.19(1.73) & 5.68(1.44) \\
			PMSE & 59.89(7.86) & 59.30(7.85) & 59.45(16.60) & 41.62(3.27) & 37.62(3.39) & 29.14(1.11) \\
			\hline
		\end{tabular}
	}%
\end{table}

\section{An application to gene expression microarray analysis}\label{sec:application}
\hspace{2em} We apply our analytical framework to a gene expression dataset previously studied in \citep{Reppe2010}. This dataset has been deposited in the European Bioinformatics Institute (EMBL-EBI) Array-Express repository, ID: E-MEXP-1618. It contains expression microarray profiles of 22815 probe sets and bone biopsies from 84 women, with raw data normalized via the RMA method as detailed in \citep{Reppe2010}. For our analysis, bone mineral density (BMD) is the outcome of interest, quantified as hip T-scores from the participants and standardized to have zero mean and unit standard deviation. Gene expression levels are analyzed on a log2 scale. To incorporate domain-specific prior knowledge of bone metabolism into the model, we group all genes according to two sources of biological evidence. These include functionally validated bone metabolism-related genes from published studies \citep{Tharmaratnam2016, Zeng2021}, and well-characterized bone metabolic pathways including bone resorption, osteoblast differentiation and Wnt signaling cascades. We discard only the group of unannotated genes while retaining all other biologically informed grouping information, and sort all genes by their marginal importance relative to the bone density outcome. This strategy integrates prior biological insights into the leaning process, and improves the biological interpretability of the feature sets identified by our penalized regression model.

To illustrate the advantage of incorporating the structured smoothness prior, we compare our proposed NS$ $R approach with two competing methods. These are the square-root fused Lasso (srfl) \citep{Jiang2021} and the external information tuned (xtune) Lasso \citep{Zeng2021}. For xtune Lasso, we use the \textit{xtune} and \textit{glmnet} \citep{Friedman2010} packages in R, and construct the external information matrix $Z$ based on the predefined gene grouping structure. Each column of $Z$ is a binary indicator for a specific group, with entries set to 1 for genes in the corresponding group and 0 otherwise. Regularization parameters for all methods are selected via the SCV-BIC criterion \citep{She2012} to ensure consistency and fairness of the comparison. We randomly split the full dataset into a training set containing 80\% of the 84 total observations (68 samples) and a held-out test set with the remaining 20\% (18 samples). We fit all methods on the training data, and evaluate their predictive performance on the test set. This random split procedure is repeated for 100 independent simulations. \autoref{tab:gene_selection} reports the mean number of selected genes and mean PMSE, with corresponding standard deviations in parentheses. Across all simulations, NS$^3$R consistently selects a substantially smaller feature set than both xtune Lasso and srfl, with a markedly lower standard deviation that demonstrates superior stability in feature selection. Correspondingly, NS$^3$R achieves the lowest mean PMSE and outperforms both competing methods. This indicates that the sparse, stable feature sets identified by NS$^3$R yield better predictive performance, while the larger, inconsistent gene sets selected by xtune Lasso and srfl f ail to deliver comparable improvements in prediction accuracy.

Notably, genes frequently selected by NS$^3$R with a selection frequency of $\geq 30\%$ across simulations are well-validated regulators of bone metabolism, and include core components of the Wnt signaling pathway such as DKK1, SOST, and WIF1. These core genes are robustly identified across all 100 simulations, confirming that NS$^3$R can stably and reproducibly identify biologically meaningful signals relevant to bone metabolism. In comparison, xtune Lasso and srfl select a larger number of genes with lower selection frequencies. Most of the low-frequency genes selected by these competing methods have no validated functional link to bone metabolism, showing that these methods are more likely to capture false-positive features in high-dimensional data.

In summary, these results demonstrate that NS$^3$R achieves a favorable balance among sparse feature selection, predictive performance, and biological interpretability in high-dimensional genomic data analysis. By incorporating a structured smoothness prior adapted to the inherent grouping structure of genes, NS$^3$R provides a robust and biologically meaningful approach for feature selection in high-dimensional gene expression studies.

\begin{table}[!htp]
	\centering
	\caption{The frequently selected genes, mean number of selected genes and mean PMSE across 100 replications on the test data.}
	\label{tab:gene_selection}
	\resizebox{0.95\textwidth}{!}{%
		\begin{tabularx}{0.95\textwidth}{@{}p{30pt}Xp{65pt}p{60pt}@{}}
			\toprule
			Method & Frequently selected genes ($\geq 30\%$ in 100 simulations) & Number of selected genes & PMSE \\
			\midrule
			srfl & ACSL3 (100\%), DKK1 (98\%), C1orf61 (78\%), IKBKG (70\%), GATA1 (69\%), HSD17B4 (69\%), RRAS2 (68\%), CTNNBIP1 (67\%), FBN2 (67\%), GNAS (66\%), MIR18A (66\%), WIF1 (57\%), WWOX (57\%), SOST (54\%), IFITM1 (47\%), SLC38A10 (47\%), IL17A (43\%), PTHLH (42\%), CDH11 (39\%), CARTPT (35\%), SLC10A7 (34\%), SNRNP200 (34\%), TPH1 (34\%), NFATC2 (33\%), SMAD4 (32\%), HIF1A (31\%) & 21.75 (10.30) & 0.75 (0.27) \\
			\midrule
			xtune & ACSL3 (98\%), DKK1 (96\%), WIF1 (96\%), ELTD1 (93\%), C1orf61 (88\%), SOST (62\%), SOX4 (58\%), KDM4A (56\%), PDGFA (32\%) & 17.89 (43.63) & 0.68 (0.27) \\
			\midrule
			NS$^3$R & ACSL3 (100\%), DKK1 (100\%), ELTD1 (100\%), SFRP1 (100\%), SFRP4 (100\%), SOST (100\%), WIF1 (100\%) & 8.93 (2.41) & 0.65 (0.24) \\
			\bottomrule
		\end{tabularx}
	}%
\end{table}

\section{Conclusions and future works}\label{sec:conclusion}
\hspace{2em} In this work, we develop the nonconvex structured smooth adjustment for square-root regularization framework (NS$^3$R) for high-dimensional structured sparse learning. It couples a joint nonconvex $\ell_{2,q}$ penalty with a $\sigma$-free square-root loss to simultaneously encode group sparsity and structured smoothness priors. This design mitigates the inherent over-shrinkage estimation bias from convex regularizers, and the dependency of tuning parameter selection on the unknown noise level $\sigma$, which greatly facilitates the work of parameter tuning. We establish rigorous non-asymptotic estimation and prediction error bounds for the NS$^3$R estimator under a mild regularity condition, providing reliable theoretical guarantees for its application in $p \gg n$ settings. For implementation, we develop an efficient double-screened IRLS with thresholding (DS-IRLS-th) algorithm for the resulting nonconvex, nonsmooth and non-Lipschitz optimization problem, with a rigorous proof of its global convergence under mild initial value conditions. Its nested double-screening design implements a support-shrinking strategy, which guarantees a rapid reduction of computational complexity across iterations. Numerical experiments show that the algorithm converges within few iterations, with a runtime below 1 second for moderate-sized datasets. Comprehensive experiments on synthetic and real-world gene expression datasets confirm that NS$^3$R delivers consistently better finite-sample performance than state-of-the-art regularization methods.

In our future work, we will extend the proposed framework to accommodate more flexible structural priors and complex data settings. We will generalize the structured sparsity-inducing penalties to enable bi-level selection, enhancing its robustness against mis-specified grouping structures. We will also explore a broader class of nonconvex penalties, including group SCAD, group MCP and capped folded concave penalties, to achieve more accurate signal recovery in high-dimensional regimes. Further, we will extend the framework to graph-structured data to model label dependency across nodes via oriented incidence matrices, beyond simple piecewise smoothness assumptions. These extensions will provide a unified tool for high-dimensional data modeling in bioinformatics, signal processing, financial econometrics, image processing and other related application areas.
\section*{Acknowledgments}
This work is supported by the National Natural Science Foundation of China [Grant No. 12401345].

\appendix 

\renewcommand{\thesection}{\Alph{section}}
\titleformat{\section}{\bfseries\fontsize{14}{16}\selectfont}{Appendix \thesection}{1em}{}

\renewcommand{\thelem}{\thesection.\arabic{lem}}
\renewcommand{\thedefinition}{\thesection.\arabic{definition}}
\renewcommand{\thepro}{\thesection.\arabic{pro}}
\renewcommand{\thermk}{\thesection.\arabic{rmk}}

\setcounter{section}{0} 
\setcounter{definition}{0} 
\setcounter{lem}{0} 
\setcounter{pro}{0} 
\setcounter{rmk}{0} 
\setcounter{equation}{0} 

\section{Preliminary Definitions and Supporting Lemmas}\label{app:prelim_defs}

\subsection{Subdifferential}\label{app:subdiff}
\hspace{2em} In this section, we recall the basic definition of the subdifferential.

\begin{definition}[Subdifferentials, \citep{Rockafellar1998}]
	\label{def:subdiff}
	Let $\phi: \mathbb{R}^p \to (-\infty, +\infty]$ be a proper and lower semicontinuous function.
	\begin{enumerate}[label=(\roman*), leftmargin=1.5em, topsep=0pt, partopsep=0pt, align=left]
		\item The regular subdifferential of $\phi$ at $x \in \text{dom}\,\phi = \{x \in \mathbb{R}^p: \phi(x) < +\infty\}$ is defined as
		\[
		\hat{\partial}\phi(x) := \left\{ u \in \mathbb{R}^p : \lim_{\substack{y \to x \\ y \neq x}} \inf \frac{\phi(y) - \phi(x) - \langle u, y - x \rangle}{\|y - x\|_2} \geq 0 \right\};
		\]
		\item The (limiting) subdifferential of $\phi$ at $x \in \text{dom}\,\phi$ is defined as
		\[
		\partial\phi(x) := \left\{ u \in \mathbb{R}^p : \exists x^k \to x,\ \phi(x^k) \to \phi(x) \text{ and } \hat{\partial}\phi(x^k) \ni u^k \to u,\ \text{as } k \to \infty \right\}.
		\]
	\end{enumerate}
\end{definition}
\subsection{KL Property and Semi-Algebraic Functions}
\label{app:kl_property} 

\hspace{2em} The Kurdyka-\L{}ojasiewicz (KL) property plays a fundamental role in the convergence analysis of many optimization algorithms \citep{Attouch2010, Bolte2014}. A broad class of functions, including semi-algebraic, subanalytic, and log-exp functions, are KL functions. The definition of the KL property is given below.

\begin{definition}[KL property, \cite{Attouch2010}]
	\label{def:kl_property}
	The function $\phi: \mathbb{R}^p \to (-\infty, +\infty]$ is said to have the Kurdyka-\L{}ojasiewicz (KL) property at $\bar{x} \in \text{dom}\,\partial\phi := \{x \in \mathbb{R}^p: \partial\phi(x) \neq \emptyset\}$ if there exists $\eta \in (0, +\infty]$, a neighborhood $U$ of $\bar{x}$, and a continuous concave function $\psi: [0, \eta) \to \mathbb{R}_+$ such that
	\begin{enumerate}[label=(\roman*), leftmargin=1.5em, topsep=0pt, partopsep=0pt, align=left]
		\item[(i)] $\psi(0) = 0$;
		\item[(ii)] $\psi$ is continuously differentiable on $(0, \eta)$;
		\item[(iii)] for all $t \in (0, \eta)$, $\psi'(t) > 0$;
		\item[(iv)] for all $x \in U \cap \{\phi(\bar{x}) < \phi(x) < \phi(\bar{x}) + \eta\}$, the KL inequality holds:
		\[
		\psi'\big(\phi(x) - \phi(\bar{x})\big)\operatorname{dist}\big(0, \partial\phi(x)\big) \geq 1.
		\]
	\end{enumerate}
If $\phi$ satisfies the KL property at each point of $\text{dom}\,\partial\phi$, $\phi$ is called a KL function.
\end{definition}

Semi-algebraic functions form an important subclass of KL functions, and their definition is given as follows.
\begin{definition}[Semi-algebraic sets and functions, \citep{Bolte2014}]
	\label{def:semi_algebraic}
	\leavevmode\\\vspace{-\baselineskip} 
	\begin{enumerate}[label=(\roman*), leftmargin=1.5em, topsep=0pt, partopsep=0pt, align=left]
		\item A subset $\mathcal{H}$ of $\mathbb{R}^p$ is a real semi-algebraic set if there exists a finite number of real polynomial functions $g_{ij}, h_{ij}: \mathbb{R}^p \to \mathbb{R}$ such that
		\[
		\mathcal{H} = \bigcup_j \bigcap_i \left\{ u \in \mathbb{R}^p : g_{ij}(u) = 0 \text{ and } h_{ij}(u) < 0 \right\}.
		\]
		\item A function $\phi: \mathbb{R}^p \to (-\infty, +\infty]$ is called semi-algebraic if its graph
		\[
		\left\{ (u, t) \in \mathbb{R}^{p+1} : \phi(u) = t \right\}
		\]
	\end{enumerate}
is a semi-algebraic subset of $\mathbb{R}^{p+1}$.
\end{definition}

The objective function $F(\beta)$ defined in \eqref{eq:opt17} is semi-algebraic. Indeed, the square-root loss $\|y - X\beta\|_2$ is semi-algebraic (see \cite[Example 2]{Bolte2014}), and for a rational exponent $q \in (0,1)$, the functions $\|\beta_j\|_2^q$ and $\|D_j \beta_j\|_2^q$ are semi-algebraic. Since finite sums and compositions of semi-algebraic functions are semi-algebraic, $F$ is semi-algebraic and thus a KL function (see \cite[Theorem 3]{Bolte2014}). This ensures that the KL property holds at every point of interest in our convergence analysis.

\subsection{Supporting Preliminary Lemmas}
\label{app:supporting_lemmas} 
\hspace{2em} In order to prove our main results, it is necessary to present some technical lemmas. The following two lemmas are the key inequalities used in the proof.

\begin{lem}
	\label{lem:key_inequalities}
	Let $x, y \in \mathbb{R}^p$. Then the following relations hold:
	\begin{enumerate}[
		label=(\roman*), 
		leftmargin=2.5em, 
		labelwidth=1.2em, 
		labelsep=0.5em, 
		align=left, 
		topsep=0pt, 
		partopsep=0pt, 
		itemsep=0.8em,
		parsep=0pt
		]
		\item $\|x\|_{q_2} \leq \|x\|_{q_1} \leq p^{\frac{1}{q_1} - \frac{1}{q_2}} \|x\|_{q_2}, \text{ for any } 0 < q_1 \leq q_2 < +\infty.$ \hfill \textup{(A.1)}
		\label{eq:A.1}
		
		\item $\|x\|_{2,q}^q - \|y\|_{2,q}^q \leq \|x - y\|_{2,q}^q, \text{ for any } 0 < q \leq 1.$ \hfill \textup{(A.2)}
		\label{eq:A.2}
		
		\item Given $a > 0$, if $\|x\|_2 + \|y\|_2 \leq M$, then
		\[
		\|x\|_2^q + \|y\|_2^q \leq 2^{1 - q} M^q, \text{ for any } 0 < q \leq 1.
		\tag{A.3}
		\label{eq:A.3}
		\]
	\end{enumerate}
\end{lem}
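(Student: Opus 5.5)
The statement to prove is Lemma~\ref{lem:key_inequalities}. It has three elementary parts, and I will prove them in order.

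\textbf{Part (i).} This is the classical comparison of $\ell_q$ quasi-norms.

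For the left inequality, the case $x=0$ is trivial. Otherwise normalize so that $\|x\|_{q_2}=1$. Then every $|x_i|\le 1$, so $|x_i|^{q_1}\ge |x_i|^{q_2}$ whenever $q_1\le q_2$. Summing gives $\|x\|_{q_1}^{q_1}\ge 1$, hence $\|x\|_{q_1}\ge 1=\|x\|_{q_2}$. Homogeneity of both quasi-norms then gives the general case.

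For the right inequality, apply H\"older's inequality to $\sum_i |x_i|^{q_1}\cdot 1$ with exponents $q_2/q_1$ and its conjugate. This gives
\[
\sum_i |x_i|^{q_1}\le \Big(\sum_i |x_i|^{q_2}\Big)^{q_1/q_2} p^{1-q_1/q_2}.
\]
Taking the $1/q_1$ power yields $\|x\|_{q_1}\le p^{1/q_1-1/q_2}\|x\|_{q_2}$.

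\textbf{Part (ii).} The key scalar fact is subadditivity of $t\mapsto t^q$ on $[0,\infty)$ for $0<q\le1$, namely $(a+b)^q\le a^q+b^q$. It follows from concavity together with $0^q=0$. Equivalently, for $a+b>0$ one has
\[
\Big(\tfrac{a}{a+b}\Big)^q+\Big(\tfrac{b}{a+b}\Big)^q\ge \tfrac{a}{a+b}+\tfrac{b}{a+b}=1.
\]
Apply this groupwise. The triangle inequality in $\ell_2$ gives $\|x_j\|_2\le \|x_j-y_j\|_2+\|y_j\|_2$. Since $t\mapsto t^q$ is monotone, subadditivity yields
\[
\|x_j\|_2^q\le \|x_j-y_j\|_2^q+\|y_j\|_2^q .
\]
Summing over the groups $j=1,\dots,J$ gives $\|x\|_{2,q}^q-\|y\|_{2,q}^q\le\|x-y\|_{2,q}^q$. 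The same argument works verbatim with the weights $T_j^{q/2}$ inserted, since they are nonnegative.

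\textbf{Part (iii).} Set $a=\|x\|_2$ and $b=\|y\|_2$. By concavity of $t^q$ (Jensen's inequality, or the power-mean inequality),
\[
\frac{a^q+b^q}{2}\le\Big(\frac{a+b}{2}\Big)^q .
\]
Hence $a^q+b^q\le 2^{1-q}(a+b)^q\le 2^{1-q}M^q$, using monotonicity of $t^q$ and $a+b\le M$. This is also the $p=2$, $q_2=1$, $q_1=q$ instance of the right inequality in (i), applied to the vector $(a,b)$.

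The proof has no real obstacle. The only point needing care is part (ii), where the scalar subadditivity $(a+b)^q\le a^q+b^q$ must be combined with the monotonicity of $t^q$ so that the triangle inequality can be passed through the power. The normalization argument in (i) also needs the trivial case $x=0$ handled separately.
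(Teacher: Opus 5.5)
Your proof is correct and follows the paper's route: H\"older's inequality with exponent $q_2/q_1$ for the right side of (i), and concavity of $t\mapsto t^q$ (Jensen) for (iii). The differences are cosmetic: for the left side of (i) you prove the monotonicity of $\ell_q$ quasi-norms that the paper takes as known, and for (ii) you give the standard self-contained argument (groupwise triangle inequality plus subadditivity $(a+b)^q\le a^q+b^q$) where the paper simply cites Lemma~2 of Hu et al.\ (2017).
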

\begin{proof}
	The left-hand side of (i) follows from the monotonicity of the $\ell_q$ norms. For the right-hand side of (i), let $r = q_2/q_1 \geq 1$ and let $r' = r/(r - 1)$ be its H\"older conjugate. Applying H\"older's inequality to the sequences $(|x_i|^{q_1})_{i=1}^p$ and the constant sequence $(1)_{i=1}^p$ gives
	\[
	\sum_{i=1}^p |x_i|^{q_1} \leq \left(\sum_{i=1}^p |x_i|^{q_1 r}\right)^{1/r} \left(\sum_{i=1}^p 1^{r'}\right)^{1/r'} = \left(\sum_{i=1}^p |x_i|^{q_2}\right)^{q_1/q_2} p^{1 - q_1/q_2}. 
	\]
	Raising both sides to the power $1/q_1$ yields
	\[
	\|x\|_{q_1} = \left(\sum_{i=1}^p |x_i|^{q_1}\right)^{1/q_1} \leq \left(\sum_{i=1}^p |x_i|^{q_2}\right)^{1/q_2} p^{\frac{1}{q_1} - \frac{1}{q_2}} = p^{\frac{1}{q_1} - \frac{1}{q_2}} \|x\|_{q_2}.
	\tag{A.4}
	\label[equation]{A.4}
	\]
	
	The second inequality is a direct application of Lemma~2 in \citep{Hu2017}. For the third inequality, when $q = 1$, the result follows immediately from the triangle inequality. When $0 < q < 1$, let $u = \|x\|_2$ and $v = \|y\|_2$. Then $u, v \geq 0$ and $u + v \leq M$. Since the function $t \mapsto t^q$ is concave on $[0, \infty)$, by Jensen's inequality we have
	\[
	\left(\frac{u + v}{2}\right)^q \geq \frac{u^q + v^q}{2}. 
	\tag{A.5}
	\label[equation]{A.5}
	\]
	Rearranging this inequality yields
	\[
	u^q + v^q \leq 2^{1 - q} (u + v)^q.
	\]
	Since $u + v \leq M$ and $s \mapsto s^q$ is strictly increasing on $(0, \infty)$, we conclude
	\[
	\|x\|_2^q + \|y\|_2^q = u^q + v^q \leq 2^{1 - q} M^q.
	\tag{A.6}
	\label[equation]{A.6}
	\]
This completes the proof.
\end{proof}

\begin{lem}
	\label{lem:strong_convexity_ineq}
	Given $0 < q < 1$, for all $x, y \in \mathbb{R}^p$ with $x \neq 0$,
	\[
	\|x\|_2^q - \|y\|_2^q - q\|x\|_2^{q-2} (x - y)^\top y \geq \frac{q}{2} \|x\|_2^{q-2} \|x - y\|_2^2.
	\tag{A.7}
	\label{eq:A.7}
	\]
\end{lem}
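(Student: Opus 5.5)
The plan is to expand both sides and observe that the inner product $x^\top y$ cancels. What remains is an inequality involving only $a:=\|x\|_2>0$ and $b:=\|y\|_2\ge 0$, which is a one-variable calculus fact.

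\emph{Step 1: expand.} Write $(x-y)^\top y = x^\top y - b^2$ and $\|x-y\|_2^2 = a^2 - 2x^\top y + b^2$. Subtracting the right-hand side of \eqref{eq:A.7} from its left-hand side gives
\[
a^q - b^q - q a^{q-2}x^\top y + q a^{q-2} b^2 - \tfrac{q}{2}a^{q} + q a^{q-2}x^\top y - \tfrac{q}{2}a^{q-2}b^2 .
\]
The two $x^\top y$ terms cancel exactly, so the difference equals
\[
\Big(1-\tfrac{q}{2}\Big)a^q - b^q + \tfrac{q}{2}a^{q-2}b^2 .
\]
I will double-check this cancellation carefully, since the whole argument rests on it. Once it holds, no Cauchy--Schwarz step or case analysis on the direction of $y$ is needed.

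\emph{Step 2: reduce to one variable.} Factor out $a^q>0$ and set $t=b/a\ge 0$. It then suffices to show
\[
g(t):=1-\tfrac{q}{2}-t^q+\tfrac{q}{2}t^2\ \ge\ 0 \qquad \text{for all } t\ge 0 .
\]

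\emph{Step 3: minimize $g$.} Differentiate for $t>0$:
\[
g'(t)=q\,(t-t^{q-1})=q\,t^{q-1}(t^{2-q}-1).
\]
Since $2-q>0$, this is negative on $(0,1)$ and positive on $(1,\infty)$. Hence the minimum of $g$ on $[0,\infty)$ is attained at $t=1$, where $g(1)=1-\tfrac q2-1+\tfrac q2=0$. At the endpoint, $g(0)=1-\tfrac q2>0$, and $g$ is continuous at $0$. Therefore $g\ge 0$ on $[0,\infty)$, which proves \eqref{eq:A.7}.

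Equality holds exactly when $\|y\|_2=\|x\|_2$; this is consistent with the case $y=x$.

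\emph{Main obstacle.} There is no real obstacle beyond Step 1. The only thing to watch is the bookkeeping of signs in the expansion, so that the cross terms indeed cancel rather than double.
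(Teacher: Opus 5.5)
Your proof is correct. The cross terms $\pm q\|x\|_2^{q-2}x^\top y$ do cancel, leaving $\|x\|_2^q\,g(\|y\|_2/\|x\|_2)$ with $g(t)=1-\frac q2-t^q+\frac q2t^2$. Your sign analysis of $g'(t)=q(t-t^{q-1})$ then correctly gives $g\ge 0$, with equality only at $t=1$.

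The paper gives no argument for this lemma; it simply imports the inequality as Lemma 6.3 of Feng et al.\ (2020). Your derivation therefore makes the appendix self-contained.

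After the substitution $u=t^2$, your one-variable inequality is exactly the tangent-line inequality $u^{q/2}\le 1+\frac q2(u-1)$ for the concave map $u\mapsto u^{q/2}$ at $u=1$. This gives an equivalent route that avoids the expansion:
\begin{itemize}
\item Concavity of $s\mapsto s^{q/2}$ at $s=\|x\|_2^2$ yields $\|x\|_2^q-\|y\|_2^q\ge\frac q2\|x\|_2^{q-2}\bigl(\|x\|_2^2-\|y\|_2^2\bigr)$.
\item The identity $\|x\|_2^2-\|y\|_2^2=\|x-y\|_2^2+2(x-y)^\top y$ then turns this into (A.7) in one line.
\end{itemize}

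The concavity version is shorter and shows where the constant $\frac q2$ comes from. Your version has the advantage of exhibiting the exact slack. That slack depends only on $\|y\|_2/\|x\|_2$, so it vanishes for every $y$ with $\|y\|_2=\|x\|_2$ (e.g.\ $y=-x$), not only for $y=x$.
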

This lemma comes from Lemma 6.3 in \citep{Feng2020}.
\begin{lem}
	\label{eventA}
For a given $\delta_1 \in (0,1)$, let $t = \sqrt{\frac{4\ln(1/\delta_1)}{n}} + \frac{4\ln(1/\delta_1)}{n}$ and define
\[
\mathscr{A} := \left\{ \frac{\|\varepsilon\|_2}{\sqrt{n}} \leq \sigma\sqrt{1 + t} \right\}.
\tag{A.8}
\label{eq:A.8}
\]

Then,
\[
\mathbb{P}(\mathscr{A}) \geq 1 - \delta_1.
\tag{A.9}
\label{eq:A.9}
\]
\end{lem}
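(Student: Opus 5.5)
The plan is to reduce the statement to a standard chi-square tail bound. Under the Gaussian noise assumption, $\|\varepsilon\|_2^2/\sigma^2 \sim \chi^2_n$. The event $\mathscr{A}$ is therefore equivalent to
\[
\frac{\|\varepsilon\|_2^2}{\sigma^2} \leq n(1+t) = n + \sqrt{4n\ln(1/\delta_1)} + 4\ln(1/\delta_1).
\]
So it suffices to show that a $\chi^2_n$ variable exceeds this level with probability at most $\delta_1$.

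I will invoke the Laurent--Massart inequality. For $Z\sim\chi^2_n$ and any $x>0$, it gives
\[
\mathbb{P}\bigl(Z - n \geq 2\sqrt{nx} + 2x\bigr) \leq e^{-x}.
\]
Choosing $x=\ln(1/\delta_1)$ gives the threshold $2\sqrt{nx}+2x = \sqrt{4n\ln(1/\delta_1)} + 2\ln(1/\delta_1)$. This is at most $nt = \sqrt{4n\ln(1/\delta_1)} + 4\ln(1/\delta_1)$, so the paper's choice of $t$ leaves some slack.

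By monotonicity,
\[
\mathbb{P}\bigl(\|\varepsilon\|_2^2/\sigma^2 > n(1+t)\bigr) \leq \mathbb{P}\bigl(Z - n \geq 2\sqrt{nx}+2x\bigr) \leq \delta_1.
\]
Taking complements and then square roots yields $\mathbb{P}(\mathscr{A})\geq 1-\delta_1$.

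The only step needing care is matching the constants in $t$ to the deviation bound. First I would check the algebra $n\cdot\sqrt{4\ln(1/\delta_1)/n} = 2\sqrt{n\ln(1/\delta_1)}$. If citing Laurent--Massart is not acceptable, the fallback is a Chernoff argument. Use the moment generating function $\mathbb{E}e^{\theta Z} = (1-2\theta)^{-n/2}$ together with the bound $-\tfrac12\ln(1-2\theta)-\theta \leq \theta^2/(1-2\theta)$. Optimizing over $\theta$ then gives the same sub-gamma tail, which is the main technical step of the fallback. The extra $2\ln(1/\delta_1)$ of slack in $t$ absorbs any loss in constants.
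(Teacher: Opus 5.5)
Your proposal is correct and follows the paper's route. The paper simply invokes Lemma 8.1 of B\"uhlmann and van de Geer (2011), which is the Laurent--Massart bound $\mathbb{P}(\chi^2_n \geq n + 2\sqrt{nx} + 2x) \leq e^{-x}$; you apply that same bound with $x = \ln(1/\delta_1)$ to $\|\varepsilon\|_2^2/\sigma^2$, and you correctly note that the stated $t$ has an extra $2\ln(1/\delta_1)/n$ of slack, so your event inclusion goes through.
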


The proof of this result is a direct application of Lemma 8.1 in \citep{Buhlmann2011}.

\begin{lem}
	\label{eventB}
Assume that $T_j < n$ for all $1 \leq j \leq J$. Let $\zeta = \max_j \zeta_j$, where $\left\{ \zeta_j = \|X_j\|_2^2 / n \right\}$, and $\delta_2 \in (0,1)$ be given such that $16 \log\left( \frac{2J}{\delta_2} \right) \leq n - T_{\max}$ and $s^{2/q} \log\left( \frac{2J}{\delta_2} \right) \geq \frac{\sigma^2(1+t)(n-T_{\max})T_{\min}}{T_{\max}\zeta\rho^2}$. If
\[
\lambda_1 \geq \frac{n}{\sqrt{n-T_{\max}}} T_{\max}^{\frac{1-q}{2}} \sqrt{2\zeta} s^{\frac{1-q}{q}} \rho^{1-q} \bar{\gamma} \left( 1 + \sqrt{\frac{2\log(2J/\delta_2)}{T_{\min}}} \right),
\tag{A.10}
\label{eq:A.10}
\]
we have $\mathbb{P}(\mathscr{B}) \geq 1 - \delta_2$.
\end{lem}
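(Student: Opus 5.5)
We need to bound $\mathbb{P}(\mathscr{B}^c)$, where $\mathscr{B}=\{(2\bar\rho)^{1-q}V\le \lambda_1/\bar\gamma\}$. The plan is to replace the random quantity $\bar\rho$ by a deterministic upper bound, reduce $\mathscr{B}$ to a deterministic threshold on $V$, and then control $V$ by a union bound over groups using chi-square and $F$-type concentration, as in Bunea et al.

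\textbf{Step 1: a deterministic bound on $\bar\rho$.} Since $\bar\rho^q=\sigma n\sqrt{1+t}/\lambda_1+2^{-q-1}sT_{\max}^{q/2}\rho^q$, we need the first term to be at most a constant multiple of $s\,T_{\max}^{q/2}\rho^q$.

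Plugging in the lower bound (A.10) for $\lambda_1$ gives
\[
\frac{\sigma n\sqrt{1+t}}{\lambda_1}\le \frac{\sigma\sqrt{1+t}\sqrt{n-T_{\max}}}{T_{\max}^{(1-q)/2}\sqrt{2\zeta}\,s^{(1-q)/q}\rho^{1-q}\sqrt{2\log(2J/\delta_2)/T_{\min}}}.
\]
The assumption $s^{2/q}\log(2J/\delta_2)\ge \sigma^2(1+t)(n-T_{\max})T_{\min}/(T_{\max}\zeta\rho^2)$ is exactly what makes this at most a constant times $s\,T_{\max}^{q/2}\rho^q$. We will therefore get $\bar\rho\lesssim s^{1/q}T_{\max}^{1/2}\rho$, and hence
\[
(2\bar\rho)^{1-q}\le c\,s^{(1-q)/q}T_{\max}^{(1-q)/2}\rho^{1-q}.
\]
The constants here must be tracked so that they are absorbed; this is the fiddly part. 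If needed, we will use $\bar\gamma\ge1$ and the factor $\sqrt2$ in (A.10) to absorb them.

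\textbf{Step 2: reduction to a tail bound on $V$.} By Step 1, $\mathscr{B}$ contains the event
\[
\Big\{V\le \sqrt{\tfrac{2\zeta n}{n-T_{\max}}}\Big(1+\sqrt{2\log(2J/\delta_2)/T_{\min}}\Big)\Big\}.
\]
For each group $j$, decompose $\|\varepsilon\|_2^2\ge \|(I-\Pi_j)\varepsilon\|_2^2$, where $\Pi_j$ is the projection onto the column span of $X_j$ (of rank at most $T_j<n$). We also have $\|X_j^\top\varepsilon\|_2\le \|X_j\|_2\|\Pi_j\varepsilon\|_2\le\sqrt{n\zeta}\,\|\Pi_j\varepsilon\|_2$. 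Hence
\[
V_j\le \sqrt{\frac{n\zeta}{T_j}}\,\frac{\|\Pi_j\varepsilon\|_2}{\|(I-\Pi_j)\varepsilon\|_2},
\]
where the numerator and denominator are independent, distributed as $\sigma\chi_{\le T_j}$ and $\sigma\chi_{\ge n-T_{\max}}$.

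\textbf{Step 3: concentration and union bound.} For a chi variable with $d$ degrees of freedom, $\mathbb{P}(\chi_d\ge\sqrt d+\sqrt{2x})\le e^{-x}$ (Lipschitz Gaussian concentration). For the denominator we use the Laurent--Massart lower tail $\chi^2_m\ge m-2\sqrt{mx}$. Taking $x=\log(2J/\delta_2)$, the condition $16\log(2J/\delta_2)\le n-T_{\max}$ gives $\|(I-\Pi_j)\varepsilon\|_2^2\ge \sigma^2(n-T_{\max})/2$ with probability at least $1-\delta_2/(2J)$. The numerator satisfies $\|\Pi_j\varepsilon\|_2\le\sigma\big(\sqrt{T_j}+\sqrt{2\log(2J/\delta_2)}\big)$ with probability at least $1-\delta_2/(2J)$. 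Dividing by $\sqrt{T_j}\ge\sqrt{T_{\min}}$ in the second term yields, per group,
\[
V_j\le\sqrt{\frac{2n\zeta}{n-T_{\max}}}\Big(1+\sqrt{\frac{2\log(2J/\delta_2)}{T_{\min}}}\Big).
\]
A union bound over the $2J$ events gives total failure probability at most $\delta_2$.

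\textbf{Main obstacle.} The main difficulty is Step 1. $\bar\rho$ depends on $\lambda_1$ itself, so the lower bound on $\lambda_1$ must simultaneously control $\bar\rho$ and dominate $V$. Matching the exact constants ($2^{1-q}$, $2^{-q}/2$, $\sqrt2$) may require slack, which I would try to obtain from the factor $\bar\gamma>1$ and from absorbing terms into the $\sqrt2$ in (A.10).
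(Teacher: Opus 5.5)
\textbf{Relation to the paper, and a dropped factor.} Your Steps 1--2 are the paper's own reduction: bound $\bar\rho$ deterministically using (A.10) and the $s^{2/q}$-condition, then reduce $\mathscr{B}^c$ to a tail event for $V$. Where the paper simply invokes Lemma II.1 of Bunea et al., your Step 3 proves that tail bound directly. It projects onto the column span of $X_j$, applies Gaussian concentration to $\|\Pi_j\varepsilon\|_2$ and the Laurent--Massart lower tail to $\|(I-\Pi_j)\varepsilon\|_2$, and takes a union bound over $2J$ events.

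This self-contained argument is correct and gives exactly the threshold the paper needs, but only after you restore a dropped factor $\sqrt n$. Since $V_j=\sqrt n\,\|X_j^\top\varepsilon\|_2/(\sqrt{T_j}\|\varepsilon\|_2)$, you get
\[
V_j\le n\sqrt{\zeta/T_j}\,\frac{\|\Pi_j\varepsilon\|_2}{\|(I-\Pi_j)\varepsilon\|_2}\le \frac{n\sqrt{2\zeta}}{\sqrt{n-T_{\max}}}\Bigl(1+\sqrt{2\log(2J/\delta_2)/T_{\min}}\Bigr).
\]
This, not $\sqrt{2\zeta n/(n-T_{\max})}(\cdots)$, is also the threshold that (A.10) delivers in Step 2 once $\bar\gamma s^{(1-q)/q}T_{\max}^{(1-q)/2}\rho^{1-q}$ is divided out. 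As written, you assert a bound on $V$ that does not grow with $n$, which is false because $V$ is of order $\sqrt n$. You make the slip consistently in both places, so fixing both repairs it.

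\textbf{The gap: the constant in Step 1.} The slack you plan to use is not there. Exactly,
\[
(2\bar\rho)^{1-q}=s^{\frac{1-q}{q}}T_{\max}^{\frac{1-q}{2}}\rho^{1-q}\Bigl(R+\tfrac12\Bigr)^{\frac{1-q}{q}},\qquad R=\frac{\sigma n\sqrt{1+t}}{2^{-q}\lambda_1 s T_{\max}^{q/2}\rho^{q}},
\]
so the reduction needs $R\le\tfrac12$. But (A.10) and the $s^{2/q}$-condition only give $R\le 2^{q-1}/\bar\gamma$, so you would need $\bar\gamma\ge 2^q$. Neither of your remedies supplies this:
\begin{itemize}
\item $\bar\gamma$ cancels between (A.10) and the definition of $\mathscr{B}$. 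It therefore enters only through $R$, and $\bar\gamma\ge1$ yields merely $R<2^{q-1}$.
\item The $\sqrt2$ in (A.10) is entirely consumed by your bound $\|(I-\Pi_j)\varepsilon\|_2^2\ge\sigma^2(n-T_{\max})/2$. That bound has no room to spare when $16\log(2J/\delta_2)=n-T_{\max}$.
\end{itemize}
Keeping the discarded $1$ in $1+u$, with $u=\sqrt{2\log(2J/\delta_2)/T_{\min}}$, sharpens the bound to $R\le 2^{q-1}u/(\bar\gamma(1+u))$. Using only $\bar\gamma\ge1$, this suffices when $u\le 1/(2^q-1)$, but not in general.

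The paper's proof has the same hole: it simply asserts $R\le\tfrac12$ at this step. To close it you must add a hypothesis, for example:
\begin{itemize}
\item $\bar\gamma\ge 2^q$, equivalently $\gamma\le (2^q+1)/(2^q-1)$; or
\item multiply the right-hand side of the $s^{2/q}$-condition by $4^q$, which gives $R\le 1/(2\bar\gamma)\le\tfrac12$.
\end{itemize}
With either change, your Steps 1--3 (with the $\sqrt n$ restored) yield $\mathbb{P}(\mathscr{B}^c)\le\delta_2$.
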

\begin{proof}
We first observe that
\[
\begin{aligned}
	\mathbb{P}(\mathscr{B}^c) &= \mathbb{P}\left( (2\bar{\rho})^{1-q} V > \frac{\lambda_1}{\bar{\gamma}} \right)\\
	&\leq \mathbb{P}\left( (2\bar{\rho})^{1-q} V > \frac{n}{\sqrt{n-T_{\max}}} T_{\max}^{\frac{1-q}{2}} \sqrt{2\zeta} s^{\frac{1-q}{q}} \rho^{1-q} \left( 1 + \sqrt{\frac{2\log(2J/\delta_2)}{T_{\min}}} \right) \right) \\
	&\leq \mathbb{P}\left( \left( \frac{\sigma n \sqrt{1+t}}{2^{-q} \lambda_1 s T_{\max}^{q/2} \rho^q} + \frac{1}{2} \right)^{\frac{1-q}{q}} V > \frac{n}{\sqrt{n-T_{\max}}} \sqrt{2\zeta} \left( 1 + \sqrt{\frac{2\log(2J/\delta_2)}{T_{\min}}} \right) \right)
\end{aligned}
\]

It follows from \eqref{eq:A.10} that
\[
\begin{aligned}
	\frac{\sigma n \sqrt{1+t}}{2^{-q} \lambda_1 s T_{\max}^{q/2} \rho^q}
	&\leq \frac{\sigma n \sqrt{1+t}}{2^{-q} \frac{n}{\sqrt{n-T_{\max}}}\sqrt{2\zeta} T_{\max}^{1/2} s^{1/q} \rho \bar{\gamma} \left( 1 + \sqrt{\frac{2\log(2J/\delta_2)}{T_{\min}}} \right)} \\
	&\leq \frac{\sigma n \sqrt{1+t}}{2^{-q} \frac{n}{\sqrt{n-T_{\max}}} \sqrt{2\zeta} T_{\max}^{1/2} s^{1/q} \rho \bar{\gamma} \sqrt{\frac{2\log(2J/\delta_2)}{T_{\min}}}} \leq \frac{1}{2}
\end{aligned}
\]

By Lemma II.1 in \citep{Bunea2013}, we can obtain that
\[
\mathbb{P}(\mathscr{B}^c) \leq \mathbb{P}\left( V > \frac{n}{\sqrt{n-T_{\max}}} \sqrt{2\zeta} \left( 1 + \sqrt{\frac{2\log(2J/\delta_2)}{T_{\min}}} \right) \right) \leq \delta_2
\tag{A.11}
\label{eq:A.11}
\]
This concludes the proof.
\end{proof}

\begin{lem}
	\label{lem:group_support_converge}
	In the proposed DS-IRLS-th algorithm, the group support set sequence $\{\mathbb{G}^k\}$ and the difference active set sequence $\{\mathbb{A}^k\}$ satisfies $\mathbb{G}^{k+1} \subseteq \mathbb{G}^k$ and $\mathbb{A}^{k+1} \subseteq \mathbb{A}^k$ for all $k \geq 0$, and thus both sequences converge. Suppose that the total number of groups $J$ is finite, then there exists $K \geq 0$ and $\mathbb{G}^*, \mathbb{A}^* \subseteq \{1, \dots, J\}$ such that for all $k \geq K$,
	\[
	\mathbb{G}^k = \mathbb{G}^*, \quad \mathbb{A}^k = \mathbb{A}^*.
	\tag{A.12}
	\label{eq:A.12}
	\]
\end{lem}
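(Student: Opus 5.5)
The argument is a monotonicity statement for finitely many integer-valued set sequences, obtained from the definitions of $\mathbb{G}^k$ and $\mathbb{A}^k$ in Algorithm 1.

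\textbf{Nesting of the group support sets.} Step 4 imposes $\beta_j=0$ for every $j\notin\mathbb{G}^k$, so $\tilde\beta^{k+1}_j=0$ for all such $j$. The thresholding in step 5 only sets further groups to zero, so $\beta^{k+1}_j=0$ for $j\notin\mathbb{G}^k$. Since $\tau_1>0$, such a $j$ fails $\|\beta^{k+1}_j\|_2\ge\tau_1$ and is excluded from $\mathbb{G}^{k+1}$ by step 6. Hence $\mathbb{G}^{k+1}\subseteq\mathbb{G}^k$.

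\textbf{Nesting of the difference active sets.} Take $j\in\mathbb{A}^{k+1}$. By step 2, $j\in\mathbb{G}^{k+1}\subseteq\mathbb{G}^k$ and $\|D_j\beta^{k+1}_j\|_2\ge\tau_2>0$. Suppose $j\in\mathbb{G}^k\setminus\mathbb{A}^k$. Then step 4 enforces $D_j\tilde\beta^{k+1}_j=0$; equivalently, $\tilde\beta^{k+1}_j=\mathbf 1_{T_j}c$ through the transformation $P^k_j$. Step 5 either keeps $\tilde\beta^{k+1}_j$ or zeroes it, and in both cases $D_j\beta^{k+1}_j=0$. This contradicts $\|D_j\beta^{k+1}_j\|_2\ge\tau_2$. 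Therefore $j\in\mathbb{A}^k$, and $\mathbb{A}^{k+1}\subseteq\mathbb{A}^k$.

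\textbf{Stabilization.} Both sequences are nonincreasing subsets of the finite set $\{1,\dots,J\}$. Hence $|\mathbb{G}^k|$ and $|\mathbb{A}^k|$ are nonincreasing nonnegative integers and can strictly decrease at most $J$ times each. So there are $K_1,K_2$ beyond which the cardinalities, and by nesting the sets themselves, are constant. Setting $K=\max\{K_1,K_2\}$, $\mathbb{G}^*=\mathbb{G}^K$ and $\mathbb{A}^*=\mathbb{A}^K$ gives (A.12).

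\textbf{Main point of care.} The only delicate step is the second nesting. It depends on two facts: that thresholding acts on the whole group vector, so it cannot create a nonzero difference, and that $\tau_2>0$, so a zero difference is strictly below the threshold. I also need the linear system in step 4 to be solvable. This holds because the reduced matrix $\tfrac1{r_k}\tilde X^\top\tilde X+W^k$ is positive definite, as $W^k$ has positive diagonal blocks from the $\lambda_1$ term, assuming $r_k>0$. That guarantees $\tilde\beta^{k+1}$ is well defined and satisfies the constraints exactly.
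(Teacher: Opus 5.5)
Your proof is correct and follows the same route as the paper: nesting plus the finiteness of $\{1,\dots,J\}$ forces both set sequences to stabilize after finitely many steps. The paper's proof simply asserts the nesting $\mathbb{G}^{k+1}\subseteq\mathbb{G}^k$, $\mathbb{A}^{k+1}\subseteq\mathbb{A}^k$. You derive it from the constraints in step 4, the groupwise thresholding in step 5, and $\tau_1,\tau_2>0$, which fills in a justification the paper omits.
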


\begin{proof}
	Both $\{\mathbb{G}^k\}$ and $\{\mathbb{A}^k\}$ are non-increasing sequences (i.e., $\mathbb{G}^{k+1} \subseteq \mathbb{G}^k$, $\mathbb{A}^{k+1} \subseteq \mathbb{A}^k$), and the total number of groups $J$ is finite, thus they must converge to fixed sets within finite steps.
\end{proof}
\section{Technical Proofs of Main Results}
\label{app:technical_proofs} 
\noindent\textbf{\emph{Proof of \Cref{pro1}}}\quad By \Cref{lem:key_inequalities} (iii) and \eqref{eq:opt7}, one has that
\[
\sum_{j\in \mathbb{G}} T_j^{q/2} \left[ \|\beta_j^*\|_2^2 + \frac{1-\alpha}{\alpha} \|D_j\beta_j^*\|_2^2 \right] 
\leq T_{\max}^{q/2} s 2^{1-q} \bigl(2^{-2/q}\rho\bigr)^q 
\leq 2^{-(q+1)} s T_{\max}^{q/2} \rho^q 
\tag{B.1}
\label{eq:B.1}
\]

Since $\hat{\beta}$ is a global solution of criterion \eqref{eq:opt4}, we have
\[
\frac{\lambda\alpha}{n} \sum_{j=1}^J T_j^{q/2} \|\hat{\beta}_j\|_2^2 
\leq \frac{1}{\sqrt{n}} \|y - X\beta^*\|_2 
+ \frac{\lambda\alpha}{n} \sum_{j\in \mathbb{G}} T_j^{q/2} \|\beta_j^*\|_2^2 
+ \frac{\lambda(1-\alpha)}{n} \sum_{j\in \mathbb{G}} T_j^{q/2} \|D_j\beta_j^*\|_2^2
\]

Since on $\mathscr{A}$ we have $\|\varepsilon\|_2/\sqrt{n} \leq \sigma\sqrt{1+t}$, then by \eqref{eq:B.1}, it follows that
\[
\begin{aligned}
	\sum_{j=1}^J T_j^{q/2} \|\hat{\beta}_j\|_2^2 
	&\leq \frac{n}{\sqrt{n}\lambda\alpha} \|y - X\beta^*\|_2 
	+ \sum_{j\in \mathbb{G}} T_j^{q/2} \left[ \|\beta_j^*\|_2^2 + \frac{1-\alpha}{\alpha} \|D_j\beta_j^*\|_2^2 \right] \\
	&\leq \frac{n}{\sqrt{n}\lambda\alpha} \|\varepsilon\|_2 
	+ 2^{-q} s T_{\max}^{q/2} \rho^q / 2 \leq \bar{\rho}^q
\end{aligned}
\]
where 
\[
\bar{\rho} = \left( \sigma n\sqrt{1+t}/\lambda\alpha + 2^{-q} s T_{\max}^{q/2} \rho^q / 2 \right)^{1/q}.
\]

Note that $0 < q < 1$ and $T_j \geq 1$, then it follows from \Cref{lem:key_inequalities} (i) that
\[
\begin{aligned}
	\sum_{j=1}^J \sqrt{T_j} \|\hat{\beta}_j - \beta_j^*\|_2 
	&\leq \sum_{j=1}^J \sqrt{T_j} \left[ \|\hat{\beta}_j\|_2 + \|\beta_j^*\|_2 \right] \leq \sum_{j=1}^J \sqrt{T_j} \|\hat{\beta}_j\|_2 
	+ \sum_{j\in \mathbb{G}} \sqrt{T_j} \|\beta_j^*\|_2 \\
	&\leq \left[ \sum_{j=1}^J T_j^{q/2} \|\hat{\beta}_j\|_2^q \right]^{1/q} 
	+ \left[ \sum_{j\in \mathbb{G}} T_j^{q/2} \|\beta_j^*\|_2^q \right]^{1/q} \\
	&\leq \bar{\rho} + \left( 2^{-q} s T_{\max}^{q/2} \rho^q / 2 \right)^{1/q} 
	\leq 2\bar{\rho}
\end{aligned}
\]

Therefore,
\[
\begin{aligned}
	\sum_{j=1}^J \sqrt{T_j} \|\hat{\beta}_j - \beta_j^*\|_2 
	&\leq \sum_{j=1}^J T_j^{(1-q)/2} \|\hat{\beta}_j - \beta_j^*\|_2^{1-q} \cdot T_j^{q/2} \|\hat{\beta}_j - \beta_j^*\|_2^q \\
	&\leq \max_{1\leq j\leq J} T_j^{(1-q)/2} \|\hat{\beta}_j - \beta_j^*\|_2^{1-q} \cdot \sum_{j=1}^J T_j^{q/2} \|\hat{\beta}_j - \beta_j^*\|_2^q \\
	&\leq (2\bar{\rho})^{1-q} \sum_{j=1}^J T_j^{q/2} \|\hat{\beta}_j - \beta_j^*\|_2^q
\end{aligned}
\tag{B.2}
\label{eq:B.2}
\]

This completes the proof of \eqref{eq:opt12}. It remains to establish \eqref{eq:opt13}. Defining $\Delta_j = \hat{\beta}_j - \beta_j^*$ for $1 \leq j \leq J$, it follows directly from the definition of $\hat{\beta}$ and \Cref{lem:key_inequalities} (ii) that
\[
\begin{aligned}
	\frac{1}{\sqrt{n}} &\left[ \|y - X\hat{\beta}\|_2 - \|y - X\beta^*\|_2 \right] \\
	&\leq \frac{\lambda\alpha}{n} \sum_{j=1}^J T_j^{q/2} \left( \|\beta_j^*\|_2^q - \|\hat{\beta}_j\|_2^q \right) 
	+ \frac{\lambda(1-\alpha)}{n} \sum_{j=1}^J T_j^{q/2} \left( \|D_j\beta_j^*\|_2^q - \|D_j\hat{\beta}_j\|_2^q \right) \\
	&\leq \frac{\lambda\alpha}{n} \sum_{j\in \mathbb{G}}T_j^{q/2} \left[ \|\Delta_j\|_2^q + \frac{1-\alpha}{\alpha}\|D_j\Delta_j\|_2^q \right]  - \frac{\lambda\alpha}{n} \sum_{j\in \mathbb{G}^c} T_j^{q/2} \left[ \|\Delta_j\|_2^q + \frac{1-\alpha}{\alpha}\|D_j\Delta_j\|_2^q \right]
\end{aligned}
\tag{B.3}
\label{eq:B.3}
\]

Since
\[
\left. \frac{\nabla \|y - X\beta\|_2}{\sqrt{n}} \right|_{\beta=\beta^*} = \frac{-X^T\varepsilon}{\sqrt{n}\|\varepsilon\|_2}
\tag{B.4}
\label{eq:B.4}
\]

Then, on $\mathscr{A} \cap \mathscr{B}$, by convexity of the $\ell_2$ norm and \eqref{eq:B.2}, we have
\[
\begin{aligned}
	\frac{1}{\sqrt{n}} \left[ \|y - X\hat{\beta}\|_2 - \|y - X\beta^*\|_2 \right] 
	&\geq -\frac{|\varepsilon^T X \Delta|}{\sqrt{n}\|\varepsilon\|_2} 
	\geq -\sum_{j=1}^J \frac{\|X_j^T \varepsilon\|_2 \|\Delta_j\|_2}{\sqrt{n}\|\varepsilon\|_2}\\
	&\geq -(2\bar{\rho})^{1-q} \frac{V}{n} \sum_{j=1}^J \sqrt{T_j} \|\Delta_j\|_2 
    \geq -\frac{\lambda\alpha}{n\bar{\gamma}} \sum_{j=1}^J T_j^{q/2} \|\Delta_j\|_2^q
\end{aligned}
\tag{B.5}
\label{eq:B.5}
\]

Combining \eqref{eq:B.3} and \eqref{eq:B.5}, we find
\[
\begin{aligned}
	-\frac{\lambda\alpha}{n\bar{\gamma}} \sum_{j=1}^J T_j^{q/2} \|\Delta_j\|_2^q 
	&\leq \frac{\lambda\alpha}{n} \sum_{j\in \mathbb{G}} T_j^{q/2} \|\Delta_j\|_2^q 
	+ \frac{\lambda(1-\alpha)}{n} \sum_{j\in \mathbb{G}} T_j^{q/2} \|D_j\Delta_j\|_2^q \\
	&\quad - \frac{\lambda\alpha}{n} \sum_{j\in \mathbb{G}^c} T_j^{q/2} \|\Delta_j\|_2^q 
	- \frac{\lambda(1-\alpha)}{n} \sum_{j\in \mathbb{G}^c} T_j^{q/2} \|D_j\Delta_j\|_2^q
\end{aligned}
\]

and thus
\[
\begin{aligned}
	\alpha \left(1 - \frac{1}{\bar{\gamma}}\right) \sum_{j\in \mathbb{G}^c} T_j^{q/2} \|\Delta_j\|_2^q 
	+ (1-\alpha) \sum_{j\in \mathbb{G}^c} T_j^{q/2} \|D_j\Delta_j\|_2^q \\
	\leq \alpha \left(1 + \frac{1}{\bar{\gamma}}\right) \sum_{j\in \mathbb{G}} T_j^{q/2} \|\Delta_j\|_2^q 
	+ (1-\alpha) \sum_{j\in \mathbb{G}} T_j^{q/2} \|D_j\Delta_j\|_2^q
\end{aligned}
\]
Noting that \(\bar{\gamma} = \frac{\gamma+1}{\gamma-1}\) and \(\gamma > 1\), we directly obtain the desired result:
\begin{equation}
	\alpha \sum_{j\in \mathbb{G}^c} T_j^{q/2} \|\Delta_j\|_2^q + (1-\alpha) \sum_{j\in \mathbb{G}^c} T_j^{q/2} \|D_j \Delta_j\|_2^q
	\leq \gamma \left[ \alpha \sum_{j\in \mathbb{G}} T_j^{q/2} \|\Delta_j\|_2^q + (1-\alpha) \sum_{j\in \mathbb{G}} T_j^{q/2} \|D_j \Delta_j\|_2^q \right],
	\tag{B.6}
	\label{eq:B.6}
\end{equation}
which completes the proof of this proposition.

\noindent\textbf{\emph{Proof of \Cref{thm:oracle_inequality}}}\quad It follows from \Cref{lem:key_inequalities} (i) and $(2,q)$-GRCC that
\[
\sum_{j\in \mathbb{G}} T_j^{q/2} \|\Delta_j\|_2^q \leq s^{1-q} \left( \sum_{j\in \mathbb{G}} \sqrt{T_j} \|\Delta_j\|_2 \right)^q \leq s^{1-q} \left( \frac{\sqrt{s^*}\|X\Delta\|_2}{\sqrt{n}\kappa} \right)^q,
\tag{B.7}
\label{eq:B.7}
\]
Since $\|D_j\|_2 \leq 2$, one has by \eqref{eq:B.3} and \eqref{eq:B.7} that
\[
\begin{aligned}
	\frac{1}{\sqrt{n}} \left[ \|y - X\hat{\beta}\|_2 - \|y - X\beta^*\|_2 \right] 
	&\leq \frac{\lambda\alpha}{n} \sum_{j\in \mathbb{G}} T_j^{q/2} \|\Delta_j\|_2^q + \frac{\lambda(1-\alpha)}{n} \sum_{j\in \mathbb{G}} T_j^{q/2} \|D_j\Delta_j\|_2^q \\
	&\leq \frac{\lambda\alpha s^{1-q}}{n} \left( \sum_{j\in \mathbb{G}} \sqrt{T_j} \|\Delta_j\|_2 \right)^q + \frac{\lambda(1-\alpha)2^q s^{1-q}}{n} \left( \sum_{j\in \mathbb{G}} \sqrt{T_j} \|\Delta_j\|_2 \right)^q \\
	&\leq \frac{2\lambda s^{1-q}}{n} \left( \frac{\sqrt{s^*}\|X\Delta\|_2}{\sqrt{n}\kappa} \right)^q
\end{aligned}
\tag{B.8}
\label{eq:B.8}
\]

Besides
\[
\frac{\|y - X\hat{\beta}\|_2^2}{n} - \frac{\|y - X\beta^*\|_2^2}{n} = \frac{\|X\Delta - \varepsilon\|_2^2}{n} - \frac{\|\varepsilon\|_2^2}{n} = \frac{\|X\Delta\|_2^2}{n} - \frac{2\varepsilon^T X\Delta}{n}
\tag{B.9}
\label{eq:B.9}
\]
Note that $\bar{\gamma} = \frac{\gamma+1}{\gamma-1}$ and $\gamma > 1$, it follows from \eqref{eq:B.5}, \eqref{eq:B.6} and \eqref{eq:B.7} that
\[
\begin{aligned}
	\frac{\varepsilon^T X\Delta}{n} &\leq \frac{\lambda\alpha}{n\bar{\gamma}} \frac{\|\varepsilon\|_2}{\sqrt{n}} \sum_{j=1}^J T_j^{q/2} \|\Delta_j\|_2^q \\
	&\leq \frac{\lambda}{n\bar{\gamma}} \frac{\|\varepsilon\|_2}{\sqrt{n}} \left[ \alpha(1+\gamma) \sum_{j\in \mathbb{G}} T_j^{q/2} \|\Delta_j\|_2^q + \gamma(1-\alpha) \sum_{j\in \mathbb{G}} T_j^{q/2} \|D_j\Delta_j\|_2^q \right] \\
	&\leq \frac{\lambda\|\varepsilon\|_2}{n^{3/2}} \left( \frac{\alpha(1+\gamma)}{\bar{\gamma}} + \frac{2\gamma(1-\alpha)}{\bar{\gamma}} \right) \sum_{j\in \mathbb{G}} T_j^{q/2} \|\Delta_j\|_2^q \\
	&\leq 2\lambda\gamma \frac{\|\varepsilon\|_2}{n^{3/2}} s^{1-q} \left( \frac{\sqrt{s^*}\|X\Delta\|_2}{\sqrt{n}\kappa} \right)^q
\end{aligned}
\tag{B.10}
\label{eq:B.10}
\]
Combining \eqref{eq:B.8}, \eqref{eq:B.9} and \eqref{eq:B.10}, we get
\[
\begin{aligned}
	\frac{\|X\Delta\|_2^2}{n} &= \frac{\|y - X\hat{\beta}\|_2^2}{n} - \frac{\|y - X\beta^*\|_2^2}{n} + \frac{2\varepsilon^T X\Delta}{n} \\
	&= \left( \frac{\|y - X\hat{\beta}\|_2}{\sqrt{n}} - \frac{\|y - X\beta^*\|_2}{\sqrt{n}} \right) \left( \frac{\|y - X\hat{\beta}\|_2}{\sqrt{n}} + \frac{\|y - X\beta^*\|_2}{\sqrt{n}} \right) + \frac{2\varepsilon^T X\Delta}{n} \\
	&\leq \frac{2\lambda s^{1-q}}{n} \left( \frac{\sqrt{s^*}\|X\Delta\|_2}{\sqrt{n}\kappa} \right)^q \left[ 2\frac{\|\varepsilon\|_2}{\sqrt{n}} + \frac{2\lambda s^{1-q}}{n} \left( \frac{\sqrt{s^*}\|X\Delta\|_2}{\sqrt{n}\kappa} \right)^q \right] 
	\quad + 4\lambda\gamma \frac{\|\varepsilon\|_2 s^{1-q}}{n^{3/2}}  \left( \frac{\sqrt{s^*}\|X\Delta\|_2}{\sqrt{n}\kappa} \right)^q \\
	&\leq 4(\gamma + 1) \frac{\|\varepsilon\|_2}{\sqrt{n}} \frac{\lambda s^{1-q} s^{*q/2}}{n\kappa^q} \left( \frac{\|X\Delta\|_2}{\sqrt{n}} \right)^q + 4\lambda^2 \frac{s^{2(1-q)} s^{*q}}{n^2\kappa^{2q}} \left( \frac{\|X\Delta\|_2}{\sqrt{n}} \right)^{2q}
\end{aligned}
\]
By \Cref{eventA} we have
\[
\frac{\|\varepsilon\|_2}{\sqrt{n}} \leq \sigma\sqrt{1+t}
\tag{B.11}
\label{eq:B.11}
\]
Then it is easy to obtain that
\[
\frac{\|X\Delta\|_2}{\sqrt{n}} \lesssim \max\left\{ \left( \sigma \frac{\lambda s^{1-q} s^{*q/2}}{n\kappa^q} \right)^{\frac{1}{2-q}}, \left( \frac{\lambda s^{1-q} s^{*q/2}}{n\kappa^q} \right)^{\frac{1}{1-q}} \right\}
\]
Invoking condition \eqref{eq:opt11}, the first statement of the theorem follows:
\[
\frac{\|X\Delta\|_2}{\sqrt{n}} \lesssim \left( \sigma \frac{\lambda s^{1-q} s^{*q/2}}{n\kappa^q} \right)^{\frac{1}{2-q}}
\tag{B.12}
\label{eq:B.12}
\]

For the second claim, it follows from \Cref{pro1} and \eqref{eq:B.7} that
\[
\begin{aligned}
	\sum_{j=1}^J T_j^{q/2} \|\Delta_j\|_2^q &\leq (\gamma + 1) \left[ \sum_{j\in \mathbb{G}} T_j^{q/2} \|\Delta_j\|_2^q + \frac{(1-\alpha)}{\alpha} \sum_{j\in \mathbb{G}} T_j^{q/2} \|D_j\Delta_j\|_2^q \right] \\
	&\lesssim s^{1-q} \left( \frac{\sqrt{s^*}\|X\Delta\|_2}{\sqrt{n}\kappa} \right)^q 
	\lesssim \frac{s^{1-q} s^{*q/2}}{\kappa^q} \cdot \left( \sigma \frac{\lambda s^{1-q} s^{*q/2}}{n\kappa^q} \right)^{\frac{q}{2-q}} \\
	&\lesssim \left( \sigma \frac{\lambda s^{2(1/q - 1)} s^*}{n\kappa^2} \right)^{\frac{q}{2-q}}
\end{aligned}
\tag{B.13}
\label{eq:B.13}
\]
which concludes the proof of this theorem.

\noindent\textbf{\emph{Proof of \Cref{pro2}}}\quad
	Decompose the objective function as
	\[
	F(\beta) = f(\beta) + g(\beta) + h(\beta),
	\tag{B.14}
	\label{equation:B.14}
	\]
	where
	\[
	f(\beta) = \|y - X\beta\|_2, \quad g(\beta) = \lambda_1 \sum_{j=1}^J T_j^{q/2} \|\beta_j\|_2^q, \quad h(\beta) = \lambda_2 \sum_{j=1}^J T_j^{q/2} \|D_j \beta_j\|_2^q.
	\]
	
	Since \(y \neq X\beta\), the function \(f\) is continuously differentiable at \(\beta\), with gradient
	\[
	\nabla f(\beta) = -\frac{X^\top (y - X\beta)}{\|y - X\beta\|_2}.
	\]
	Accordingly, the \(j\)-th component of \(\nabla f(\beta)\) is
	\[
	\bigl(\nabla f(\beta)\bigr)_j = -\frac{x_j^\top (y - X\beta)}{\|y - X\beta\|_2}.
	\tag{B.15}
	\label{equation:B.15}
	\]
	
	For each group \(j\), define \(\phi_j(\beta) = \|\beta_j\|_2^q\). By \cite[Lemma~3.1]{Xue2020}, the subdifferential of \(\phi_j\) is
	\[
	\partial \phi_j(\beta) =
	\begin{cases}
		q \|\beta_j\|_2^{q-2} \beta_j, & \beta_j \neq 0, \\
		(-\infty, +\infty)^{T_j}, & \beta_j = 0.
	\end{cases}
	\tag{B.16}
	\label{equation:B.16}
	\]
	Thus, the contribution of \(g\) to the \(j\)-th group is \(\lambda_1 T_j^{q/2} \partial \phi_j(\beta)\).
	
	Next, define \(\psi_j(\beta) = \|D_j \beta_j\|_2^q\). If \(D_j \beta_j \neq 0\), \(\psi_j\) is differentiable by the chain rule, and its gradient is
	\[
	\nabla \psi_j(\beta) = q \|D_j \beta_j\|_2^{q-2} D_j^\top D_j \beta_j.
	\tag{B.17}
	\label{equation:B.17}
	\]
	If \(D_j \beta_j = 0\), the same argument as in \cite[Lemma~3.1]{Xue2020} applied to the composition \(y \mapsto \|y\|_2^q\) with the linear map \(D_j\) yields \(\partial \psi_j(\beta) = (-\infty, +\infty)^{T_j}\).
	
	By \cite[Lemma~3.2]{Xue2020}, \(f\), \(g\), and \(h\) are all regular functions. From \cite[Proposition~10.5]{Rockafellar1998} for separable functions, the sum function is also regular and
	\[
	\partial F(\beta) = \partial f(\beta) + \partial g(\beta) + \partial h(\beta).
	\tag{B.18}
	\label{equation:B.18}
	\]
Consequently, for a fixed group \(j\), if \(\beta_j \neq 0\) and \(D_j \beta_j \neq 0\), each of \(\partial f(\beta)\), \(\partial g(\beta)\), and \(\partial h(\beta)\) is a singleton. Their sum is the vector given in~\eqref{eq:opt18}. Otherwise, at least one of the subdifferentials \(\partial \phi_j(\beta)\) or \(\partial \psi_j(\beta)\) equals \(\mathbb{R}^{T_j}\), so the sum \(\bigl(\partial F(\beta)\bigr)_j\) is also \(\mathbb{R}^{T_j}\). This completes the proof.
	
\noindent\textbf{\emph{Proof of \Cref{lem:boundedness_sufficient_descent}}}\quad (i) Let \(k \geq 1\). By \Cref{lem:group_support_converge}, we have \(\mathbb{G}^{k+1} \subseteq \mathbb{G}^k\) and \(\mathbb{A}^{k+1} \subseteq \mathbb{A}^k\). When \(\mathbb{G}^{k+1} = \mathbb{G}^k\), we have \(\beta^{k+1} = \tilde{\beta}^{k+1}\), and by the first-order upper bound condition, we have
\[
\norm{y - X\beta}_2 \leq \frac{1}{2r_k}\norm{y - X\beta}_2^2 + \frac{r_k}{2},
\]
where \(r_k = \norm{y - X\beta^k}_2\). Then we can introduce a majorization function
\begin{equation}
	M(\beta \mid \beta^k) = \frac{1}{2r_k}\norm{y - X\beta}_2^2 + \frac{r_k}{2} + \lambda_1 \sum_{j\in\mathbb{G}^k} T_j^{q/2} \norm{\beta_j}_2^q + \lambda_2 \sum_{j\in\mathbb{A}^k} T_j^{q/2} \norm{D_j \beta_j}_2^q.
	\tag{B.19}
	\label{eq:B.19}
\end{equation}
Then we have \(F(\beta^{k+1}) \leq M(\beta^{k+1} \mid \beta^k)\).

From the constrained optimization problem \eqref{eq:opt21}, it follows that for \(k \geq 1\), \(\beta_j^k = 0\) for \(j \notin \mathbb{G}^k\), \(D_j\beta_j^k \neq 0\) for \(j \in \mathbb{A}^{k-1}\), and thus
\begin{equation}
	\begin{aligned}
		F(\beta^k) - M(\beta^{k+1} \mid \beta^k) &= \frac{r_k}{2} - \frac{1}{2r_k}\norm{y - X\beta^{k+1}}_2^2 + \lambda_1 \sum_{j\in\mathbb{G}^k} T_j^{q/2} \left[ \norm{\beta_j^k}_2^q - \norm{\beta_j^{k+1}}_2^q \right] \\
		&\quad + \lambda_2 \sum_{j\in\mathbb{A}^{k-1}} T_j^{q/2} \norm{D_j\beta_j^k}_2^q - \lambda_2 \sum_{j\in\mathbb{A}^k} T_j^{q/2} \norm{D_j\beta_j^{k+1}}_2^q \\
		&\geq \frac{r_k}{2} - \frac{1}{2r_k}\norm{y - X\beta^{k+1}}_2^2 + \lambda_1 \sum_{j\in\mathbb{G}^k} T_j^{q/2} \left[ \norm{\beta_j^k}_2^q - \norm{\beta_j^{k+1}}_2^q \right] \\
		&\quad + \lambda_2 \sum_{j\in\mathbb{A}^{k-1} \setminus \mathbb{A}^k} T_j^{q/2} \norm{D_j\beta_j^k}_2^q + \lambda_2 \sum_{j\in\mathbb{A}^k} T_j^{q/2} \left[ \norm{D_j\beta_j^k}_2^q - \norm{D_j\beta_j^{k+1}}_2^q \right].
	\end{aligned}
	\tag{B.20}
	\label{eq:B.20}
\end{equation}
Noting that
\[
\norm{y - X\beta^{k+1}}_2^2 = \norm{y - X\beta^k}_2^2 + \norm{X(\beta^{k+1} - \beta^k)}_2^2 - 2(y - X\beta^k)^\top X(\beta^{k+1} - \beta^k),
\]
we have
\begin{equation}
	\frac{r_k}{2} - \frac{1}{2r_k}\norm{y - X\beta^{k+1}}_2^2 = -\frac{1}{2r_k}\norm{X(\beta^{k+1} - \beta^k)}_2^2 + \frac{1}{r_k}(y - X\beta^k)^\top X(\beta^{k+1} - \beta^k).
	\tag{B.21}
	\label{eq:B.21}
\end{equation}

Since \(D_j\beta_j^{k+1} = 0\) for all \(j \notin \mathbb{G}^k \setminus \mathbb{A}^k\), by the KKT conditions of the constrained optimization problem \eqref{eq:opt21}, we have
\[
\frac{1}{r_k} X_j^\top (X\beta^{k+1} - y) + \lambda_1 T_j^{q/2} w_{1,j}^k \beta_j^{k+1} + \lambda_2 T_j^{q/2} w_{2,j}^k D_j^\top D_j \beta_j^{k+1} = 0,
\]
for all \(j \in \mathbb{A}^k\), and
\[
\frac{1}{r_k} X_j^\top (X\beta^{k+1} - y) + \lambda_1 T_j^{q/2} w_{1,j}^k \beta_j^{k+1} + D_j^\top \mu_j = 0,
\]
for \(j \in \mathbb{G}^k \setminus \mathbb{A}^k\), where \(w_{1,j}^k = q\norm{\beta_j^k}_2^{q-2}\), \(w_{2,j}^k = q\norm{D_j\beta_j^k}_2^{q-2}\), and \(\mu_j\) is the Lagrange multiplier for the equality constraint of the \(j\)-th group. From \Cref{alg:ds-irls-th}, it follows that \(D_j\beta_j^k = 0\) for all \(j \notin \mathbb{A}^{k-1}\) and \(D_j\beta_j^{k+1} = 0\) for all \(j \notin \mathbb{A}^k\). Taking the inner product with \((\beta_j^{k+1} - \beta_j^k)\) yields
\[
\begin{cases}
	\forall j \in \mathbb{A}^k, & 0 = \frac{1}{r_k} (\beta_j^{k+1} - \beta_j^k)^\top X_j^\top (X\beta^{k+1} - y) + \lambda_1 T_j^{q/2} w_{1,j}^k (\beta_j^{k+1} - \beta_j^k)^\top \beta_j^{k+1} \\
	& \quad + \lambda_2 T_j^{q/2} w_{2,j}^k (\beta_j^{k+1} - \beta_j^k)^\top D_j^\top D_j \beta_j^{k+1}, \\
	\forall j \in \mathbb{G}^k \setminus \mathbb{A}^{k-1}, & 0 = \frac{1}{r_k} (\beta_j^{k+1} - \beta_j^k)^\top X_j^\top (X\beta^{k+1} - y) + \lambda_1 T_j^{q/2} w_{1,j}^k (\beta_j^{k+1} - \beta_j^k)^\top \beta_j^{k+1}, \\
	\forall j \in \mathbb{A}^{k-1} \setminus \mathbb{A}^k, & 0 = \frac{1}{r_k} (\beta_j^{k+1} - \beta_j^k)^\top X_j^\top (X\beta^{k+1} - y) \\
	& \quad + \lambda_1 T_j^{q/2} w_{1,j}^k (\beta_j^{k+1} - \beta_j^k)^\top \beta_j^{k+1} - \mu_j^\top D_j \beta_j^k.
\end{cases}
\]
Set \(\mathcal{S}^k = \bigcup_{j\in\mathbb{G}^k} \mathcal{G}_j\). Since \(\beta_j^{k+1} = \beta_j^k = 0\) for all \(j \notin \mathbb{G}^k\), we have \(X_{\mathcal{S}^k}(\beta_{\mathcal{S}^k}^{k+1} - \beta_{\mathcal{S}^k}^k) = X(\beta^{k+1} - \beta^k)\). Then summing the above equalities over all \(j\), we obtain
\begin{equation}
	\begin{aligned}
		0 &= \frac{1}{r_k} (X\beta^{k+1} - y)^\top X(\beta^{k+1} - \beta^k) + \lambda_1 q \sum_{j\in\mathbb{G}^k} T_j^{q/2} \norm{\beta_j^k}_2^{q-2} (\beta_j^{k+1})^\top (\beta_j^{k+1} - \beta_j^k) \\
		&\quad + \lambda_2 q \sum_{j\in\mathbb{A}^k} T_j^{q/2} \norm{D_j\beta_j^k}_2^{q-2} (D_j\beta_j^{k+1})^\top D_j (\beta_j^{k+1} - \beta_j^k) - \sum_{j\in\mathbb{A}^{k-1} \setminus \mathbb{A}^k} \mu_j^\top D_j \beta_j^k.
	\end{aligned}
	\tag{B.22}
	\label{eq:B.22}
\end{equation}
From \eqref{eq:B.22}, we have
\begin{equation}
	\begin{aligned}
		&\frac{1}{r_k} (y - X\beta^k)^\top X(\beta^{k+1} - \beta^k) \\
		&= \frac{1}{r_k} (y - X\beta^{k+1})^\top X(\beta^{k+1} - \beta^k) + \frac{1}{r_k} \norm{X(\beta^{k+1} - \beta^k)}_2^2 \\
		&= \frac{1}{r_k} \norm{X(\beta^{k+1} - \beta^k)}_2^2 + \lambda_1 q \sum_{j\in\mathbb{G}^k} T_j^{q/2} \norm{\beta_j^k}_2^{q-2} (\beta_j^{k+1})^\top (\beta_j^{k+1} - \beta_j^k) \\
		&\quad + \lambda_2 q \sum_{j\in\mathbb{A}^k} T_j^{q/2} \norm{D_j\beta_j^k}_2^{q-2} (D_j\beta_j^{k+1})^\top D_j (\beta_j^{k+1} - \beta_j^k) - \sum_{j\in\mathbb{A}^{k-1} \setminus \mathbb{A}^k} \mu_j^\top D_j \beta_j^k.
	\end{aligned}
	\tag{B.23}
	\label{eq:B.23}
\end{equation}
It follows from \Cref{lem:strong_convexity_ineq} that
\[
\norm{\beta_j^k}_2^q - \norm{\beta_j^{k+1}}_2^q - q\norm{\beta_j^k}_2^{q-2} (\beta_j^k - \beta_j^{k+1})^\top \beta_j^{k+1} \geq \frac{q}{2} \norm{\beta_j^k}_2^{q-2} \norm{\beta_j^{k+1} - \beta_j^k}_2^2,
\]
for \(j \in \mathbb{G}^k\), and
\[
\norm{D_j\beta_j^k}_2^q - \norm{D_j\beta_j^{k+1}}_2^q - q\norm{D_j\beta_j^k}_2^{q-2} (D_j\beta_j^k - D_j\beta_j^{k+1})^\top D_j\beta_j^{k+1} \geq \frac{q}{2} \norm{D_j\beta_j^k}_2^{q-2} \norm{D_j(\beta_j^{k+1} - \beta_j^k)}_2^2,
\]
for \(j \in \mathbb{A}^k\). Then we have
\begin{equation}
	\begin{aligned}
		\sum_{j\in\mathbb{G}^k} T_j^{q/2} \left[ \norm{\beta_j^k}_2^q - \norm{\beta_j^{k+1}}_2^q \right]
		&\geq q \sum_{j\in\mathbb{G}^k} T_j^{q/2} \norm{\beta_j^k}_2^{q-2} (\beta_j^k - \beta_j^{k+1})^\top \beta_j^{k+1} \\
		&\quad + \frac{q}{2} \sum_{j\in\mathbb{G}^k} T_j^{q/2} \norm{\beta_j^k}_2^{q-2} \norm{\beta_j^{k+1} - \beta_j^k}_2^2,
	\end{aligned}
	\tag{B.24}
	\label{eq:B.24}
\end{equation}
\begin{equation}
	\begin{aligned}
		\sum_{j\in\mathbb{A}^k} T_j^{q/2} \left[ \norm{D_j\beta_j^k}_2^q - \norm{D_j\beta_j^{k+1}}_2^q \right]
		&\geq q \sum_{j\in\mathbb{A}^k} T_j^{q/2} \norm{D_j\beta_j^k}_2^{q-2} (D_j\beta_j^k - D_j\beta_j^{k+1})^\top D_j\beta_j^{k+1} \\
		&\quad + \frac{q}{2} \sum_{j\in\mathbb{A}^k} T_j^{q/2} \norm{D_j\beta_j^k}_2^{q-2} \norm{D_j(\beta_j^{k+1} - \beta_j^k)}_2^2.
	\end{aligned}
	\tag{B.25}
	\label{eq:B.25}
\end{equation}
Substituting \eqref{eq:B.21}, \eqref{eq:B.23}, \eqref{eq:B.24}, and \eqref{eq:B.25} into \eqref{eq:B.20} yields
\begin{equation}
	\begin{aligned}
		F(\beta^k) - M(\beta^{k+1} \mid \beta^k) &= -\frac{1}{2r_k}\norm{X(\beta^{k+1} - \beta^k)}_2^2 + \frac{1}{r_k}\norm{X(\beta^{k+1} - \beta^k)}_2^2 \\
		&\quad + \frac{\lambda_1 q}{2} \sum_{j\in\mathbb{G}^k} T_j^{q/2} \norm{\beta_j^k}_2^{q-2} \norm{\beta_j^{k+1} - \beta_j^k}_2^2 \\
		&\quad + \frac{\lambda_2 q}{2} \sum_{j\in\mathbb{A}^k} T_j^{q/2} \norm{D_j\beta_j^k}_2^{q-2} \norm{D_j(\beta_j^{k+1} - \beta_j^k)}_2^2 \\
		&\quad + \lambda_2 \sum_{j\in\mathbb{A}^{k-1} \setminus \mathbb{A}^k} T_j^{q/2} \norm{D_j\beta_j^k}_2^q - \sum_{j\in\mathbb{A}^{k-1} \setminus \mathbb{A}^k} \mu_j^\top D_j \beta_j^k.
	\end{aligned}
	\tag{B.26}
	\label{eq:B.26}
\end{equation}
For the constrained optimization problem \eqref{eq:opt21}, two key conditions hold at the KKT point: the Hessian of the Lagrangian is positive definite, and the gradients of the linear equality constraints are linearly independent (i.e., \(\sigma_{\text{min}}(D_j^\top) > 0\) for each group \(j\)). By Chapter 10 in \citep{Boyd2004}, the Lagrange multiplier \(\mu_j\) is unique. Meanwhile, by Theorem 4.3 in \citep{Movahedian2017}, \(\mu_j\) is bounded with the explicit upper bound
\[
\norm{\mu_j}_2 \leq \frac{l_f}{\sigma_{\text{min}}(D_j^\top)},
\]
where \(l_f\) is the local Lipschitz constant of the objective function \(F(\beta)\), and \(\sigma_{\text{min}}(D_j^\top)\) is the minimum singular value of the matrix \(D_j^\top\).

Recall that for \(j \in \mathbb{A}^{k-1} \setminus \mathbb{A}^k\), we have \(0 < \norm{D_j\beta_j^k}_2 < \tau_2\). By choosing \(\tau_2 \leq \tilde{\tau}_2 = \left( \lambda_2 \sigma_{\text{min}}(D_j^\top) T_{\text{min}}^{q/2} / l_f \right)^{\frac{1}{1-q}}\), we have

\begin{equation}
	F(\beta^k) - M(\beta^{k+1} \mid \beta^k) \geq \frac{\lambda_1 q}{2} \sum_{j\in\mathbb{G}^k} T_j^{q/2} \norm{\beta_j^k}_2^{q-2} \norm{\beta_j^{k+1} - \beta_j^k}_2^2.
	\tag{B.27}
	\label{eq:B.27}
\end{equation}
Since \(F(\beta^{k+1}) \leq M(\beta^{k+1} \mid \beta^k)\), we have
\begin{equation}
	F(\beta^k) - F(\beta^{k+1}) \geq F(\beta^k) - M(\beta^{k+1} \mid \beta^k) \geq \frac{\lambda_1 q}{2} \sum_{j\in\mathbb{G}^k} T_j^{q/2} \norm{\beta_j^k}_2^{q-2} \norm{\beta_j^{k+1} - \beta_j^k}_2^2 \geq 0.
	\tag{B.28}
	\label{eq:B.28}
\end{equation}

When \(\mathbb{G}^{k+1} \subsetneq \mathbb{G}^k\), we denote the truncated residual as \(e^{k+1} = \tilde{\beta}^{k+1} - \beta^{k+1}\), where
\[
e_j^{k+1} =
\begin{cases}
	\tilde{\beta}_j^{k+1}, & j \in \mathbb{G}^k \setminus \mathbb{G}^{k+1}, \\
	0, & j \notin \mathbb{G}^k \setminus \mathbb{G}^{k+1}.
\end{cases}
\]
Since
\begin{equation}
	F(\beta^k) - F(\beta^{k+1}) = \left[ F(\beta^k) - F(\tilde{\beta}^{k+1}) \right] - \left[ F(\beta^{k+1}) - F(\tilde{\beta}^{k+1}) \right].
	\tag{B.29}
	\label{eq:B.29}
\end{equation}
Similar to \eqref{eq:B.28}, one has
\begin{equation}
	F(\beta^k) - F(\tilde{\beta}^{k+1}) \geq \frac{\lambda_1 q}{2} \sum_{j\in\mathbb{G}^k} T_j^{q/2} \norm{\beta_j^k}_2^{q-2} \norm{\tilde{\beta}_j^{k+1} - \beta_j^k}_2^2 \geq 0.
	\tag{B.30}
	\label{eq:B.30}
\end{equation}
Meanwhile, it is easy to obtain that
\[
\begin{aligned}
	F(\beta^{k+1}) - F(\tilde{\beta}^{k+1}) \leq &\norm{y - X\beta^{k+1}}_2 - \norm{y - X\tilde{\beta}^{k+1}}_2 \\
	&-\lambda_1 \sum_{j\in\mathbb{G}^k \setminus \mathbb{G}^{k+1}} T_j^{q/2} \norm{\tilde{\beta}_j^{k+1}}_2^q 
     -\lambda_2 \sum_{j\in\mathbb{G}^k \setminus \mathbb{G}^{k+1}} T_j^{q/2} \norm{D_j \tilde{\beta}_j^{k+1}}_2^q.
\end{aligned}
\]
By the triangle inequality, we have
\[
\norm{y - X\beta^{k+1}}_2 - \norm{y - X\tilde{\beta}^{k+1}}_2 \leq \norm{X e^{k+1}}_2 \leq \norm{X}_2 \norm{e^{k+1}}_2 = \norm{X}_2 \sqrt{\sum_{j\in\mathbb{G}^k \setminus \mathbb{G}^{k+1}} \norm{e_j^{k+1}}_2^2}.
\]
Since \(\norm{e_j^{k+1}}_2 = \norm{\tilde{\beta}_j^{k+1}}_2 < \tau_1\) for all \(j \in \mathbb{G}^k \setminus \mathbb{G}^{k+1}\), we get
\begin{equation}
	F(\beta^{k+1}) - F(\tilde{\beta}^{k+1}) \leq \norm{X}_2 \sqrt{\abs{\mathbb{G}^k \setminus \mathbb{G}^{k+1}} \tau_1^2} \leq \abs{\mathbb{G}^k \setminus \mathbb{G}^{k+1}} \norm{X}_2 \tau_1,
	\tag{B.31}
	\label{eq:B.31}
\end{equation}
where \(\abs{\mathbb{G}^k \setminus \mathbb{G}^{k+1}}\) is the cardinality of the set \(\mathbb{G}^k \setminus \mathbb{G}^{k+1}\). Then for \(k \geq 1\), it follows that \(F(\beta^{k+1}) - F(\beta^k) \leq 0\) when \(\mathbb{G}^{k+1} = \mathbb{G}^k\) and  \(F(\beta^{k+1}) - F(\beta^k) \leq \abs{\mathbb{G}^k \setminus \mathbb{G}^{k+1}} \norm{X}_2 \tau_1\) when \(\mathbb{G}^{k+1} \subsetneq \mathbb{G}^k\). This result clearly extends to the initial iteration \(k=0\). Thus we have
\[
F(\beta^{k+1}) - F(\beta^0) \leq \sum_{i=0}^k \abs{\mathbb{G}^i \setminus \mathbb{G}^{i+1}} \norm{X}_2 \tau_1 \leq J \norm{X}_2 \tau_1,
\]
which means \(F(\beta^{k+1})\) is bounded, and for each group \(j\),
\[
\norm{\beta_j^{k+1}}_2^q \leq \sum_{j=1}^J T_j^{q/2} \norm{\beta_j^{k+1}}_2^q \leq \frac{1}{\lambda_1} F(\beta^{k+1}).
\]
Then there exists \(C > \tau_1\) depending on \(\beta^0\) such that \(\norm{\beta_j^k}_2 \leq C\) for all \(1 \leq j \leq J\). Thus, the proof of statement (i) is completed.

(ii) By \Cref{lem:group_support_converge}, for all \(k \geq K\), we have \(\mathbb{G}^k = \mathbb{G}^{k+1} = \mathbb{G}^*\). Let \(T_{\text{min}} = \min\{T_j: 1 \leq j \leq J\}\). It follows from \eqref{eq:B.28} that for \(k \geq K\), \(F(\beta^k)\) is non-increasing with
\[
F(\beta^k) - F(\beta^{k+1}) \geq \frac{\lambda_1 q}{2} C^{q-2} \sum_{j\in\mathbb{G}^*} T_j^{q/2} \norm{\beta_j^{k+1} - \beta_j^k}_2^2 \geq \frac{\lambda_1 q}{2} T_{\text{min}}^{q/2} C^{q-2} \norm{\beta^{k+1} - \beta^k}_2^2.
\]
Let \(a = \frac{\lambda_1 q}{2} T_{\text{min}}^{q/2} C^{q-2}\). Then inequality \eqref{eq:opt23} holds for all \(k \geq K\).

We next prove that \eqref{eq:opt23} holds for all \(k \geq 0\) under the assumption 
\[
\tau_1 \leq \tilde{\tau}_1 = \min\bigg\{ \bigg( \frac{T_{\text{min}}^{q/2} \lambda_1}{2\|X\|_2} \bigg)^{\frac{1}{1-q}}, \bigg( \frac{\lambda_1}{2\tilde{a}} \bigg)^{\frac{1}{2-q}} \bigg\},
\]
where \(\tilde{a} > 0\) is an arbitrary positive constant. When \(\mathbb{G}^{k+1} \subsetneq \mathbb{G}^k\), for all \(j \in \mathbb{G}^k \setminus \mathbb{G}^{k+1}\), we have
\[
\|X\|_2 \norm{e_j^{k+1}}_2^{1-q} + \tilde{a} T_j^{q/2} \norm{e_j^{k+1}}_2^{2-q} - \lambda_1 T_j^{q/2} \leq 0.
\]
Rearranging gives
\[
\|X\|_2 \norm{e_j^{k+1}}_2 - \lambda_1 T_j^{q/2} \norm{e_j^{k+1}}_2^q \leq -\tilde{a} T_j^{q/2} \norm{e_j^{k+1}}_2^2.
\]
We thus obtain
\begin{equation}
	F(\beta^{k+1}) - F(\tilde{\beta}^{k+1}) \leq -\tilde{a} \sum_{j\in\mathbb{G}^k \setminus \mathbb{G}^{k+1}} T_j^{q/2} \norm{e_j^{k+1}}_2^2.
	\tag{B.32}
	\label{eq:B.32}
\end{equation}
Note that \(\beta_j^{k+1} = \tilde{\beta}_j^{k+1}\) for all \(j \in \mathbb{G}^{k+1}\). For \(j \in \mathbb{G}^k \setminus \mathbb{G}^{k+1}\), we have \(\beta_j^{k+1} = 0\) and \(
2\norm{e_j^{k+1} - \beta_j^k}_2^2 + 2\norm{e_j^{k+1}}_2^2 \geq \norm{\beta_j^k}_2^2
\). Combining \eqref{eq:B.29}, \eqref{eq:B.30}, and \eqref{eq:B.32}, we obtain
\[
\begin{aligned}
	F(\beta^k) - F(\beta^{k+1}) &\geq \frac{\lambda_1 q}{2} \sum_{j\in\mathbb{G}^k} T_j^{q/2} \norm{\beta_j^k}_2^{q-2} \norm{\tilde{\beta}_j^{k+1} - \beta_j^k}_2^2 + \tilde{a} \sum_{j\in\mathbb{G}^k \setminus \mathbb{G}^{k+1}} T_j^{q/2} \norm{e_j^{k+1}}_2^2 \\
	&\geq \frac{\lambda_1 q}{2} \sum_{j\in\mathbb{G}^{k+1}} T_j^{q/2} \norm{\beta_j^k}_2^{q-2} \norm{\beta_j^{k+1} - \beta_j^k}_2^2 \\
	&\quad + \sum_{j\in\mathbb{G}^k \setminus \mathbb{G}^{k+1}} T_j^{q/2} \left[ \frac{\lambda_1 q}{2} \norm{\beta_j^k}_2^{q-2} \norm{e_j^{k+1} - \beta_j^k}_2^2 + \tilde{a} \norm{e_j^{k+1}}_2^2 \right] \\
	&\geq \sum_{j\in\mathbb{G}^{k+1}} \frac{\lambda_1 q}{2} T_{\text{min}}^{q/2} C^{q-2} \norm{\beta_j^{k+1} - \beta_j^k}_2^2 \\
	&\quad + \sum_{j\in\mathbb{G}^k \setminus \mathbb{G}^{k+1}} \min\left\{ \frac{\lambda_1 q}{4} T_{\text{min}}^{q/2} C^{q-2}, \frac{\tilde{a}}{2} T_{\text{min}}^{q/2} \right\} \norm{\beta_j^k - 0}_2^2 \\
	&\geq \sum_{j\in\mathbb{G}^k} \min\left\{ \frac{\lambda_1 q}{4} T_{\text{min}}^{q/2} C^{q-2}, \frac{\tilde{a}}{2} T_{\text{min}}^{q/2} \right\} \norm{\beta_j^k - \beta_j^{k+1}}_2^2.
\end{aligned}
\]

Let \(a = \min\left\{ \frac{\lambda_1 q}{4} T_{\text{min}}^{q/2} C^{q-2}, \frac{\tilde{a}}{2} T_{\text{min}}^{q/2} \right\}\). If \(\tau_1 \leq \tilde{\tau}\), \(F(\beta^k)\) is non-increasing for all \(k \geq 0\), with
\[
F(\beta^k) - F(\beta^{k+1}) \geq a \norm{\beta^{k+1} - \beta^k}_2^2.
\]

This completes the proof of statement (ii).

\noindent\textbf{\emph{Proof of \Cref{lem:subdifferential_boundedness}}}\quad For each \(j = 1, \dots, J\), we construct a groupwise subgradient \(u^{k+1} \in \partial F(\beta^{k+1})\) following \Cref{pro2}, with
\[
u_j^{k+1} = 
\begin{cases}
	-\displaystyle\frac{X_j^\top (y - X\beta^{k+1})}{\norm{y - X\beta^{k+1}}_2} + \lambda_1 q T_j^{q/2} \norm{\beta_j^{k+1}}_2^{q-2} \beta_j^{k+1} + \lambda_2 q T_j^{q/2} \norm{D_j\beta_j^{k+1}}_2^{q-2} D_j^\top D_j \beta_j^{k+1}, & j \in \mathbb{A}^{k+1}, \\
	0, & j \notin \mathbb{A}^{k+1}.
\end{cases}
\]

Let \(r_k = \norm{y - X\beta^k}_2\) and \(r_{k+1} = \norm{y - X\beta^{k+1}}_2\). For any \(j \in \mathbb{A}^{k+1}\), the first-order optimality condition of subproblem \eqref{eq:opt21} gives
\[
\frac{1}{r_k} X_j^\top (X\tilde{\beta}^{k+1} - y) + \lambda_1 T_j^{q/2} q \norm{\beta_j^k}_2^{q-2} \tilde{\beta}_j^{k+1} + \lambda_2 T_j^{q/2} q \norm{D_j\beta_j^k}_2^{q-2} D_j^\top D_j \tilde{\beta}_j^{k+1} = 0.
\]
For \(j \in \mathbb{A}^{k+1}\), we have \(\beta_j^{k+1} = \tilde{\beta}_j^{k+1}\). Recall that \(e^{k+1} = \tilde{\beta}^{k+1} - \beta^{k+1}\), then the above equality can be rewritten as
\[
\frac{1}{r_k} X_j^\top (X\beta^{k+1} + X e^{k+1} - y) + \lambda_1 T_j^{q/2} q \norm{\beta_j^k}_2^{q-2} \beta_j^{k+1} + \lambda_2 T_j^{q/2} q \norm{D_j\beta_j^k}_2^{q-2} D_j^\top D_j \beta_j^{k+1} = 0.
\]
Substituting it into \(u_j^{k+1}\) yields
\[
\begin{aligned}
	u_j^{k+1} &= -\frac{X_j^\top (y - X\beta^{k+1})}{r_{k+1}} + \lambda_1 T_j^{q/2} q \norm{\beta_j^{k+1}}_2^{q-2} \beta_j^{k+1} + \lambda_2 T_j^{q/2} q \norm{D_j\beta_j^{k+1}}_2^{q-2} D_j^\top D_j \beta_j^{k+1} \\
	&= \left( \frac{1}{r_k} - \frac{1}{r_{k+1}} \right) X_j^\top (y - X\beta^{k+1}) - \frac{1}{r_k} X_j^\top X e^{k+1} + \lambda_1 T_j^{q/2} q \left( \norm{\beta_j^{k+1}}_2^{q-2} - \norm{\beta_j^k}_2^{q-2} \right) \beta_j^{k+1} \\
	&\quad + \lambda_2 T_j^{q/2} q \left( \norm{D_j\beta_j^{k+1}}_2^{q-2} - \norm{D_j\beta_j^k}_2^{q-2} \right) D_j^\top D_j \beta_j^{k+1}.
\end{aligned}
\]
Then, we denote \(d^{k+1}, \tilde{u}^{k+1}, \tilde{v}^{k+1} \in \mathbb{R}^p\) as
\[
d_j^{k+1} = 
\begin{cases}
	\displaystyle\left( \frac{1}{r_k} - \frac{1}{r_{k+1}} \right) X_j^\top (y - X\beta^{k+1}) - \frac{1}{r_k} X_j^\top X e^{k+1}, & j \in \mathbb{A}^{k+1}, \\
	0, & j \notin \mathbb{A}^{k+1},
\end{cases}
\]
\[
\tilde{u}_j^{k+1} = 
\begin{cases}
	\lambda_1 T_j^{q/2} q \left( \norm{\beta_j^{k+1}}_2^{q-2} - \norm{\beta_j^k}_2^{q-2} \right) \beta_j^{k+1}, & j \in \mathbb{A}^{k+1}, \\
	0, & j \notin \mathbb{A}^{k+1},
\end{cases}
\]
\[
\tilde{v}_j^{k+1} = 
\begin{cases}
	\lambda_2 T_j^{q/2} q \left( \norm{D_j\beta_j^{k+1}}_2^{q-2} - \norm{D_j\beta_j^k}_2^{q-2} \right) D_j^\top D_j \beta_j^{k+1}, & j \in \mathbb{A}^{k+1}, \\
	0, & j \notin \mathbb{A}^{k+1}.
\end{cases}
\]
It follows that
\begin{equation}
	u^{k+1} = d^{k+1} + \tilde{u}^{k+1} + \tilde{v}^{k+1}.
	\tag{B.33}
	\label{eq:B.33}
\end{equation}

We now bound each component separately. First, by the triangle inequality, we have
\[
|r_{k+1} - r_k| = \left| \norm{y - X\beta^{k+1}}_2 - \norm{y - X\beta^k}_2 \right| \leq \norm{X(\beta^{k+1} - \beta^k)}_2 \leq \norm{X}_2 \norm{\beta^{k+1} - \beta^k}_2.
\]
Since \(\|y - X\beta^k\|_2\) is bounded below, there exists \(\varsigma > 0\) such that \(\|y - X\beta^k\|_2 > \varsigma\) for \(k \geq 0\). Then, we have
\[
\left| \frac{1}{r_k} - \frac{1}{r_{k+1}} \right| = \frac{|r_{k+1} - r_k|}{r_k r_{k+1}} \leq \frac{\norm{X}_2 \norm{\beta^{k+1} - \beta^k}_2}{\varsigma^2}.
\]
Recall that \(e_j^{k+1} = 0\) for all \(j \notin \mathbb{G}^k \setminus \mathbb{G}^{k+1}\). For \(j \in \mathbb{G}^k \setminus \mathbb{G}^{k+1}\), we have \(\beta_j^{k+1} = 0\), \(\norm{e_j^{k+1}}_2 < \tau_1\), and \(\norm{\beta_j^k}_2 \geq \tau_1\). Let \(\mathcal{S}^k = \bigcup_{j\in\mathbb{G}^k} \mathcal{G}_j\), we have
\[
\norm{e^{k+1}}_2 \leq \norm{e^{k+1}_{\mathcal{S}^k \setminus \mathcal{S}^{k+1}}}_2 \leq \norm{\beta^k_{\mathcal{S}^k \setminus \mathcal{S}^{k+1}}}_2 = \norm{\beta^{k+1}_{\mathcal{S}^k \setminus \mathcal{S}^{k+1}} - \beta^k_{\mathcal{S}^k \setminus \mathcal{S}^{k+1}}}_2.
\]
It follows from \Cref{lem:boundedness_sufficient_descent} that \(\|y - X\beta^k\|_2\) is uniformly bounded above. Let \(\tilde{\varsigma} = \sup_k \|y - X\beta^k\|_2 < \infty\), then by the triangle inequality we have
\[
\norm{d^{k+1}}_2 \leq \frac{\norm{X}_2^2 \tilde{\varsigma}}{\varsigma^2} \norm{\beta^{k+1} - \beta^k}_2 + \frac{\norm{X}_2^2}{\varsigma} \norm{\beta^{k+1}_{\mathcal{S}^k \setminus \mathcal{S}^{k+1}} - \beta^k_{\mathcal{S}^k \setminus \mathcal{S}^{k+1}}}_2.
\]

For \(j \in \mathbb{A}^{k+1}\), it follows from \Cref{lem:boundedness_sufficient_descent} that \(\tau_1 \leq \norm{\beta_j^k}_2, \norm{\beta_j^{k+1}}_2 \leq C\). Since \(\phi(t) = t^{q-2}\) is Lipschitz-continuous on \([\tau_1, C]\) with Lipschitz constant \(l_{\tau_1}\), we obtain
\[
\left| \norm{\beta_j^{k+1}}_2^{q-2} - \norm{\beta_j^k}_2^{q-2} \right| \leq l_{\tau_1} \left| \norm{\beta_j^{k+1}}_2 - \norm{\beta_j^k}_2 \right| \leq l_{\tau_1} \norm{\beta_j^{k+1} - \beta_j^k}_2.
\]
Let \(T_{\max} = \max_{1\leq j\leq J} T_j\). Since \(\norm{\beta_j^{k+1}}_2 \leq C\), we have
\[
\norm{\tilde{u}_j^{k+1}}_2 \leq \lambda_1 q T_{\max}^{q/2} l_{\tau_1} C \norm{\beta_j^{k+1} - \beta_j^k}_2.
\]
Thus,
\begin{equation}
	\begin{aligned}
		\norm{\tilde{u}^{k+1}}_2 &\leq \lambda_1 q T_{\max}^{q/2} l_{\tau_1} C \sqrt{\sum_{j\in\mathbb{A}^{k+1}} \norm{\beta_j^{k+1} - \beta_j^k}_2^2} 
		\leq \lambda_1 q T_{\max}^{q/2} l_{\tau_1} C \sqrt{\sum_{j\in\mathbb{G}^{k+1}} \norm{\beta_j^{k+1} - \beta_j^k}_2^2} \\
		&\leq \lambda_1 q T_{\max}^{q/2} l_{\tau_1} C \norm{\beta_{\mathcal{S}^{k+1}}^{k+1} - \beta_{\mathcal{S}^{k+1}}^k}_2.
	\end{aligned}
	\tag{B.35}
	\label{eq:B.35}
\end{equation}

Similarly, since \(D_j\) is the first-order difference matrix, we have \(\|D_j\|_2 \leq 2\) and \(\psi(t) = t^{q-2}\) is Lipschitz continuous on \([\tau_2, 2C]\) with constant \(l_{\tau_2,D}\). For any \(j \in \mathbb{A}^{k+1}\), we obtain
\[
\left| \norm{D_j\beta_j^{k+1}}_2^{q-2} - \norm{D_j\beta_j^k}_2^{q-2} \right| \leq l_{\tau_2,D} \left| \norm{D_j\beta_j^{k+1}}_2 - \norm{D_j\beta_j^k}_2 \right| \leq 2 l_{\tau_2,D} \norm{\beta_j^{k+1} - \beta_j^k}_2.
\]
Note that \(\norm{D_j^\top D_j \beta_j^{k+1}}_2 \leq \norm{D_j}_2^2 \norm{\beta_j^{k+1}}_2 \leq 4C\). We thus obtain
\begin{equation}
	\norm{\tilde{v}^{k+1}}_2 \leq 8 \lambda_2 q T_{\max}^{q/2} l_{\tau_2,D} C \norm{\beta_{\mathcal{S}^{k+1}}^{k+1} - \beta_{\mathcal{S}^{k+1}}^k}_2.
	\tag{B.36}
	\label{eq:B.36}
\end{equation}

Combining \eqref{eq:B.35} and \eqref{eq:B.36}, we get
\[
\norm{\tilde{u}^{k+1}}_2 + \norm{\tilde{v}^{k+1}}_2 \leq \left( \lambda_1 q T_{\max}^{q/2} l_{\tau_1} C + 8 \lambda_2 q T_{\max}^{q/2} l_{\tau_2,D} C \right) \norm{\beta_{\mathcal{S}^{k+1}}^{k+1} - \beta_{\mathcal{S}^{k+1}}^k}_2 \leq \tilde{b} \norm{\beta^{k+1} - \beta^k}_2,
\]
where \(\tilde{b} = \lambda_1 q T_{\max}^{q/2} l_{\tau_1} C + 8 \lambda_2 q T_{\max}^{q/2} l_{\tau_2,D} C\).

Therefore,
\begin{equation}
	\begin{aligned}
		\norm{u^{k+1}}_2 &\leq \norm{d^{k+1}}_2 + \norm{\tilde{u}^{k+1}}_2 + \norm{\tilde{v}^{k+1}}_2 \\
		&\leq \frac{\norm{X}_2^2 \tilde{\varsigma}}{\varsigma^2} \norm{\beta^{k+1} - \beta^k}_2 + \sqrt{2} \max\left\{ \tilde{b}, \frac{\norm{X}_2^2}{\varsigma} \right\} \norm{\beta^{k+1} - \beta^k}_2 
		\leq b \norm{\beta^{k+1} - \beta^k}_2,
	\end{aligned}
	\tag{B.37}
	\label{eq:B.37}
\end{equation}
where
\[
b = \frac{\norm{X}_2^2 \tilde{\varsigma}}{\varsigma^2} + \sqrt{2} \max\left\{ \tilde{b}, \frac{\norm{X}_2^2}{\varsigma} \right\}.
\]
This completes the proof.

\noindent\textbf{\emph{Proof of \Cref{thm:global_convergence}}}\quad From \Cref{lem:boundedness_sufficient_descent} and~\ref{lem:subdifferential_boundedness}, we observe that the function value sequence \(\{F(\beta^k)\}\) has sufficient descent and the subdifferential sequence \(\{\partial F(\beta^k)\}\) is relatively bounded. \Cref{lem:boundedness_sufficient_descent}~(i) further implies that the sequence \(\{\beta^k\}\) is bounded, hence there exists a subsequence \(\{\beta^{k_i}\}\) converging to a point \(\beta^*\). Moreover, the objective function \(F(\beta)\) is semi-algebraic, and thus satisfies the Kurdyka--Łojasiewicz (KL) property on its domain. With all conditions of Theorem~2.9 in~\citep{Attouch2013} satisfied, we conclude that the sequence \(\{\beta^k\}\) converges to a stationary point \(\beta^*\) of \(F(\beta)\), and the finite-length property holds. Thus, statements (i) and (ii) are proven.


\end{spacing}
\end{document}